\newcommand{\acli}[1]{\textit{\acl{#1}}}		
\newcommand{\acdef}[1]{\textit{\acl{#1}} \textup{(\acs{#1})}\acused{#1}}		
\colorlet{MyRed}{Crimson!75!Black}
\colorlet{MyBlue}{MediumBlue!90!Black}
\colorlet{MyGreen}{DarkGreen!80!Black}
\newcommand{\afterhead}{.}
\newcommand{\ackperiod}{}		
\newcommand{\para}[1]{\medskip\paragraph{\textbf{#1\afterhead}}}
\tikzstyle{new style 0}=[fill={rgb,255: red,255; green,16; blue,20}, draw=black, shape=circle]
\tikzstyle{new style 1}=[fill={rgb,255: red,66; green,255; blue,33}, draw=black, shape=circle]
\tikzstyle{new style 2}=[fill={rgb,255: red,34; green,37; blue,255}, draw=black, shape=circle]
\tikzstyle{new style 3}=[fill={rgb,255: red,10; green,104; blue,255}, draw=black, shape=circle]
\tikzstyle{new style 4}=[fill=white, draw={rgb,255: red,59; green,141; blue,21}, shape=circle]
\tikzstyle{new edge style 0}=[<->, fill=white]
\tikzstyle{new edge style 1}=[<-]
\tikzstyle{new edge style 2}=[-, draw={rgb,255: red,172; green,172; blue,255}, fill=none]
\tikzstyle{new edge style 3}=[-, draw={rgb,255: red,17; green,68; blue,255}]
\tikzstyle{purple}=[-, draw={rgb,255: red,138; green,20; blue,255}, fill={rgb,255: red,90; green,255; blue,137}]
\tikzstyle{new edge style 4}=[-, draw={rgb,255: red,250; green,0; blue,0}]
\tikzstyle{new edge style 5}=[dashed, fill=none, draw=black, -]
\crefname{algorithm}{Alg.}{Algs.}
\theoremstyle{plain}
\newtheorem{lemma}{Lemma}		
\newtheorem*{corollary*}{Corollary}		
\theoremstyle{definition}
\newtheorem{definition}{Definition}		
\newtheorem{assumption}{Assumption}		
\newtheorem{example}{Example}		
\newtheorem*{definition*}{Definition}		
\newtheorem*{assumption*}{Assumptions}		
\newtheorem*{example*}{Example}		
\theoremstyle{remark}
\newtheorem{remark}{Remark}		
\newtheorem*{remark*}{Remark}		
\newcommand{\debug}[1]{#1}		
\newcommand{\newmacro}[2]{\newcommand{#1}{\debug{#2}}}		
\newcommand{\newop}[2]{\DeclareMathOperator{#1}{\debug{#2}}}		
\DeclarePairedDelimiter{\braces}{\{}{\}}		
\DeclarePairedDelimiter{\bracks}{[}{]}		
\DeclarePairedDelimiter{\parens}{(}{)}		
\DeclarePairedDelimiter{\abs}{\lvert}{\rvert}		
\DeclarePairedDelimiter{\ceil}{\lceil}{\rceil}		
\DeclarePairedDelimiter{\floor}{\lfloor}{\rfloor}		
\DeclarePairedDelimiterX{\inner}[2]{\langle}{\rangle}{#1,#2}		
\DeclarePairedDelimiter{\norm}{\lVert}{\rVert}		
\DeclarePairedDelimiterXPP{\dnorm}[1]{}{\lVert}{\rVert}{_{\ast}}{#1}		
\DeclarePairedDelimiterXPP{\tvnorm}[1]{}{\lVert}{\rVert}{_{\mathrm{TV}}}{#1}		
\DeclarePairedDelimiterXPP{\onenorm}[1]{}{\lVert}{\rVert}{_{1}}{#1}		
\DeclarePairedDelimiterXPP{\twonorm}[1]{}{\lVert}{\rVert}{_{2}}{#1}		
\DeclarePairedDelimiterXPP{\supnorm}[1]{}{\lVert}{\rVert}{_{\infty}}{#1}		
\DeclarePairedDelimiterX{\braket}[2]{\langle}{\rangle}{#1,#2}		
\DeclarePairedDelimiterX{\setdef}[2]{\{}{\}}{#1:#2}		
\DeclarePairedDelimiterX{\window}[2]{\{}{\}}{#1,\dotsc,#2}		
\DeclarePairedDelimiterXPP{\exclude}[1]{\mathopen{}\setminus}{\{}{\}}{}{#1}
\newcommand{\alt}[1]{#1'}		
\newcommand{\R}{\mathbb{R}}		
\DeclareMathOperator*{\argmax}{arg\,max}		
\DeclareMathOperator*{\argmin}{arg\,min}		
\DeclareMathOperator*{\union}{\bigcup}		
\DeclareMathOperator{\bigoh}{\mathcal{O}}		
\DeclareMathOperator{\diam}{diam}		
\newop{\dom}{dom}		
\DeclareMathOperator{\im}{im}		
\DeclareMathOperator{\one}{\mathds{1}}		
\DeclareMathOperator{\relint}{ri}		
\DeclareMathOperator{\unif}{unif}		
\newmacro{\coef}{s}		
\newmacro{\weight}{\alpha}		
\newmacro{\dd}{\:d}		
\newcommand{\eps}{\varepsilon}		
\newcommand{\insum}{\sum\nolimits}		
\newcommand{\cf}{cf.\xspace}		
\newcommand{\eg}{e.g.,\xspace}		
\newcommand{\ie}{i.e.,\xspace}		
\newcommand{\textpar}[1]{\textup(#1\textup)}		
\newcommand{\txs}{\textstyle}		
\newcommand{\from}{\colon}		
\newcommand{\defeq}{\coloneqq}		
\newmacro{\set}{\mathcal{S}}		
\newmacro{\setalt}{\alt\set}		
\newmacro{\iSet}{i}		
\newmacro{\jSet}{j}		
\newmacro{\nSets}{m}		
\newmacro{\sets}{\mathcal{P}}		
\newmacro{\points}{\mathcal{K}}		
\newmacro{\intpoints}{\points^{\circ}}		
\newmacro{\point}{x}		
\newmacro{\pointalt}{\alt\point}		
\newmacro{\primal}{p}		
\newmacro{\proxal}{q}		
\newmacro{\primalt}{q}		
\newmacro{\dpoints}{\mathcal{Y}}		
\newmacro{\dpoint}{y}		
\newmacro{\ppoint}{\chi}		
\newmacro{\dpointalt}{\alt\dpoint}		
\newmacro{\base}{q}		
\newmacro{\basealt}{q}		
\newmacro{\pointest}{\point}		
\newmacro{\stratest}{\strat}		
\newmacro{\ptest}{\pdist}		
\newmacro{\open}{\mathcal{U}}		
\newmacro{\closed}{\mathcal{C}}		
\newmacro{\cpt}{\mathcal{K}}		
\newmacro{\nhd}{\mathcal{U}}		
\newmacro{\start}{1}		
\newmacro{\runstart}{\tau}		
\newmacro{\running}{1,2,\dotsc}		
\newmacro{\run}{t}		
\newmacro{\runalt}{s}		
\newmacro{\nRuns}{T}		
\newmacro{\runtime}{\theta}		
\newmacro{\runs}{\mathcal{\nRuns}}		
\newmacro{\subruns}{\alt\runs}		
\newcommand{\new}[1]{#1^{+}}		
\newmacro{\choice}{\point}		
\newmacro{\policy}{\simple}		
\newmacro{\state}{X}		
\newmacro{\score}{S}		
\newmacro{\aux}{\tilde\state}		
\newmacro{\daux}{\tilde\score}		
\newmacro{\step}{\gamma}		
\newmacro{\temp}{\eta}		
\newmacro{\nSplits}{\sigma}		
\newmacro{\schedule}{u}
\newmacro{\linf}{\mathcal{L}^{\infty}(\points)}		
\newmacro{\squareints}{\mathcal{L}^{2}(\points)}		
\newmacro{\banach}{\mathcal{V}}		
\newmacro{\hilbert}{\mathcal{H}}		
\newmacro{\fun}{\varphi}		
\newmacro{\vecspace}{\mathcal{V}}		
\newmacro{\vdim}{d}		
\newmacro{\vvec}{x}		
\newmacro{\bvec}{e}		
\newmacro{\unitvec}{u}		
\newmacro{\tvec}{z}		
\newmacro{\tanhull}{\mathcal{Z}}		
\newmacro{\tanvec}{z}		
\newmacro{\dspace}{\R^{\sets}}		
\newmacro{\pspace}{\R^{\sets}}		
\newmacro{\dual}{\psi}		
\newmacro{\dvec}{v}		
\newmacro{\dbvec}{\eps}		
\newmacro{\ones}{\mathbf{1}}		
\newmacro{\mat}{M}		
\newmacro{\eye}{I}		
\newop{\tspace}{T}		
\newop{\tcone}{TC}		
\newop{\dcone}{\tcone^{\ast}}		
\newop{\ncone}{NC}		
\newop{\pcone}{PC}		
\newmacro{\cvx}{\mathcal{C}}		
\newmacro{\subd}{\partial}		
\newmacro{\hmat}{H}		
\newop{\Opt}{Opt}		
\newop{\Sol}{Sol}		
\newmacro{\obj}{f}		
\newmacro{\objalt}{g}		
\newmacro{\sobj}{F}		
\newmacro{\param}{\theta}		
\newmacro{\params}{\Theta}		
\newmacro{\gvec}{g}		
\newmacro{\vecfield}{v}		
\newmacro{\gbound}{G}		
\newmacro{\rbound}{R}		
\newmacro{\lbound}{M}		
\newcommand{\sol}[1][\point]{#1^{\ast}}		
\newmacro{\strong}{\ell}		
\newmacro{\lips}{L}		
\newmacro{\hold}{L}		
\newmacro{\hexp}{\alpha}		
\newmacro{\minmax}{\Phi}		
\newmacro{\minvar}{x}		
\newmacro{\minvaralt}{\alt x}		
\newmacro{\minvars}{\mathcal{X}}		
\newmacro{\maxvar}{y}		
\newmacro{\maxvaralt}{\alt y}		
\newmacro{\maxvars}{\mathcal{Y}}		
\newop{\NE}{NE}		
\newop{\CE}{CE}		
\newop{\CCE}{CCE}		
\newop{\brep}{br}		
\newop{\reg}{Reg}		
\newop{\regalt}{\widetilde{Reg}}		
\newop{\preg}{\overline{Reg}}		
\newop{\dynreg}{DynReg}		
\newop{\val}{val}		
\newmacro{\strat}{p}		
\newmacro{\stratalt}{q}		
\newmacro{\strats}{\simplex(\pures)}		
\newmacro{\intstrats}{\strats^{\circ}}		
\newmacro{\abscont}{\textup{ac}}
\newmacro{\sing}{\textup{sing}}
\newmacro{\pdist}{p}		
\newmacro{\dirac}{\delta}		
\newmacro{\simple}{q}		
\newmacro{\simplealt}{\alt\simple}		
\newmacro{\simples}{\mathcal{Q}}		
\newmacro{\play}{i}		
\newmacro{\playalt}{j}		
\newmacro{\nPlayers}{N}		
\newmacro{\players}{\mathcal{\nPlayers}}		
\newmacro{\pure}{a}		
\newmacro{\purealt}{a'}		
\newmacro{\nPures}{n}		
\newmacro{\pures}{\mathcal{A}}		
\newmacro{\cost}{c}		
\newmacro{\loss}{\ell}		
\newmacro{\lossvec}{L}		
\newmacro{\pay}{u}		
\newmacro{\payv}{r}		
\newmacro{\pot}{\obj}		
\newmacro{\game}{\mathcal{G}}		
\newmacro{\gamefull}{\game(\players,\points,\pay)}		
\newmacro{\fingame}{\Gamma}		
\newmacro{\fingamefull}{\Gamma(\players,\pures,\pay)}		
\newop{\Eucl}{\Pi}		
\newop{\logit}{\Lambda}		
\newmacro{\hreg}{h}		
\newmacro{\hconj}{\hreg^{\ast}}		
\newmacro{\hdec}{\theta}		
\newmacro{\breg}{D}		
\newmacro{\pmap}{P}		
\newmacro{\mirror}{Q}		
\newmacro{\fench}{F}		
\newmacro{\hstr}{K}		
\newmacro{\depth}{H}		
\newmacro{\zone}{\mathbb{D}}		
\newmacro{\subpoints}{\points^{\circ}}		
\newmacro{\proxdom}{\mathcal{Q}}		
\newmacro{\fish}{F}
\newmacro{\energy}{E}		
\newmacro{\hvol}{\phi}		
\newmacro{\smooth}{\beta}		
\DeclareMathOperator{\ex}{\mathbb{E}}		
\DeclareMathOperator{\prob}{\mathbb{P}}		
\DeclareMathOperator{\simplex}{\Delta}		
\newmacro{\sample}{\omega}		
\newmacro{\samples}{\Omega}		
\newmacro{\seed}{\xi}		
\newmacro{\seeds}{\Xi}		
\newmacro{\filter}{\mathcal{F}}		
\newmacro{\history}{\mathcal{F}}		
\newmacro{\probspace}{(\samples,\filter,\prob)}		
\newmacro{\meas}{\mu}		
\newmacro{\leb}{\lambda}		
\newmacro{\event}{E}       
\newmacro{\eventalt}{H}       
\newmacro{\mean}{\mu}		
\newmacro{\sdev}{\sigma}		
\newmacro{\variance}{\sdev^{2}}		
\newmacro{\dkl}{D_{\mathrm{KL}}}		
\providecommand\given{}		
\DeclarePairedDelimiterXPP{\exof}[1]{\ex}{[}{]}{}{
\renewcommand\given{\nonscript\,\delimsize\vert\nonscript\,\mathopen{}} #1}
\DeclarePairedDelimiterXPP{\probof}[1]{\prob}{(}{)}{}{
\renewcommand\given{\nonscript\:\delimsize\vert\nonscript\:\mathopen{}} #1}
\DeclarePairedDelimiterXPP{\oneof}[1]{\one}{(}{)}{}{
\renewcommand\given{\nonscript\,\delimsize\vert\nonscript\,\mathopen{}} #1}
\newcommand{\est}[1]{\hat #1}		
\newmacro{\model}{\est\pay}		
\newmacro{\error}{Z}		
\newmacro{\noise}{U}		
\newmacro{\bias}{b}		
\newmacro{\mbound}{M}		
\newmacro{\vbound}{M}		
\newmacro{\bbound}{\mu}		
\newmacro{\totbound}{S}		
\newmacro{\snoise}{\psi}		
\newmacro{\sbias}{\beta}		
\newmacro{\noisedev}{\sigma}		
\newmacro{\noisevar}{\noisedev^{2}}		
\newmacro{\unitvar}{E}		
\newmacro{\pertvar}{W}		
\newmacro{\radius}{r}		
\newmacro{\flowmap}{\Phi}		
\newmacro{\graph}{\mathcal{G}}
\newmacro{\vertices}{\mathcal{V}}
\newmacro{\edges}{\mathcal{E}}
\newmacro{\const}{c}
\newmacro{\diamconst}{C_\points}
\newmacro{\coord}{i}
\newmacro{\coordalt}{j}
\newmacro{\nCoords}{m}
\newmacro{\budget}{V}
\newmacro{\batch}{\Delta}
\newmacro{\iBatch}{i}
\newmacro{\nBatches}{N}
\newmacro{\pexp}{\varrho}		
\newmacro{\qexp}{\gamma}		
\newmacro{\bexp}{\beta}		
\newmacro{\mexp}{\mu}		
\newmacro{\vexp}{\nu}		
\newmacro{\wexp}{\mu}		
\newmacro{\dexp}{\kappa}		
\newmacro{\nSetsExp}{p}		
\newmacro{\kerfun}{K}		
\newmacro{\mix}{\eps}		
\newop{\vbudget}{VB}		
\newcommand{\tvar}[1][\nRuns]{\debug{V_{#1}}}		
\newcommand{\legendStatic}{\textit{Static} regret divided by $t$ ($\reg(t)/t$) in log-log scale, averaged on 46 realizations for each algorithm (solid line), against the }
\newcommand{\legendDyn}{\textit{Dynamic} regret divided by $t$ ($\dynreg(t)/t$) in log-log scale, averaged on 46 realizations for each algorithm (solid line), against the }
\newcommand{\legendSlide}{Distribution (box-and-whiskers are at a CI of $[.05, .95]$) of the averaged static regret over $t$, averaged on 46 realizations for each algorithm at 2 different timestamps, against the }
\newmacro{\cover}{\sets}		
\newmacro{\region}{\mathcal{R}}		
\newmacro{\auxreg}{\tilde R}		
\begin{document}


\newcommand{\longtitle}{\uppercase{Zeroth-Order Non-Convex Learning\\via Hierarchical Dual Averaging}}

\title[Zeroth-Order Non-Convex Learning]{\longtitle}		

\author
[A.~Héliou]
{Amélie Héliou$^{\ast}$}
\address{$\ast$\,Criteo AI Lab, France}
\email{a.heliou@criteo.com}

\author
[M.~Martin]
{Matthieu Martin$^{\ast}$}
\email{mat.martin@criteo.com}

\author
[P.~Mertikopoulos]
{\\Panayotis Mertikopoulos$^{\diamond,\ast}$}
\address{$^{\diamond}$\,%
Univ. Grenoble Alpes, CNRS, Inria, Grenoble INP, LIG, 38000 Grenoble, France.}
\email{panayotis.mertikopoulos@imag.fr}

\author
[T.~Rahier]
{Thibaud Rahier$^{\ast}$}
\email{t.rahier@criteo.com}

\thanks{Authors appear in alphabetical order; send all correspondence to \texttt{panayotis.mertikopoulos@imag.fr}.}
\thanks{
P.~Mertikopoulos is grateful for financial support by
the French National Research Agency (ANR) in the framework of
the ``Investissements d'avenir'' program (ANR-15-IDEX-02),
the LabEx PERSYVAL (ANR-11-LABX-0025-01),
MIAI@Grenoble Alpes (ANR-19-P3IA-0003),
and the grants ORACLESS (ANR-16-CE33-0004) and ALIAS (ANR-19-CE48-0018-01).
This research was also supported by the COST Action CA16228 ``European Network for Game Theory'' (GAMENET)\ackperiod}

\subjclass[2020]{Primary 68Q32; secondary 90C26, 91A26.}
\keywords{%
Online optimization;
non-convex;
dual averaging;
bandit feedback.}

\newacro{LHS}{left-hand side}
\newacro{RHS}{right-hand side}
\newacro{iid}[i.i.d.]{independent and identically distributed}
\newacro{lsc}[l.s.c.]{lower semi-continuous}

\newacro{MAB}{multi-armed bandit}
\newacro{OCO}{online convex optimization}
\newacro{ZOO}{zeroth-order online optimization}

\newacro{EW}{exponential weights}
\newacro{EXP3}{exponential weights algorithm for exploration and exploitation}
\newacro{HDA}{hierarchical dual averaging}
\newacro{HEW}{hierarchical exponential weights}

\newacro{IWE}{importance weighted estimator}
\newacro{IWE3}[IWE$^{3}$]{importance weighted estimator with explicit exploration}

\newacro{OGD}{online gradient descent}
\newacro{MD}{mirror descent}
\newacro{OMD}{online mirror descent}
\newacro{MWU}{multiplicative weights update}
\newacro{FTPL}{``follow the perturbed leader''}
\newacro{FTRL}{``follow the regularized leader''}
\newacro{SPSA}{simultaneous perturbation stochastic approximation}
\newacro{RN}{Radon-Nikodym}
\newacro{MCP}{maximum clique problem}
\newacro{DAIM}{dual averaging with inexact models}
\newacro{BDA}{bandit dual averaging}
\newacro{DA}{dual averaging}
\newacro{DAX}{dual averaging with an explicit cover}
\newacro{FTRL}{``follow the regularized leader''}
\newacro{NE}{Nash equilibrium}
\newacroplural{NE}[NE]{Nash equilibria}
\newacro{TA}{test acronym}

\begin{abstract}
%
%
We propose a hierarchical version of dual averaging for zeroth-order online non-convex optimization \textendash\ \ie learning processes where, at each stage, the optimizer is facing an unknown non-convex loss function and only receives the incurred loss as feedback.
The proposed class of policies relies on the construction of an online model that aggregates loss information as it arrives, and it consists of two principal components:
\begin{enumerate*}
[(\itshape a\upshape)]
\item
a regularizer adapted to the \emph{Fisher information metric} (as opposed to the metric norm of the ambient space);
and
\item
a principled exploration of the problem's state space based on an adapted hierarchical schedule.
\end{enumerate*}
This construction enables sharper control of the model's bias and variance, and allows us to derive tight bounds for both the learner's static and dynamic regret \textendash\ \ie the regret incurred against the best dynamic policy in hindsight over the horizon of play.
\end{abstract}

\maketitle
\allowdisplaybreaks		
\acresetall		

\section{Introduction}
\label{sec:introduction}

Zeroth-order \textendash\ or \emph{derivative-free} \textendash\ optimization concerns the problem of optimizing a given function without access to its gradient, stochastic or otherwise.
Its study dates back at least to \citet{Ros60}, and it has recently attracted significant interest in machine learning and artificial intelligence due to the prohibitive cost of automatic differentiation in very large neural nets and language models.

A standard approach to zeroth-order optimization involves
sampling the function to be optimized at several nearby points,
using the observed values to reconstruct the gradient of the function,
and then employing a standard, first-order method \citep{CSV09}.
This approach allows the optimizer to approximate the gradient of the function to arbitrary precision (at least, if enough queries are made).
However, this also requires that the problem's objective remain stationary during the query process.

Motivated by applications to online ad auctions and recommender systems, our paper concerns the case where this stationarity assumption breaks down \textendash\ the \acdef{ZOO} setting.
Specifically, we consider an adversarial \ac{ZOO} problem that unfolds as follows:
\begin{enumerate}
[left=\parindent]
\item
At each stage $\run=\running$, the optimizer selects an action $\choice_{\run}$ from a compact convex subset $\points$ of $\R^{\vdim}$.
\item
Simultaneously, an adversary selects a reward function $\pay_{\run}\from\points\to\R$, often assumed to take values in $[0,1]$.
\item
The optimizer receives $\pay_{\run}(\choice_{\run})$ as a reward, and the process repeats.
\end{enumerate}
The learner's performance after $\nRuns$ stages is measured here by their regret, viz. $R_{\nRuns} = \sum_{\run=\start}^{\nRuns} \bracks{\pay_{\run}(\point) - \pay_{\run}(\choice_{\run})}$, and the learner's goal is to minimize the growth rate of $R_{\nRuns}$.

Since each individual $\pay_{\run}$ may be encountered once \textendash\ and only once \textendash\ it is no longer possible to perform multiple queries per function.
On that account, the \ac{SPSA} estimator of \citet{Spa92} has been studied extensively as a viable alternative to multiple-point query methods for online optimization.
In particular, using a variant of the \ac{SPSA} scheme, \citet{FKM05} showed that it is possible to achieve $\bigoh(\nRuns^{3/4})$ regret if the payoff functions encountered are concave.
The corresponding lower bound is $\Omega(\nRuns^{1/2})$, and it was only recently achieved by the kernel-based method of \citet{BE16} and \citet{BLE17}.

When venturing beyond problems with a convex structure, the situation is significantly more complicated.
The most widely studied case is the ``Lipschitz bandit'' \textendash\ or, sometimes, ``Hölder bandit'' \textendash\ framework where each $\pay_{\run}$ is a random realization of a parametric model of the form $\pay_{\run}(\point) = \est\pay(\point;\seed_{\run})$ with Lipschitz continuous mean $\pay(\point) = \ex_{\seed}[\est\pay(\point;\seed)]$, \cf \citet{Agr95}.
In this case, the lower bound for the regret is $\Omega(\nRuns^{\frac{\vdim+1}{\vdim+2}})$, and several algorithms have been proposed to achieve it, typically by combining an intelligent discretization of the problem's search region with a deterministic UCB-type policy \citep{BMSS11,KSU08,Sli19}.

On the other hand, in an adversarial setting, an informed adversary can always impose $\Omega(\nRuns)$ regret to any \emph{deterministic} decision algorithm employed by the learner, \cf \citet{SS11,HSZ17,SN20}.
This makes the algorithms designed for Lipschitz bandits ill-suited for the framework at hand, and necessitates a different approach.
In this direction, \citet{KBTB15} showed that, if each payoff function $\pay_{\run}$ is revealed to the learner after playing, it is possible to achieve $\bigoh(\nRuns^{1/2})$ regret.
Similar bounds were obtained more recently by \citet{AGH19} and \citet{SN20}, who examined the \acdef{FTPL} algorithm of \citet{KV05} assuming access to an offline optimization oracle;
however, the knowledge of $\pay_{\run}$ is still implicitly required in these works (as input to an optimization or sampling oracle, depending on the context).

More recently, \citet{HMMR20} proposed a general \acl{DA} framework for online non-convex learning with imperfect feedback, including the bona fide, adversarial \ac{ZOO} case.
Specifically, by using a ``kernel smoothing'' method in the spirit of \citet{BLE17}, \citet{HMMR20} proposed a \ac{ZOO} method achieving
\begin{enumerate*}
[\itshape a\upshape)]
\item
a suboptimal $\bigoh(\nRuns^{\frac{\vdim+2}{\vdim+3}})$ regret bound;
and
\item
a commensurate $\bigoh(\nRuns^{\frac{\vdim+3}{\vdim+4}}\tvar^{\frac{1^{\vphantom{1}}}{\vdim+4}})$ bound for the learner's \emph{dynamic} regret, with $\tvar = \insum_{\run=\start}^{\nRuns} \supnorm{\pay_{\run+1} - \pay_{\run}}$ denoting the \emph{total variation} of the payoff functions encountered (a common dynamic regret benchmark introduced by \citealp{BGZ15}).
\end{enumerate*}
However, the kernel method employed by \citet{HMMR20} is difficult to implement because the kernel's support function may grow exponentially in both $\nRuns$ and $\vdim$.

\para{Our contributions}

In this paper, we take a different approach that fuses the \acl{DA} framework of \citet{KBTB15} with a hierarchical exploration scheme in the spirit of \citet{BMSS11} and \citet{KSU08,KSU19}.
Specifically, we propose a flexible, anytime \acdef{HDA} method with the following desirable properties:
\begin{enumerate*}
[(\itshape i\hspace*{1pt})]
\item
it enjoys a min-max optimal $\bigoh(\nRuns^{\frac{\vdim+1}{\vdim+2}})$ static regret bound;
\item
it guarantees at most $\bigoh(\nRuns^{\frac{\vdim+2}{\vdim+3}}\tvar^{\frac{1^{\vphantom{1}}}{\vdim+3}})$ dynamic regret.
\end{enumerate*}
In this way, our paper closes the optimality gap in the regret analysis of \citet{HMMR20}, and it answers in the positive the authors' conjecture that it is possible to achieve $\bigoh(\nRuns^{\frac{\vdim+2}{\vdim+3}}\tvar^{\frac{1^{\vphantom{1}}}{\vdim+3}})$ dynamic regret in adversarial \ac{ZOO} problems.

As far as we are aware, \ac{HDA} is the first algorithm in the literature enjoying this dynamic regret guarantee.
Moreover, in contrast to the CAB algorithm of \citet{Kle04}, we should stress that \ac{HDA} \emph{does not} require a restart schedule or a doubling trick.
From a practical viewpoint, this is particularly important because the doubling trick leads to sharp performance drops when the algorithm periodically restarts from scratch \textendash\ an unpleasant property, which is one of the main reasons that doubling methods are rarely employed by practitioners \citep{BMSS11}.

Our analysis relies on two principal components:
\begin{enumerate*}
[\itshape a\upshape)]
\item
a logarithmic scheduler for controlling the hierarchical exploration of the problem's state space;
and
\item
a regularization framework adapted to the Fisher information metric on the learner's mixed strategies.
\end{enumerate*}
The first of these components marks a crucial point of departure from the hierarchical approach of \citet{BMSS11} and \citet{KSU19} since, instead of increasing the granularity of our search ``pointwise'', we do so ``dimension-wise'' (but at a slower pace).
As for the second component, the use of the Fisher information metric allows us to drop the reliance of \acl{DA} on a global norm that is not adapted to the geometry of the problem at hand, and it allows us to bring into play a wide range of regularizers that were previously unexplored in the literature \textendash\ such as the Burg entropy.
This is a crucial difference with existing results on \acl{DA}, and it allows for much finer control of the learning process as it unfolds \textendash\ precisely because the information content of the learner's policy is not ignored in the process.

Upon completion of our paper, we discovered a very recent preprint by \citet{PS21} that proposes an adversarial zooming algorithm.
The authors achieve a static $\bigoh(\nRuns^{\frac{\vdim+1}{\vdim+2}})$ regret bound in high probability (but do not provide any dynamic regret guarantees).
Their algorithm uses an explicit exploration term, plus a confidence term in the per-round sampling uncertainty.
Their splitting rule splits only one-by-one cover set into $2^\vdim$ sub-covers, which might be more difficult to implement in practice.

\section{Setup and preliminaries}
\label{sec:setup}

\subsection{The model}

We assume throughout that $\points$ is a compact convex subset of an ambient real space $\R^{\vdim}$ endowed with an abstract norm $\norm{\cdot}$ and a reference measure $\leb$ (typically the ordinary Lebesgue measure).
As for the payoff functions encountered by the learner, we will make the following blanket assumption:

\begin{assumption}
\label{asm:pay}
The stream of payoff functions $\pay_{\run}\from\points\to\R$, $\run=\running$, is \emph{uniformly bounded Lipschitz},
\ie there exist nonnegative constants $\rbound, \lips \geq 0$ such that
\begin{enumerate}
[parsep=\smallskipamount,topsep=0ex]
\item
\label{asm:bounded}
$0 \leq \pay_{\run}(\point) \leq \rbound$ for all $\point\in\points$.
\item
\label{asm:Lips}
$\abs{\pay_{\run}(\pointalt) - \pay_{\run}(\point)} \leq \lips \norm{\pointalt - \point}$
for all $\point,\pointalt\in\points$.
\end{enumerate}
\end{assumption}

To avoid exploitable, deterministic strategies, we will assume that the learner has access to an unobservable randomizer that can be used to choose an action $\point\in\points$ by means of a probability distribution on $\points$ \textendash\ that is, a \emph{mixed strategy}.
Of course, in complete generality, the space of all mixed strategies is impractical to work with because it contains probability distributions that cannot be described in closed form (let alone have a ``sampling-friendly'' structure).
For this reason, we will focus on \emph{simple strategies}, \ie probability distributions with a piecewise constant density.

\begin{definition}
\label{def:simple}
A mixed strategy on $\points$ is called \emph{simple} if it admits a density function
of the form
$\simple = \sum_{\iSet=1}^{\nSets} \weight_{\iSet}\one_{\set_{\iSet}}$ for a collection of
weights $\weight_{\iSet} > 0$, $\iSet=1,\dotsc,\nSets$,
and
mutually disjoint $\leb$-measurable subsets $\set_{\iSet}$ of $\points$ ($\set_{\iSet}\cap\set_{\jSet} = \varnothing$ for $\iSet\neq\jSet$)
such that
$\int_{\points}\simple = \sum_{\iSet} \weight_{\iSet} \leb_{\iSet}(\set_{\iSet}) = 1$.
The space of simple strategies on $\points$ will be denoted by $\simples(\points)$,
and
the expectation of a function $\obj\from\points\to\R$ under $\simple$ will be written as
\(
\txs
\braket{\obj}{\simple}
	\defeq \ex_{\point\sim\simple}[\obj(\point)]
	= \sum_{\iSet=1}^{\nSets} \weight_{\iSet} \int_{\set_{\iSet}} \obj(\point) \dd\leb(\point)
\)
\end{definition}

Owing to their decomposable structure, simple strategies are relatively easy to sample from,
and
they can approximate general distributions on $\points$ to arbitrary precision \textendash\ formally, they are dense in the weak topology of (regular) probability measures on $\points$ \citep[Chap.~2]{Fol99}.
On the other hand, this ``universal approximation'' guarantee comes at the cost of an increased number of supporting sets $\set_{\iSet}$, $\iSet=1,\dotsc,\nSets$.
In particular, there is no ``free lunch'':
when $\nSets$ grows large, sampling from a simple strategy can become computationally expensive \textendash\ if not intractable \textendash\ so we will pay particular attention to the support of such strategies.
\smallskip

\begin{remark}
To facilitate sampling, we will also consider strategies of the form $\simple = \sum_{\iSet=1}^{\nSets} \weight_{\iSet} \psi_{\set_{\iSet}}$ where $\psi_{\set}$ is supported on $\set$ and can be sampled cheaply \textendash\ \eg $\psi_{\set}$ could be a suitably weighted Dirac distribution on a specific point of $\set$.
Strategies of this type are not \emph{stricto sensu} ``simple'', but our results will also cover this case, \cf \cref{sec:hierarchical}. 
\end{remark}

\subsection{Regret: static and dynamic}

Going back to the learner's sequence of play, we will assume that, at each stage $\run=\running$, the learner picks an action $\choice_{\run}\in\points$ based on a simple strategy $\policy_{\run} \in \simples$, and receives the reward $\pay_{\run}(\point_{\run})$.
The \emph{regret} of the policy $\policy_{\run}$ against a \emph{benchmark action} $\point\in\points$ is then defined as the difference between the player's mean cumulative payoff under $\policy_{\run}$ and $\point$ over a horizon of $\nRuns$ rounds.
Formally, we have
\begin{align}
\label{eq:reg-test}
\reg_{\point}(\nRuns)
	&\defeq \insum_{\run=\start}^{\nRuns}
		\ex_{\choice_{\run} \sim \policy_{\run}} \bracks{ \pay_{\run}(\point) - \pay_{\run}(\choice_{\run}) }.
\intertext{%
Moreover, letting $\sol \in \argmax_{\point\in\points} \sum_{\run=\start}^{\nRuns} \pay_{\run}(\point)$ be the ``best fixed action in hindsight'' over the horizon $\nRuns$, we also define the learner's \emph{static regret} as
} 
\label{eq:reg}
\txs
\reg(\nRuns)
	&\defeq \reg_{\sol}(\nRuns)
	= \max\nolimits_{\point\in\points} \reg_{\point}(\nRuns).
\intertext{%
Finally, to relax the requirement of using a ``fixed'' action as a comparator, we will also consider the learner's \emph{dynamic regret}, defined here as
} 
\label{eq:reg-dyn}
\dynreg(\nRuns)
	&\defeq \insum_{\run=\start}^{\nRuns} \max_{\point\in\points}
		\ex_{\choice_{\run} \sim \policy_{\run}} \bracks{ \pay_{\run}(\point) - \pay_{\run}(\choice_{\run}) },
\end{align}
\ie as the difference between the player's mean cumulative payoff and that of the best sequence of actions $\sol_{\run} \in \argmin_{\point} \pay_{\run}(\point)$ over the horizon of play $\nRuns$.
Of course, in regard to its static counterpart, the agent's dynamic regret is considerably more ambitious, and achieving sublinear dynamic regret is not always possible;
we examine this issue in detail in \cref{sec:results}.

In both cases, it should also be clear that there is no simple strategy that can match the exact performance of the ``best'' action ($\sol$ or $\sol_{\run}$, depending on the context).
For example, consider the static optimization problem $\pay_{\run}(\point) = 1 - \point^{2}/2$ with $\point\in\points = [-1,1]$:
then, any simple strategy $\simple\in\simples$ would yield a payoff strictly less than $1$ at each round because it is sampling with probability $1$ points other than $0$.
Nevertheless, the following lemma shows that the propagated error on the regret can be made arbitrarily small:

\begin{lemma}
\label{lem:point2simple}
Let $\nhd$ be a neighborhood of $\point\in\points$.
Then, for every simple strategy $\simple\in\simples$ supported on $\nhd$, we have
\begin{equation}
\label{eq:point2simple}
\reg_{\point}(\nRuns)
	\leq \lips \diam(\nhd) \nRuns
		+ \insum_{\run=\start}^{\nRuns} \braket{\pay_{\run}}{\simple - \simple_{\run}}
\end{equation}
\end{lemma}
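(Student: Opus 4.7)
The plan is to peel off the gap between playing the reference action $\point$ and playing the ``proxy'' simple strategy $\simple$ using the Lipschitz regularity of $\pay_{\run}$, leaving behind exactly the difference $\braket{\pay_{\run}}{\simple - \simple_{\run}}$ that appears on the right-hand side. Throughout, I use the identification $\ex_{\choice_{\run}\sim\policy_{\run}}[\pay_{\run}(\choice_{\run})] = \braket{\pay_{\run}}{\simple_{\run}}$ that is built into the definition of a simple strategy and the player's policy.

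The first step is an add-and-subtract decomposition at each round $\run$:
\[
\pay_{\run}(\point) - \braket{\pay_{\run}}{\simple_{\run}}
= \bracks{\pay_{\run}(\point) - \braket{\pay_{\run}}{\simple}}
+ \braket{\pay_{\run}}{\simple - \simple_{\run}}.
\]
The second summand is exactly the term appearing in the bound \eqref{eq:point2simple}; the work is to control the first summand, which is the ``bias'' incurred by substituting the pointwise evaluation at $\point$ by an average against $\simple$.

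For the bias term, since $\simple$ has total mass $1$ and is supported on $\nhd$, I would write
\[
\pay_{\run}(\point) - \braket{\pay_{\run}}{\simple}
= \int_{\nhd} \bracks{\pay_{\run}(\point) - \pay_{\run}(\pointalt)} \simple(\pointalt) \dd\leb(\pointalt).
\]
Because $\nhd$ is a neighborhood of $\point$, both $\point$ and every $\pointalt$ in the integration domain lie in $\nhd$, so $\norm{\point - \pointalt} \leq \diam(\nhd)$. The Lipschitz bound of \cref{asm:pay}\eqref{asm:Lips} then gives $\abs{\pay_{\run}(\point) - \pay_{\run}(\pointalt)} \leq \lips \diam(\nhd)$ uniformly on the support, so integrating against $\simple$ yields
\[
\abs{\pay_{\run}(\point) - \braket{\pay_{\run}}{\simple}}
\leq \lips \diam(\nhd).
\]

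Summing the per-round decomposition from $\run = \start$ to $\nRuns$ and applying the bias estimate in each round produces $\lips\diam(\nhd)\nRuns$ from the first summand and $\insum_{\run=\start}^{\nRuns} \braket{\pay_{\run}}{\simple - \simple_{\run}}$ from the second, which is exactly \eqref{eq:point2simple}. There is no real obstacle here: the only subtleties are checking that the supports genuinely lie in $\nhd$ so that the Lipschitz bound is uniform, and keeping the sign conventions straight so that the $\lips\diam(\nhd)$ bound on the bias extends to an upper bound on $\reg_{\point}(\nRuns)$ (not just on its absolute value), which is immediate since the inequality is one-sided.
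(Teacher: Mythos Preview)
Your argument is correct and essentially identical to the paper's: both use the Lipschitz estimate on $\pay_{\run}(\point)-\pay_{\run}(\pointalt)$ over $\nhd$, integrate against $\simple$ to get $\pay_{\run}(\point)\leq\braket{\pay_{\run}}{\simple}+\lips\diam(\nhd)$, and then sum over $\run$. The only cosmetic difference is that you make the add-and-subtract decomposition explicit, whereas the paper folds it into the final ``summing over $\run$ and invoking the definition of the regret'' step.
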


\begin{proof}
By \cref{asm:pay}, we have $\pay_{\run}(\point) \leq \pay_{\run}(\pointalt) + \lips \norm{\point - \pointalt} \leq \pay_{\run}(\pointalt) + \lips \diam(\nhd)$ for all $\pointalt\in\nhd$.
Hence, letting $\pointalt\sim\simple$ and expectations on both sides, we get $\pay_{\run}(\point) \leq \braket{\pay_{\run}}{\simple} + \lips\diam(\nhd)$.
Our claim then follows by summing over $\run$ and invoking the definition of the regret.
\end{proof}

\begin{remark}
We note here that the bound \eqref{eq:point2simple} does not need the full capacity of the Lipschitz continuity framework;
in fact, it continues to hold under much less restrictive notions, such as the weak one-sided continuity condition of \citet{BMSS11}.
Nevertheless, in the sequel we will maintain the assumption of Lipschitz continuity for simplicity.
\end{remark}

\begin{remark}
We should also state here that, in the sequel, $\nhd$ will be chosen small relative to $\nRuns$, so the term in \eqref{eq:point2simple} becomes sublinear in the analysis.
In more detail, in the proof of our main regret bounds, \cref{lem:point2simple} will be applied several times, over windows of different lengths, and $\nhd$ will be chosen at each window to be a progressively smaller set.
The exact mechanism is detailed in \cref{app:hierarchical}.
\end{remark}

\section{\Acl{DA} with an explicit cover}
\label{sec:dualavg}

\DeclarePairedDelimiterXPP{\tnorm}[1]{}{\lVert}{\rVert}{_{\run}}{#1}		
\newmacro{\tstart}{\run_{1}}
\newmacro{\tend}{\run_{2}}

%
%
%

To build some intuition for the analysis to come, we begin by adapting the \acdef{DA} algorithm of \citet{Nes09} to the (infinite) space of simple strategies with an explicit cover.
This will allow us to introduce the relevant notions that we will need in the sequel, namely the \emph{range} of an estimator and the \emph{Fisher information metric}.

\subsection{Basic setup}

Let $\cover = \{ \set_{1},\dotsc,\set_{\nSets} \}$ be a measurable partition of $\points$ with nontrivial covering sets, \ie $\leb(\set) > 0$ and $\set\cap\setalt = \varnothing$ for all $\set,\setalt\in\cover$ with $\set\neq\setalt$.
In particular, this implies that every point $\point\in\points$ belongs to a unique element of $\cover$, denoted below by $\set_{\point}$.
Since the elements of $\cover$ cover $\points$ in an unambiguous way, we will refer to $\cover$ as an \emph{explicit cover} of $\points$.
This cover will be assumed fixed throughout this section.

In terms of sampling actions from $\points$, the above also gives rise to a set of \textit{simple strategies} supported on $\cover$, namely
\begin{equation}
\label{eq:simple-cover}
\txs
\simples_{\cover}
	= \setdef*{ \sum_{\set} \weight_{\set} \one_{\set}}{\weight_{\set} \geq 0, \sum_{\set} \weight_{\set} \leb(\set) = 1 }
\end{equation}
Geometrically, it will be convenient to interpret $\simples_{\cover}$ as a simplex embedded in the space of all test functions $\phi\from\points\to\R$ that are piecewise constant on the covering sets of $\cover$.
Since such functions may be viewed equivalently as functions $\phi\from\cover\to\R$, we will denote this function space by $\pspace$.

Moving forward, we will assume that the learner is sampling from $\points$ with simple strategies taken from $\simples_{\cover}$, and we will write
\(
\txs
\simple_{\set}
	\defeq \prob_{\point\sim\simple}(\point\in\set)
	= \int_{\set}\simple
	= \weight_{\set}\leb(\set)
\)
for the probability of choosing an element of $\set$ under $\simple$.
Accordingly, our non-convex learning framework may be encoded in more concrete terms as follows:
\begin{enumerate*}
[(\itshape i\hspace*{1pt}\upshape)]
\item
at each stage $\run=\running$, the adversary chooses (but does not reveal) a payoff function $\pay_{\run}\from\points\to[0,\rbound]$;
\item
the learner selects an action $\choice_{\run}\in\points$ based on some simple strategy $\state_{\run}$ supported on $\cover$;
and
\item
the corresponding reward $\pay_{\run}(\choice_{\run})$ is received by the learner and the process repeats.
\end{enumerate*}

As an algorithmic template for learning in this setting, we will consider an adaptation of the classical \acl{DA} algorithm of \citet{Nes09}.
Specifically, we will focus on an online policy that we call \acdef{DAX}, and which is defined recursively as
\begin{equation}
\label{eq:DAX}
\tag{DAX}
\begin{aligned}
\score_{\run+1}
	&= \score_{\run} + \model_{\run}
	\\
\choice_{\run+1}
	\sim \state_{\run+1}
	&= \mirror(\temp_{\run+1} \score_{\run+1})
\end{aligned}
\end{equation}
where

\begin{enumerate}

\item
$\model_{\run}\in\dspace$ is an \emph{estimate} \textendash\ or \emph{model} \textendash\ of the otherwise unobserved payoff function $\pay_{\run}$ of stage $\run$.

\item
$\score_{\run}\in\dspace$ is an auxiliary \emph{scoring function} that aggregates previous payoff models \textendash\ so $\score_{\run}(\point)$ indicates the learner's propensity of choosing $\point\in\points$ at stage $\run$.

\item
$\temp_{\run} > 0$ is a ``learning rate'' parameter that adjusts the sharpness of the learning process.

\item
$\mirror\from\dspace\to\simples_{\cover}$ is a \emph{choice map} that transforms scoring functions $\score_{\run}\in\dspace$ into simple strategies $\state_{\run} \in \simples_{\cover}$.
\end{enumerate}

Each component of the method is discussed in detail below.
We also note that this method is often referred to as \acdef{FTRL}, \cf \citet{SS11,SSS06}.
Our choice of terminology follows \citet{Nes09} and \citet{Xia10}.

\subsection{The choice map}

We begin by detailing the method's ``choice map'' $\mirror\from\dspace \to \simples_{\cover}$ which determines action choice probabilities based on the ``score function'' $\score_{\run}(\point)$.
With this in mind, we will focus on a class of ``regularized strategies'' that output at each stage a simple strategy $\state_{\run}\in\simples_{\cover}$ that maximizes the learner's expected score minus a regularization penalty.

Specifically, we will consider choice maps of the form
\begin{equation}
\label{eq:choice}
\mirror(\dpoint)
	= \argmax_{\simple\in\simples_{\cover}} \braces{ \braket{\dpoint}{\simple} - \hreg(\simple) }
	\quad
	\text{for all $\dpoint\in\dspace$},
\end{equation}
where the \emph{regularizer} $\hreg\from\simples_{\cover}\to\R$ is assumed to be continuous and strictly convex on $\simples_{\cover}$.
To streamline our presentation, we will further assume that $\hreg$ is \emph{decomposable}, \ie it can be written as $\hreg(\simple) = \sum_{\set\in\cover} \hdec(\simple_{\set})$ for some strictly convex, $C^{2}$-smooth function $\hdec\from(0,1]\to\R$.
Two widely used examples are as follows:

\begin{example}
[Negentropy]
\label{ex:entropy}
Consider the \emph{entropic kernel} $\hdec(\point) = \point\log\point$ with the continuity convention $0\log0 = 0$.
Then, by a standard calculation, the associated choice map is given by the logit choice model
\begin{equation}
\label{eq:logit}
\logit(\dpoint)
	= \frac{\exp(\dpoint)}{\int_{\points} \exp(\dpoint)},
\end{equation}
where $\dpoint \equiv \dpoint(\point)$ is an arbitrary piecewise constant function on $\cover$.
The entropic regularizer has a very long history in the field of (online) optimization;
for a (highly incomplete) list of references, see \citet{NY83}, \citet{ACBFS95,ACBFS02}, \citet{BecTeb03}, \citet{SS11}, \citet{BCB12}, \citet{AHK12}, \citet{MerSta18}, \citet{KSU19}, \citet{Sli19}, \citet{PS21}, and references therein.
\end{example}

\begin{example}
[Log-barrier]
\label{ex:Burg}
Another important example is the \emph{log-barrier} (or \emph{Burg entropy}) kernel $\hdec(\point) = -\log\point$.
In this case, the associated choice map does not admit a closed form expression, but it can be calculated by a binary search algorithm in logarithmic time.%
\footnote{This is done by noting that any solution of the defining maximization problem \eqref{eq:choice} would have to satisfy the first-order optimality condition $\sum_{\set\in\cover} (\xi - \dpoint_{\set})^{-1} = 1$ for some $\xi > \max_{\set} \dpoint_{\set}$ (in which region the function being searched is strictly decreasing).}
This choice has deep links to Karmarkar's ``affine scaling'' method for linear programming \citep{Kar90,VMF86}, \cf \citet{ABB04}, \citet{BBT17}, \citet{MS16,MerSan18}, \citet{BMSS19}, \citet{ABM19,ABM21}, and references therein.
For a recent use of the log-barrier function in the context of stochastic and/or contextual multi-armed bandit problems, see \citet{WL18}, \citet{PL19}, and \citet{ACGL+19}.
\end{example}

\subsection{Estimators}

The second basic ingredient of \eqref{eq:DAX} is the estimate $\model_{\run}$ of the learner's payoff function $\pay_{\run}$ at time $\run$.
Since we are working with a fixed cover $\cover$ of $\points$, the estimator $\model_{\run}$ may not exceed the cover's granularity, which is why we require $\model_{\run}$ to be piecewise constant on $\cover$ \textendash\ \ie $\model_{\run}\in\dspace$.

Overall, we will measure the quality of $\model_{\run}$ as an estimator by means of the corresponding error process
\(
\error_{\run}
	= \model_{\run} - \pay_{\run}
\)
which is assumed to capture all sources of uncertainty and lack of precision in the learner's estimation process.
To differentiate further between random (zero-mean) and systematic (nonzero-mean) errors, we will decompose $\error_{\run}$ as
\begin{equation}
\label{eq:error}
\error_{\run}
	= \noise_{\run}
		+ \bias_{\run},
\end{equation}
where
$\bias_{\run} = \exof{\error_{\run} \given \filter_{\run}}$ denotes the \emph{bias} of the estimator,
and
$\noise_{\run} = \error_{\run} - \bias_{\run}$ the inherent \emph{random noise} (so $\exof{\noise_{\run} \given \filter_{\run}} = 0$ for all $\run$).
In terms of measurability, these processes are all conditioned on the history $\filter_{\run} \defeq \filter(\state_{1},\dotsc,\state_{\run})$ of the learner's policy up to \textendash\ and including \textendash\ stage $\run$.
Thus, in terms of the sequence of events described earlier, $\state_{\run}$ is $\filter_{\run}$-measurable (by definition), but $\choice_{\run}$, $\error_{\run}$, $\noise_{\run}$ and $\bias_{\run}$ are not.

For concreteness, we provide some examples below:

\begin{example}
[\Acl{IWE}]
Motivated by the literature on \aclp{MAB} \citep{BCB12,Sli19,LS20}, a natural way to reconstruct $\pay_{\run}$ is via the \acli{IWE}\acused{IWE}
\begin{equation}
\label{eq:IWE}
\tag{\acs{IWE}}
\model_{\run}(\point)
	= \rbound
		- \frac{\rbound - \pay_{\run}(\choice_{\run})}{\state_{\set_{\run},\run}} \oneof{\point\in\set_{\run}},
\end{equation}
where $\set_{\run} \defeq \set_{\point_{\run}}$ denotes the element of $\cover$ containing the sampled action $\choice_{\run}$, and $\rbound$ is one upper bound of the learner's rewards.
This particular formulation of \eqref{eq:IWE} is known as ``loss-based'';
other normalizations are possible but this is the most widely used one when considering sampling policies based on \acl{EW} algorithms \citep{Sli19}.
\end{example}

\begin{example}
[\Acl{IWE3}]
One shortfall of \eqref{eq:IWE} is that it requires knowledge of the upper bound $\rbound$ for the learner's rewards.
When this is not known, a suitable alternative is to introduce an \emph{explicit exploration} parameter $\mix_{\run} > 0$ in the learner's sampling strategy $\state_{\run}$.
This means that the learner now chooses an action $\choice_{\run}\in\cover$ according to the perturbed strategy
\(
\est\state_{\run}
	= (1-\mix_{\run}) \state_{\run} + \mix_{\run} \unif_{\cover},
\)
where $\unif_{\cover} = \abs{\cover}^{-1} \sum_{\set\in\cover} \leb(\set)^{-1} \one_{\set}$ denotes the uniform distribution on $\cover$.
The \acli{IWE3}\acused{IWE3} is then defined as
\begin{equation}
\label{eq:IWE3}
\tag{\acs{IWE3}}
\model_{\run}(\point)
	= \frac{\pay_{\run}(\choice_{\run})}{\est\state_{\set_{\run},\run}} \oneof{\point\in\set_{\run}}
\end{equation}
with $\set_{\run} \defeq \set_{\choice_{\run}}$ as above.
In contrast to \eqref{eq:IWE}, the estimator \eqref{eq:IWE3} has bias and variance bounded respectively as $\exof{\bias_{\run}} = \bigoh(\mix_{\run})$ and $\exof{\noise_{\set,\run}^{2}} = \bigoh(1/\mix_{\run})$, \ie both can be controlled by tuning $\mix_{\run}$.
This provides additional flexibility relative to \eqref{eq:IWE}, but the introduction of the explicit exploration parameter $\mix_{\run}$ often ends up having a negative impact on the regret \citep{Sli19}, an important disadvantage.
\end{example}

Other estimators have also been used in the literature, such as \emph{implicit} exploration and its variants \citep{KNVM14}.
For posterity, we only note that the set of possible values $\region \defeq \union_{\run} \im(\model_{\run}) \subseteq \dspace$ attained by an estimator will play an important role in the sequel.
When the estimator is understood from the context, we will refer to this image set as its \emph{range};
in the examples above, we have:
\begin{enumerate}
\item
For \eqref{eq:IWE}:
	$\region = (-\infty,\rbound]^{\cover}$.
\item
For \eqref{eq:IWE3}:
	$\region = \R_{+}^{\cover}$.
\end{enumerate}
We will return to this point in the next section.

\subsection{Strong convexity and the Fisher metric}

Deriving explicit regret guarantees for \acl{DA} methods is typically contingent on the method's regularizer being \emph{strongly convex} \citep{SS11,BCB12}.
Formally, strong convexity posits that there exists some $\hstr>0$ such that, for all $\simple,\simplealt\in\simples$ and all $\coef\in[0,1]$, we have
\begin{equation}
\label{eq:strong}
\begin{aligned}
\hreg(\coef\simple + (1-\coef)\simplealt)
	&\leq \coef \hreg(\simple)
		+ (1-\coef) \hreg(\simplealt)
	\\
	&- \frac{\hstr}{2} \coef(1-\coef) \norm{\simple - \simplealt}^{2}
\end{aligned}
\end{equation}
In the above, $\norm{\cdot}$ denotes an arbitrary reference norm on $\pspace$, usually taken to be the Euclidean norm $\twonorm{\cdot}$ or the Manhattan $L^{1}$ norm $\onenorm{\cdot}$.
However, in our case, seeing as we are comparing \emph{probability distributions}, an arbitrary reference norm does not seem particularly adapted to the problem at hand.

Instead, when dealing with probability distributions,
it is common to measure the distance of $\simplealt$ relative to $\simple$ via the \emph{Fisher information metric}, which is typically used to compute the informational difference between probability distributions.
In our context, the Fisher metric is defined for all $\simple,\simplealt\in\simples_{\cover}$ with $\simple \ll \simplealt$ as
\begin{equation}
\label{eq:Fisher}
\norm{\simplealt - \simple}_{\simple}^{2}
	= \int_{\points} \bracks*{\frac{d(\simplealt - \simple)}{d\simple}}^{2} \dd\simple
	= \sum_{\set\in\cover} \frac{(\simplealt_{\set} - \simple_{\set})^{2}}{\simple_{\set}}.
\end{equation}
We will then posit the following strong convexity requirement relative to the Fisher metric
\begin{equation}
\label{eq:strong-F}
\begin{aligned}
\hreg(\coef\simple + (1-\coef)\simplealt)
	&\leq \coef \hreg(\simple)
		+ (1-\coef) \hreg(\simplealt)
	- \frac{\hstr}{2} \coef(1-\coef) \norm{\simple - \simplealt}_{\simple}^{2}
\end{aligned}
\end{equation}
for all $\simple,\simplealt\in\simples$ and all $\coef\in[0,1]$.
Since this is a non-standard requirement, we proceed with an example.

\begin{example}
The Burg entropy $\hreg(\point) = -\sum_{\set\in\cover} \log\simple_{\set}$ is $1$-strongly convex relative to the Fisher metric.
Indeed, since $\hreg$ is smooth, the strong convexity requirement for $\hreg$ with $\hstr=1$ can be rewritten as
\(
\breg_{\mathrm{IS}}(\simplealt,\simple)
	\geq \frac{1}{2} \sum_{\set\in\cover} (\simplealt_{\set} - \simple_{\set})^{2} / \simple_{\set}
\)
where $\breg_{\mathrm{IS}}(\simplealt,\simple) = \sum_{\set\in\cover} \bracks{\simplealt_{\set}/\simple_{\set} - \log(\simplealt_{\set}/\simple_{\set}) - 1}$ denotes the Itakura\textendash Saito distance on $\simples_{\cover}$.
Our claim then follows from \citet[Ex.~4]{ABM20}.
\end{example}

The key implication of Fisher strong convexity for our analysis is the following characterization:

\begin{restatable}{lemma}{smooth}
\label{lem:smooth}
Let $\hconj(\dpoint) = \max_{\simple\in\simples_{\cover}} \{ \braket{\dpoint}{\simple} - \hreg(\simple) \}$ be the convex conjugate of $\hreg$.
The following are equivalent:
\begin{enumerate}
[left=\parindent]
\item
$\hreg$ satisfies \eqref{eq:strong-F}.
\item
$\hconj$ is $(1/\hstr)$-Lipschitz smooth relative to the dual Fisher norm $\norm{\dpoint}_{\simple,\ast}^{2} = \sum_{\set\in\cover} \simple_{\set} \dpoint_{\set}^{2}$ on $\dspace$;
specifically, for all $\dpoint,\dvec\in\dspace$ and $\ppoint = \mirror(\dpoint)$, we have
\begin{equation}
\label{eq:smooth-F}
\hconj(\dpoint + \dvec)
	\leq \hconj(\dpoint)
		+ \braket{\dvec}{\ppoint}
		+ \frac{1}{2\hstr} \norm{\dvec}_{\ppoint,\ast}^{2}.
\end{equation}
\end{enumerate}
\end{restatable}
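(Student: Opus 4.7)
The plan is to derive both implications from Fenchel--Young duality combined with the first-order variational characterization of the choice map \eqref{eq:choice}, paying attention to the fact that the reference norms on both the primal and the dual sides are \emph{local}, \ie position-dependent. The key identity I would use throughout is that $\ppoint = \mirror(\dpoint)$ is characterized by the variational inequality $\braket{\dpoint - \grad\hreg(\ppoint)}{\simple - \ppoint} \leq 0$ for every $\simple \in \simples_{\cover}$, and that the Bregman divergence $\breg(\simplealt,\ppoint) \defeq \hreg(\simplealt) - \hreg(\ppoint) - \braket{\grad\hreg(\ppoint)}{\simplealt - \ppoint}$ is well-defined on $\simples_{\cover}$ independently of which representative of $\grad\hreg(\ppoint)$ (modulo shifts along the affine-constraint direction) one picks.

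For (1) $\Rightarrow$ (2), I would set $\ppoint = \mirror(\dpoint)$ and $\new{\ppoint} = \mirror(\dpoint+\dvec)$, then expand the defining identity into $\hconj(\dpoint+\dvec) - \hconj(\dpoint) - \braket{\dvec}{\ppoint} = \braket{\dpoint}{\new{\ppoint} - \ppoint} + \braket{\dvec}{\new{\ppoint} - \ppoint} - [\hreg(\new{\ppoint}) - \hreg(\ppoint)]$. Replacing $\braket{\dpoint}{\new{\ppoint} - \ppoint}$ by its upper bound $\braket{\grad\hreg(\ppoint)}{\new{\ppoint} - \ppoint}$ from the optimality condition, the right-hand side collapses to $\braket{\dvec}{\new{\ppoint} - \ppoint} - \breg(\new{\ppoint},\ppoint)$. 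Strong convexity \eqref{eq:strong-F} then yields $\breg(\new{\ppoint},\ppoint) \geq (\hstr/2)\norm{\new{\ppoint} - \ppoint}_{\ppoint}^{2}$, and since $\norm{\cdot}_{\ppoint,\ast}$ is dual to $\norm{\cdot}_{\ppoint}$, the Cauchy--Schwarz estimate $\braket{\dvec}{\new{\ppoint} - \ppoint} \leq \norm{\dvec}_{\ppoint,\ast}\norm{\new{\ppoint} - \ppoint}_{\ppoint}$ followed by Young's inequality produces exactly \eqref{eq:smooth-F}.

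The converse (2) $\Rightarrow$ (1) would follow the symmetric route through the biconjugate identity $\hreg = \hconj^{\ast}$. Fixing $\simple, \simplealt \in \simples_{\cover}$ and picking $\dpoint$ with $\simple = \mirror(\dpoint)$, so that Fenchel--Young is tight at $(\dpoint,\simple)$, I would combine the Fenchel inequality $\hreg(\simplealt) \geq \braket{\dpoint + t w}{\simplealt} - \hconj(\dpoint + t w)$, valid for any $w \in \dspace$ and $t > 0$, with the smoothness bound \eqref{eq:smooth-F} applied at $\dpoint$ in direction $t w$ to obtain $\breg(\simplealt,\simple) \geq t\braket{w}{\simplealt - \simple} - t^{2}\norm{w}_{\simple,\ast}^{2}/(2\hstr)$. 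Optimizing over $t$ yields $\breg(\simplealt,\simple) \geq \hstr\braket{w}{\simplealt - \simple}^{2}/(2\norm{w}_{\simple,\ast}^{2})$, and a supremum over $w \neq 0$ recovers $(\hstr/2)\norm{\simplealt - \simple}_{\simple}^{2}$ via the standard duality $\sup_{w\neq 0}\braket{w}{\simplealt - \simple}/\norm{w}_{\simple,\ast} = \norm{\simplealt - \simple}_{\simple}$.

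The main obstacle will be bookkeeping rather than substance: the dual representative $\dpoint$ of $\simple = \mirror(\dpoint)$ and the gradient $\grad\hreg(\simple)$ agree only up to a Lagrange-multiplier shift along $\mathbf{1}_{\cover}$ coming from the affine constraint $\sum_{\set\in\cover} \simple_{\set} = 1$, so one must verify that the relevant inner products $\braket{\cdot}{\simplealt - \simple}$ with $\simplealt \in \simples_{\cover}$ are insensitive to this shift (they are, precisely because $\sum_{\set\in\cover}(\simplealt_{\set} - \simple_{\set}) = 0$). Once this is dispatched, the local-norm Fenchel correspondence between \eqref{eq:strong-F} and \eqref{eq:smooth-F} reduces to a computation parallel to the classical Euclidean case.
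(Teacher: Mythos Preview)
Your $(1)\Rightarrow(2)$ argument is essentially the paper's: both expand $\hconj(\dpoint+\dvec)-\hconj(\dpoint)-\braket{\dvec}{\ppoint}$ using the Fenchel identity at $\ppoint=\mirror(\dpoint)$ and $\new\ppoint=\mirror(\dpoint+\dvec)$, bound the resulting Bregman term below by $(\hstr/2)\norm{\new\ppoint-\ppoint}_{\ppoint}^{2}$ via \eqref{eq:strong-F}, and close with Cauchy--Schwarz/Young for the Fisher norm pair. The only cosmetic difference is that you phrase the optimality condition as a variational inequality whereas the paper invokes $\dpoint\in\subd\hreg(\ppoint)$ directly; these are equivalent by \cref{lem:mirror}.

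For $(2)\Rightarrow(1)$ you take a genuinely different route. The paper fixes dual representatives for \emph{both} primal points, $\dpoint\in\subd\hreg(\ppoint)$ and $\new\dpoint\in\subd\hreg(\new\ppoint)$, reverses the algebraic identity from the forward direction, applies \eqref{eq:smooth-F} once, and then controls the cross term $\braket{\new\dpoint-\dpoint}{\new\ppoint-\ppoint}$ via Fenchel--Young. You instead keep the second primal point $\simplealt$ dual-free: you introduce a \emph{free} test direction $w\in\dspace$ with scale $t$, apply \eqref{eq:smooth-F} at $\dpoint$ along $tw$, and optimize over $(t,w)$ to recover $(\hstr/2)\norm{\simplealt-\simple}_{\simple}^{2}$ through the norm duality $\sup_{w\neq0}\braket{w}{z}/\norm{w}_{\simple,\ast}=\norm{z}_{\simple}$. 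This is the standard ``variational'' derivation of strong convexity from conjugate smoothness; it buys you the practical advantage of never needing $\simplealt\in\dom\subd\hreg$, so the boundary case is handled automatically rather than by a separate continuity argument, and it avoids relying on any particular relationship between $\new\dpoint-\dpoint$ and $\new\ppoint-\ppoint$. One minor correction: the optimization should run over $t\in\R$ rather than $t>0$, since the maximizer $t^{\ast}=\hstr\braket{w}{\simplealt-\simple}/\norm{w}_{\simple,\ast}^{2}$ can be negative.
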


\Cref{lem:smooth} mirrors the well-known equivalence between strong convexity in the primal and Lipschitz smoothness in the dual \citep{SS11,BCB12}.
However, we must stress here that the norms in \eqref{eq:strong-F} are \emph{not} global, but \emph{strategy-dependent} \textendash\ in effect, they comprise a Riemannian metric on the set of simple strategies $\simples_{\cover}$.
This is a crucial difference with the standard analysis of \acl{DA}, and it allows for much finer control of the learning process as it unfolds \textendash\ precisely because the base distribution $\ppoint = \mirror(\dpoint)$ is not ignored in the process.

We close this section by noting that the entropic regularizer of \eqref{ex:entropy} \emph{does not} satisfy \eqref{eq:strong-F};
we provide an explicit discussion of this point in the supplement.
However, as we also show in the supplement, it \emph{does} satisfy the Lipschitz smoothness requirement \eqref{eq:smooth-F} for all $\dvec\in\dspace$ that are ``upper-bounded'', \ie $\sup_{\set\in\cover} \dvec_{\set} \leq \vbound$ for some $\vbound\in\R$.
From an algorithmic viewpoint, this relaxation of \eqref{eq:strong-F} will play a pivotal role in the sequel, so we encode it as follows:
\begin{definition}
\label{def:smooth}
Let $\region$ be a nonempty convex subset of $\dspace$.
We say that $\hreg$ is $\hstr$-\emph{tame} relative to $\region$ if \eqref{eq:smooth-F} holds for all $\dpoint\in\dspace$ and all $\dvec\in\region$.
\end{definition}

Clearly, by \cref{lem:smooth}, any regularizer satisfying \eqref{eq:strong-F} is tame relative to any subset of $\dspace$ (including $\dspace$ itself).
By contrast, as we mentioned above, the entropic regularizer of \cref{ex:entropy} is $1$-tame over the region $\region = \setdef{\dpoint\in\dspace}{\dpoint_{\set} \leq 1}$, but \emph{it is not tame} over all of $\dspace$.
In the analysis to come, we will see that this property introduces an intricate interplay between the two principal components of \eqref{eq:DAX}, namely the choice of regularizer $\hreg$ and the estimator $\model$.
Unless explicitly mentioned otherwise, in the rest of this section we will assume that $\region$ is fixed and $\hreg$ is $\hstr$-tame relative to $\region$.

\subsection{Regret analysis}

The key element in our analysis will be to control the ``divergence'' between a scoring function $\score_{\run}$ and a comparator strategy $\base\in\simples_{\cover}$.
Because these two elements live in different spaces, we introduce below the \emph{Fenchel coupling}
\begin{equation}
\label{eq:Fench}
\fench(\base,\dpoint)
	= \hreg(\base)
		+ \hconj(\dpoint)
		- \braket{\dpoint}{\base},
\end{equation}
for all $\base\in\simples_{\cover}$, $\dpoint\in\dspace$.
Clearly, by the Fenchel-Young inequality, we have $\fench(\base,\dpoint) \geq 0$ with equality if and only if $\mirror(\dpoint) = \base$.
More to the point, as we show in the supplement, the Fenchel coupling enjoys the following growth property:

\begin{restatable}{lemma}{Fenchel}
\label{lem:Fenchel}
For all $\dpoint\in\dspace$ and all $\dvec\in\region$, we have
\begin{subequations}
\label{eq:Fench-bound}
\begin{align}
\label{eq:Fench-bound-sharp}
\fench(\base,\dpoint+\dvec)
	&= \fench(\base,\dpoint)
		+ \braket{\dvec}{\ppoint - \base}
		+ \fench(\ppoint,\dpoint+\dvec)\!\!
	\\
\label{eq:Fench-bound-norm}
	&\leq \fench(\base,\dpoint)
		+ \braket{\dvec}{\ppoint - \base}
		+ \frac{1}{2\hstr} \norm{\dvec}_{\ppoint,\ast}^{2}
\end{align}
\end{subequations}
where $\ppoint = \mirror(\dpoint)$.
\end{restatable}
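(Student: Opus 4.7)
The plan is to derive both parts of \eqref{eq:Fench-bound} from the defining formula of the Fenchel coupling and the conjugacy relation $\hconj(\dpoint) = \braket{\dpoint}{\ppoint} - \hreg(\ppoint)$, where $\ppoint = \mirror(\dpoint)$ solves the optimization in \eqref{eq:choice}. Concretely, the identity \eqref{eq:Fench-bound-sharp} will fall out by direct algebraic manipulation, and the norm bound \eqref{eq:Fench-bound-norm} will then follow immediately from \eqref{eq:Fench-bound-sharp} by applying the tameness inequality \eqref{eq:smooth-F} supplied by \cref{lem:smooth}.

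First I would expand $\fench(\base,\dpoint) + \braket{\dvec}{\ppoint - \base} + \fench(\ppoint,\dpoint+\dvec)$ using the definition \eqref{eq:Fench}, collect the terms involving $\base$ to obtain $\hreg(\base) - \braket{\dpoint+\dvec}{\base}$, and then notice that the remaining terms involving only $\ppoint$ reduce to $\hreg(\ppoint) + \hconj(\dpoint) - \braket{\dpoint}{\ppoint}$ plus $\hconj(\dpoint+\dvec)$. The first of these vanishes because $\ppoint = \mirror(\dpoint)$ attains the maximum in the definition of $\hconj$, so $\hreg(\ppoint) + \hconj(\dpoint) - \braket{\dpoint}{\ppoint} = 0$. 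What is left reassembles precisely into $\hreg(\base) + \hconj(\dpoint+\dvec) - \braket{\dpoint+\dvec}{\base} = \fench(\base,\dpoint+\dvec)$, establishing \eqref{eq:Fench-bound-sharp}.

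To obtain the bound \eqref{eq:Fench-bound-norm}, I would take the identity just established and bound the residual term $\fench(\ppoint,\dpoint+\dvec)$. Unpacking its definition and again using the conjugacy relation to eliminate $\hreg(\ppoint) - \braket{\dpoint}{\ppoint} = -\hconj(\dpoint)$, one finds $\fench(\ppoint,\dpoint+\dvec) = \hconj(\dpoint+\dvec) - \hconj(\dpoint) - \braket{\dvec}{\ppoint}$. Since $\hreg$ is assumed $\hstr$-tame relative to $\region$ and $\dvec\in\region$, the tameness inequality \eqref{eq:smooth-F} of \cref{lem:smooth} directly yields $\fench(\ppoint,\dpoint+\dvec) \leq \frac{1}{2\hstr}\norm{\dvec}_{\ppoint,\ast}^{2}$, and substituting this into \eqref{eq:Fench-bound-sharp} gives \eqref{eq:Fench-bound-norm}.

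I do not anticipate any serious obstacles here: the identity is an exercise in conjugate duality, and the inequality is a one-line consequence of the tameness hypothesis. The only subtle point worth flagging is the role of the restriction $\dvec\in\region$, which is exactly what licenses the use of \eqref{eq:smooth-F}; absent that restriction, the step from \eqref{eq:Fench-bound-sharp} to \eqref{eq:Fench-bound-norm} would fail for non-globally-smooth regularizers such as the entropy of \cref{ex:entropy}.
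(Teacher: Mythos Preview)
Your proposal is correct and follows essentially the same route as the paper: the identity \eqref{eq:Fench-bound-sharp} is obtained by direct expansion of the Fenchel coupling together with the conjugacy relation $\hreg(\ppoint)+\hconj(\dpoint)-\braket{\dpoint}{\ppoint}=0$ (the paper packages this as a separate ``three-point identity'' lemma, but the content is identical), and the bound \eqref{eq:Fench-bound-norm} then follows by rewriting $\fench(\ppoint,\dpoint+\dvec)=\hconj(\dpoint+\dvec)-\hconj(\dpoint)-\braket{\dvec}{\ppoint}$ and invoking the tameness inequality \eqref{eq:smooth-F}, exactly as you describe.
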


Using \eqref{eq:Fench-bound}, we will analyze the regret properties of \eqref{eq:DAX} via the \emph{$\temp_{\run}$-deflated coupling}
\begin{equation}
\label{eq:energy}
\energy_{\run}
	= \frac{1}{\temp_{\run}} \fench(\base,\temp_{\run}\score_{\run}).
\end{equation}
Doing so leads to the following result:

\begin{restatable}{lemma}{energybound}
\label{lem:energy}
Suppose that \eqref{eq:DAX} is run with an estimator with range $\region$.
For all $\run=\running$, we have
\begin{align}
\label{eq:energy-bound-tight}
\energy_{\run+1}
	\leq \energy_{\run}
		&+ \braket{\model_{\run}}{\state_{\run} - \simple}
		+ \parens*{\temp_{\run+1}^{-1} - \temp_{\run}^{-1}} \bracks{\hreg(\base) - \min\hreg}
	\notag\\
		&+ \temp_{\run}^{-1} \fench(\state_{\run},\temp_{\run}\score_{\run+1}).
\end{align}
If, in addition, $\hreg$ is $\hstr$-tame relative to $\region$, the last term in \eqref{eq:energy-bound-tight} is bounded as
\begin{equation}
\label{eq:energy-bound-norm}
\temp_{\run}^{-1} \fench(\state_{\run},\temp_{\run}\score_{\run+1})
	\leq \temp_{\run}/(2\hstr) \tnorm{\model_{\run}}^{2},
\end{equation}
where $\tnorm{\cdot}$ is the dual Fisher norm $\tnorm{\dvec}\defeq \norm{\dvec}_{\state_{\run},\ast}$.
\end{restatable}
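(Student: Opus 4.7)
The plan is to decompose $\energy_{\run+1} - \energy_{\run}$ by inserting an intermediate energy that uses the \emph{current} learning rate $\temp_{\run}$ but the \emph{updated} scoring function $\score_{\run+1}$, so the analysis splits cleanly into
\begin{enumerate*}
[(\itshape i\hspace*{1pt}\upshape)]
\item a score update at constant $\temp_{\run}$; and
\item a learning-rate adjustment at fixed $\score_{\run+1}$.
\end{enumerate*}
Concretely, I would introduce $\energy_{\run+1/2} \defeq \temp_{\run}^{-1} \fench(\base,\temp_{\run} \score_{\run+1})$ and bound $\energy_{\run+1/2} - \energy_{\run}$ and $\energy_{\run+1} - \energy_{\run+1/2}$ separately.

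For the first difference, I would apply the sharp Fenchel identity \eqref{eq:Fench-bound-sharp} with $\dpoint = \temp_{\run}\score_{\run}$ and $\dvec = \temp_{\run}\model_{\run}$, noting $\mirror(\temp_{\run}\score_{\run}) = \state_{\run}$ by definition of the algorithm. Since $\temp_{\run}\score_{\run+1} = \temp_{\run}\score_{\run} + \temp_{\run}\model_{\run}$, this yields
\[
\fench(\base,\temp_{\run}\score_{\run+1})
= \fench(\base,\temp_{\run}\score_{\run}) + \temp_{\run}\braket{\model_{\run}}{\state_{\run} - \base} + \fench(\state_{\run},\temp_{\run}\score_{\run+1}),
\]
and dividing by $\temp_{\run}$ produces exactly $\energy_{\run} + \braket{\model_{\run}}{\state_{\run} - \base} + \temp_{\run}^{-1}\fench(\state_{\run},\temp_{\run}\score_{\run+1})$.

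For the second difference, I would use the conjugate duality identity $\temp^{-1} \hconj(\temp \dvec) = \max_{\simple \in \simples_{\cover}} \{\braket{\dvec}{\simple} - \temp^{-1}\hreg(\simple)\}$ and the fact that, after shifting by $\min\hreg$, the function $\simple \mapsto \hreg(\simple) - \min\hreg$ is nonnegative. This makes $\temp \mapsto \temp^{-1}[\hconj(\temp\dvec) + \min\hreg]$ nondecreasing, so for $\temp_{\run+1} \leq \temp_{\run}$ one obtains $\temp_{\run+1}^{-1}\hconj(\temp_{\run+1}\score_{\run+1}) - \temp_{\run}^{-1}\hconj(\temp_{\run}\score_{\run+1}) \leq -(\temp_{\run+1}^{-1} - \temp_{\run}^{-1})\min\hreg$. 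Inserting this into the conjugate expansion $\temp^{-1}\fench(\base,\temp\dvec) = \temp^{-1}\hreg(\base) + \temp^{-1}\hconj(\temp\dvec) - \braket{\dvec}{\base}$ (with $\dvec = \score_{\run+1}$) gives $\energy_{\run+1} - \energy_{\run+1/2} \leq (\temp_{\run+1}^{-1} - \temp_{\run}^{-1})[\hreg(\base) - \min\hreg]$. Summing the two pieces yields \eqref{eq:energy-bound-tight}.

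For the norm bound \eqref{eq:energy-bound-norm}, I would apply \eqref{eq:Fench-bound-norm} with $\base \subs \state_{\run}$, $\dpoint \subs \temp_{\run}\score_{\run}$, and $\dvec \subs \temp_{\run}\model_{\run}$. Since $\state_{\run} = \mirror(\temp_{\run}\score_{\run})$, the Fenchel–Young equality case gives $\fench(\state_{\run},\temp_{\run}\score_{\run}) = 0$, and the inner product $\braket{\temp_{\run}\model_{\run}}{\state_{\run} - \state_{\run}}$ vanishes, leaving $\fench(\state_{\run},\temp_{\run}\score_{\run+1}) \leq (\temp_{\run}^{2}/(2\hstr))\tnorm{\model_{\run}}^{2}$; dividing by $\temp_{\run}$ finishes the proof. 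The main obstacle is the learning-rate adjustment step, which is the one place the argument is not a direct plug-in of \cref{lem:Fenchel}: it requires the monotonicity observation about $\temp^{-1}\hconj(\temp\cdot)$ and careful separation of the $\min\hreg$ offset to accommodate regularizers (such as the Burg entropy) that are not bounded below by zero on $\simples_{\cover}$. A small technical point to check is that $\temp_{\run}\model_{\run}$ lies in $\region$ so that the tameness property applies; this is typically ensured by scaling conventions on $\region$ or by choosing $\temp_{\run}$ small enough.
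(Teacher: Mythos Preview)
Your proposal is correct and follows essentially the same decomposition and argument as the paper's proof: the paper likewise inserts the intermediate point $\temp_{\run}^{-1}\fench(\base,\temp_{\run}\score_{\run+1})$, handles the score-update piece via the three-point identity \eqref{eq:Fench-bound-sharp}, and handles the learning-rate piece by showing $\temp \mapsto \temp^{-1}[\hconj(\temp\dpoint) + \min\hreg]$ is nondecreasing. The only cosmetic difference is that the paper establishes this monotonicity by differentiating $\varphi(\temp)$ and computing $\varphi'(\temp) = \temp^{-2}[\hreg(\mirror(\temp\dpoint)) - \min\hreg] \geq 0$, whereas you use the cleaner envelope observation that $\temp^{-1}[\hconj(\temp\dvec) + \min\hreg] = \max_{\simple}\{\braket{\dvec}{\simple} - \temp^{-1}(\hreg(\simple) - \min\hreg)\}$ is a pointwise supremum of nondecreasing functions; your caveat about $\temp_{\run}\model_{\run} \in \region$ is well taken and is a point the paper leaves implicit.
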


Thus, telescoping \cref{lem:energy}, we obtain the bound below.
\begin{restatable}{proposition}{templateDAX}
\label{prop:reg-DAX}
The regret incurred relative to $\simple\in\simples_{\cover}$ over the interval $\runs = \{\tstart,\dotsc,\tend-1\}$ is bounded as
\begin{align}
\label{eq:reg-bound}
\reg_{\base}(\runs)
	&\txs
	\leq \energy_{\tstart} - \energy_{\tend}
		+ \parens*{\temp_{\tend}^{-1} - \temp_{\tstart}^{-1}} \bracks{\hreg(\base) - \min\hreg}
	\notag\\
	&+ \sum_{\run\in\runs} \braket{\error_{\run}}{\state_{\run} - \base}
	+ \frac{1}{2\hstr} \sum_{\run\in\runs} \temp_{\run} \tnorm{\model_{\run}}^{2}.
\end{align}
\end{restatable}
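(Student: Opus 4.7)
The plan is to telescope the one-step energy bound from \cref{lem:energy} over the window $\runs = \{\tstart,\dotsc,\tend-1\}$, and then convert the resulting inequality into a regret statement via the error decomposition $\model_{\run} = \pay_{\run} + \error_{\run}$.

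Concretely, I would first combine \eqref{eq:energy-bound-tight} with the tameness bound \eqref{eq:energy-bound-norm} to obtain the per-round inequality
\[
\energy_{\run+1} - \energy_{\run}
  \leq \braket{\model_{\run}}{\state_{\run} - \base}
   + (\temp_{\run+1}^{-1} - \temp_{\run}^{-1})[\hreg(\base) - \min\hreg]
   + \frac{\temp_{\run}}{2\hstr}\tnorm{\model_{\run}}^{2}.
\]
Summing over $\run \in \runs$ collapses the left-hand side telescopically to $\energy_{\tend} - \energy_{\tstart}$ and collapses the learning-rate differences to $(\temp_{\tend}^{-1} - \temp_{\tstart}^{-1})[\hreg(\base) - \min\hreg]$; a single rearrangement then produces a lower bound on $\sum_{\run\in\runs}\braket{\model_{\run}}{\state_{\run} - \base}$.

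Next, I would use the definition $\error_{\run} = \model_{\run} - \pay_{\run}$ to split
\[
\braket{\model_{\run}}{\state_{\run} - \base}
  = \braket{\pay_{\run}}{\state_{\run} - \base} + \braket{\error_{\run}}{\state_{\run} - \base}.
\]
Since $\ex_{\choice_{\run} \sim \state_{\run}}[\pay_{\run}(\choice_{\run})] = \braket{\pay_{\run}}{\state_{\run}}$ and the comparator contributes $\braket{\pay_{\run}}{\base}$, the regret over the window is $\reg_{\base}(\runs) = \sum_{\run \in \runs} \braket{\pay_{\run}}{\base - \state_{\run}} = -\sum_{\run \in \runs}\braket{\pay_{\run}}{\state_{\run} - \base}$. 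Substituting this identity into the rearranged telescoping bound and moving the noise-like term $\sum_{\run}\braket{\error_{\run}}{\state_{\run}-\base}$ to the right-hand side yields exactly \eqref{eq:reg-bound}.

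The proof is mostly bookkeeping: the analytical content is already absorbed in \cref{lem:energy} (which itself rests on the Fisher-tameness growth property of \cref{lem:Fenchel}), so the only care required is to track signs when flipping $\sum_{\run}\braket{\model_{\run}}{\state_{\run}-\base}$ into a regret-type quantity, and to verify that telescoping the $\temp_{\run}^{-1}$ coefficients genuinely produces the stated $(\temp_{\tend}^{-1} - \temp_{\tstart}^{-1})$ factor without any monotonicity assumption on the learning-rate schedule. Since $\base \in \simples_{\cover}$ is arbitrary throughout, the resulting bound holds uniformly over all simple-strategy comparators, as claimed.
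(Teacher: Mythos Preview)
Your proposal is correct and matches the paper's own proof: both combine the two parts of \cref{lem:energy} into a single per-round inequality, substitute $\model_{\run} = \pay_{\run} + \error_{\run}$, and telescope over $\run\in\runs$ (the paper merely performs the error substitution before summing rather than after, which is an immaterial reordering). One small caveat on your closing remark: while the telescoping of the $\temp_{\run}^{-1}$ increments indeed needs no monotonicity, the underlying bound \eqref{eq:energy-bound-tight} from \cref{lem:energy} does rely on $\temp_{\run+1}\leq\temp_{\run}$, so the proposition inherits that assumption implicitly.
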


We are finally in a position to state our main regret guarantees for \eqref{eq:DAX}.
For generality, we state our result with a generic estimator $\model_{\run}$ enjoying the following bounds:
\begin{subequations}
\label{eq:sigbounds}
\begin{alignat}{3}
\label{eq:bbound}
a)
	\quad
	&\textit{Bias:}
	&\hspace{1em}
	&\abs{\braket{\bias_{\run}}{\simple}}
	\leq \bbound_{\run}
	\\
\label{eq:mbound}
b)
	\quad
	&\textit{Mean square:}
	&\hspace{1em}
	&\exof{\tnorm{\model_{\run}}^{2} \given \filter_{\run}}
		\leq \mbound_{\run}^{2}
	\hspace{2em}
\end{alignat}
\end{subequations}
for all $\run=\running$, and all $\simple\in\simples_{\cover}$.
We stress here that the use of the Fisher metric in \eqref{eq:sigbounds} is \emph{crucial}:
for example, the \ac{IWE} estimator satisfies \eqref{eq:mbound} with $\mbound_{\run} = \bigoh(\rbound^{2} |\sets|)$ (where $|\sets|$ is the size of the underlying partition)
but it does not satisfy this bound for \emph{any} global norm.
Again, the reason for this is that the dual Finsler norm can be considerably smaller than any other global norm, depending on the information content of $\state_{\run}$.

This feature plays a key role in deriving the regret of \eqref{eq:DAX}:

\begin{restatable}{theorem}{regretDAX}
\label{thm:DAX}
Suppose that \eqref{eq:DAX} is run with assumptions as in \cref{prop:reg-DAX}.
Then the learner's regret is bounded as
\begin{align}
\label{eq:reg-DAX}
\exof{\reg_{\base}(\nRuns)}
	&\leq \energy_{\start} - \energy_{\nRuns+1}
		+ \parens*{\temp_{\nRuns+1}^{-1} - \temp_{\start}^{-1}} \bracks{\hreg(\base) - \min\hreg}
	\notag\\
	&+ 2 \insum_{\run=\start}^{\nRuns} \bbound_{\run}
		+ \frac{1}{2\hstr} \insum_{\run=\start}^{\nRuns} \temp_{\run} \mbound_{\run}^{2}.
\end{align}
\end{restatable}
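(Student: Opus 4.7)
The plan is to apply \cref{prop:reg-DAX} over the full horizon $\runs = \{\start,\dotsc,\nRuns\}$ and then take expectations, controlling the two stochastic quantities that appear via a martingale argument combined with the estimator bounds \eqref{eq:sigbounds}. Concretely, instantiating \eqref{eq:reg-bound} with $\tstart = \start$ and $\tend = \nRuns+1$ already recovers the first two summands of the target inequality \eqref{eq:reg-DAX} verbatim, so everything reduces to bounding the two sums $\insum_{\run} \braket{\error_{\run}}{\state_{\run} - \base}$ and $\tfrac{1}{2\hstr}\insum_{\run} \temp_{\run} \tnorm{\model_{\run}}^{2}$ in expectation.

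For the error term, I would decompose $\error_{\run} = \noise_{\run} + \bias_{\run}$ and treat the two pieces separately. Since $\state_{\run}$ is $\filter_{\run}$-measurable, $\base$ is a deterministic comparator, and $\exof{\noise_{\run}\given\filter_{\run}} = 0$ by the very definition of the noise, the tower rule makes the noise contribution vanish:
\begin{equation*}
\exof{\braket{\noise_{\run}}{\state_{\run} - \base}}
    = \exof{\braket{\exof{\noise_{\run} \given \filter_{\run}}}{\state_{\run} - \base}}
    = 0.
\end{equation*}
For the bias, applying the scalar bound \eqref{eq:bbound} separately to $\simple = \state_{\run}$ and $\simple = \base$ (both of which lie in $\simples_{\cover}$) and invoking the triangle inequality yields $\abs{\exof{\braket{\bias_{\run}}{\state_{\run} - \base}}} \leq 2\bbound_{\run}$, which accounts for the factor of $2$ in the stated bound. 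The mean-square term is then handled similarly: conditioning on $\filter_{\run}$ and invoking \eqref{eq:mbound} gives $\exof{\temp_{\run}\tnorm{\model_{\run}}^{2}} \leq \temp_{\run}\mbound_{\run}^{2}$.

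Summing these contributions over $\run = \start,\dotsc,\nRuns$ and combining with the two deterministic summands from \cref{prop:reg-DAX} produces exactly the right-hand side of \eqref{eq:reg-DAX}, completing the argument.

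The only delicate point, and the one I expect to be the main obstacle, is the vanishing of the noise term in the second step: it relies on $\base$ being \emph{truly deterministic}, so that it commutes past the conditional expectation and the martingale property of $\noise_{\run}$ kicks in. This is precisely why the theorem is stated for a generic fixed $\base \in \simples_{\cover}$ rather than for the static regret $\reg(\nRuns)$ directly — passing from the former to the latter requires an additional argument (either taking a supremum over $\base$ outside the expectation, or exploiting that the maximizer lies in a finite deterministic set). Everything else in the proof is mechanical algebra, the martingale cancellation, and a direct application of \eqref{eq:sigbounds}.
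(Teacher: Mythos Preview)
Your proposal is correct and matches the paper's approach exactly. The paper's own proof is a single sentence (``set $\tstart\gets1$, $\tend\gets\nRuns+1$ in \eqref{eq:reg-bound} and take expectations''), and what you have written simply unpacks the details of that expectation step --- the martingale cancellation of the noise via the tower property, the $2\bbound_{\run}$ bound on the bias via \eqref{eq:bbound} applied to $\state_{\run}$ and $\base$ separately, and the conditional second-moment bound \eqref{eq:mbound} for the $\tnorm{\model_{\run}}^{2}$ term; your closing remark about $\base$ needing to be deterministic is also on point and anticipates the extra work done later in \cref{app:hierarchical} to pass to $\reg(\nRuns)$.
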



This theorem is proved in the supplement and constitutes the main ingredient for the analysis to come.


\section{\Acl{HDA}}
\label{sec:hierarchical}
\newcommand{\splitsched}{\mathcal{T}_{\mathrm{split}}}
\newcommand{\scheduler}{v}
\newcommand{\coverset}{\mathcal{S}}

In this section, we proceed to define the mechanism that we will use to recursively ``zoom-in'' on different regions of the state space.
This hierarchical approach is inspired by earlier works by \citet{BMSS11}, but with the crucial difference that we do not zoom in ``pointwise'' but ``dimension-wise''.
We explain all this in detail below.

\subsection{The splitting mechanism}

As in the case of \citet{BMSS11} and \citet{KSU08,KSU19}, the basic element of our construction is an infinite ``tree of coverings'', each of whose levels $\nSplits = 1,\dotsc$ defines a successively finer cover $\cover_{\nSplits}$ of $\points$ (\ie $\cover_{\nSplits} \subseteq \cover_{\nSplits+1}$ for all $\nSplits=1,\dotsc$). %
However, in contrast to these previous works, we do not consider \emph{binary} trees, but \emph{dyadic} ones;
specifically, each cover $\cover_\nSplits = \left\{\coverset_{\nSplits,i}\right\}_{i \leq  2^\nSplits}$ is defined inductively as follows:
\begin{enumerate*}
[(\itshape i\hspace*{1pt}\upshape)]
\item
$\cover_0 = \{\coverset_{0,1}\}=\{\points\}$;
\item
at specific stages of the learning process (that we define later), a \emph{splitting event} occurs, and each leaf\footnote{In a slight overload, we also write $\cover$ for the tree inducing the cover, and therefore refer to its components as \emph{leaves}} of the current cover
is split into $2$ sub-leaves as detailed below (refer also to \cref{fig:tree,fig:tree2} for intuition in the case $\vdim=2$).
\end{enumerate*}
We perform splitting events successively along each dimension in a round-robin manner, ensuring each node is split into two subnodes of equal volume.
Formally, for a given node $\coverset_{\nSplits,i}$, we define $\coverset_{\nSplits+1,2i-1}$ and $\coverset_{\nSplits+1,2i}$ as the two subsets obtained from splitting the leaf $\coverset_{\nSplits,i}$ in $2$ equally sized leaves using a hyperplane\footnote{Given a set $\set \subseteq \mathbb{R}^\vdim$ and a dimension $k \in \{1, \dots, \vdim\}$, an hyperplane with normal vector $e_k$ which splits $\set$ into two equally sized subsets exists by the intermediate value theorem and can be found efficiently by line search.} orthogonal to the canonical basis vector of $\mathbb{R}^\vdim$ number $\nSplits + 1 \ (\mathrm{mod} \ \vdim)$. 
We then have $\coverset_{\nSplits+1,2i}\cup \coverset_{\nSplits+1,2i-1}=\coverset_{\nSplits,i}$, $\coverset_{\nSplits+1,2i}\cap \coverset_{\nSplits+1,2i-1}=\varnothing$ and $\leb(\coverset_{\nSplits+1,2i})=\leb(\coverset_{\nSplits+1,2i-1})=\leb(\coverset_{\nSplits,i})/2$.

\begin{figure}[htbp]
\ctikzfig{Figures/tree-hierarchical2}
\vspace{-2cm}
\caption{Example of the 3 first \textit{splitting events} for $\points=[0,1]^2$}
\label{fig:tree}
\end{figure}

In the sequel, for any cover $\cover$, we write $\cover^{+}$ for its \emph{successor} cover, \ie the cover after a splitting event on $\cover$.

\begin{figure}[ht]
\ctikzfig{Figures/tree-hierarchical}
\caption{Example of a covering tree for the cube $\points=[0,1]^2$}
\label{fig:tree2}
\end{figure}

A crucial information for the sequel is the diameter of the leaves $\coverset_{\nSplits,i}$ of a given cover $\cover$, for which we make a geometric assumption similar to \citet[A1]{BMSS11}:

\begin{restatable}{assumption}{diamassumpt}
\label{assumpt:diameter}
There exists some $\diamconst >0$ such that
\begin{equation}
\label{eq:assumption-diameter}
\diam(\coverset_{\nSplits,i}) \leq \diamconst \diam(\points)2^{-\floor{\nSplits/\vdim}}.
\end{equation}
for all $\sigma \geq 0$ and $i \leq 2^\sigma$ as above.
\end{restatable}


\Cref{assumpt:diameter} only concerns the problem's domain $\points$, and it can be lifted by embedding $\points$ in a suitable box and then proceeding with a splitting schedule that follows a fixed volumetric mesh.
This approach could lead to leaves of different volume at each splitting event, which would in turn make the analysis more cumbersome.
The example below shows that $\diamconst$ can be easy to calculate in many cases:

\begin{example}[$\points = \vdim$-dimensional box]
In the particular case where $\points$ is an hyperrectangle with sides parallel to the canonical basis vectors of $\R^{\vdim}$, we have $\diam(\coverset_{\nSplits,i}) \leq  \diam(\points)2^{-\floor{\nSplits/\vdim}}$ and $\diamconst = 1$.
\end{example}


\subsection{The \acl{HDA} algorithm}

As a prelude to the definition of our algorithm, we introduce the following notions:
for all $\run=\running$,
\begin{enumerate*}
[(\itshape i\hspace*{1pt}\upshape)]
\item
$\cover_{\run}$ will denote the current cover at time $\run$;
\item
we will write $\nSplits_{\run}$ for the number of splitting events made prior to time $\run$ (so $\nSplits_{\run}$ is also the height of the tree $\cover_{\run}$);
and
\item
$\nSets_{\run} = 2^{\nSplits_t}$ will denote the number of leaves of $\cover_{\run}$.
\end{enumerate*}
Moreover, a \emph{splitting schedule} is an increasing sequence of integers $\splitsched = \{\run_1, \run_2, \dots \}$ such that we perform a splitting event at each round $\run \in \splitsched$. For convenience, we will rather manipulate \emph{scheduler sequences} $\{\scheduler_{\run}\}_{t\geq 1}$, \ie increasing real sequences that are uniquely mapped to a splitting schedule by
$\splitsched(\scheduler) = \{\run \geq 1\;\text{such that}\;\floor{\scheduler_{\run}} = \floor{\scheduler_{\run - 1}} + 1\}$. In the sequel and when the context is non ambiguous we may use the term \emph{splitting schedule} to refer to its associated \emph{scheduler sequence}.
We note that for all $\run$, these definitions imply $\floor{\scheduler_{\run}} = \nSplits_{\run}$, and that in the light of the relation stated in the previous subsection we have, for any $\set \in \cover_{\run}$, $\diam(\set) \leq 2\diam(\points) \nSets_{\run}^{-1/\vdim}$ and $\leb(\set) = \nSets_{\run}^{-1}  \leb(\points)$.

We are now in a position to define our learning algorithm in detail.
Its components are threefold:
\begin{enumerate*}
[(\itshape i\hspace*{1pt}\upshape)]
\item
a sequence of estimators $\model_{\run}$ with range $\region$;
\item
a regularizer that is $\hstr$-tame relative to $\region$;
and
\item
a splitting schedule $\splitsched(\scheduler)$ as above.
\end{enumerate*}
Then, the \acdef{HDA} is defined as
\begin{equation}
\label{eq:HDA}
\tag{HDA}
\begin{aligned}
\score_{\run+1}
	&\gets \score_{\run} + \model_{\run}
	\\
\choice_{\run+1}
	\sim \state_{\run+1}
	&\gets \mirror^{\cover_{\run}}(\temp_{\run+1} \score_{\run+1})
	\\
\cover_{\run+1}
	&\gets \new\cover_{\run} \text{if $\run\in\splitsched(\scheduler)$}
\end{aligned}
\end{equation}
where
$\mirror^\cover$ denotes the choice map induced by $\hreg$ for a given cover $\cover$ of $\points$ (by convention, we take $\cover_{0} = \{\points\}$),
$\temp_{\run}$ is a variable learning rate sequence,
and
we implicitly treat $\model_{\run}$ and $\score_{\run}$ as elements $\mathbb{R}^{\cover_t}$ and $\state_{\run}$ as an element of $\simples_{\cover_t}$.

By construction, \eqref{eq:HDA} comprises a succession of applications of \eqref{eq:DAX} to sequences of successive rounds during which the underlying partition $\cover_{\run}$ stays the same (\ie in between two successive splitting events).
An important special case is the specific instance of \eqref{eq:HDA} obtained by the entropic kernel $\hdec(\point) = \point\log\point$ (\cf \cref{ex:entropy}) and the estimator \eqref{eq:IWE};
we will refer to this instance as the \acdef{HEW} algorithm.

\section{Analysis and results}
\label{sec:results}

\newmacro{\cst}{C}

In this section, we leverage the regret guarantees established in \cref{sec:dualavg} for \eqref{eq:DAX} to derive a template regret bound for \eqref{eq:HDA} \textendash\ and, in particular, for \ac{HEW}.

\para{Static regret}

Our template bound for \ac{HDA} is as follows.


\begin{restatable}{theorem}{regstat}
\label{thm:reg-stat}
The \ac{HDA} algorithm enjoys the regret bound
\begin{align}
\label{eq:reg-bound-stat}
\exof{\reg_{\point}(\nRuns)}
	&\leq \frac{\hvol(\nSets_\nRuns) + \cst_\hdec \log_2(\nSets_\nRuns)}{\temp_{\nRuns+1}}
	\notag\\
	&+ 2 \lips\diamconst\diam(\points) \insum_{\run=\start}^{\nRuns} \nSets_{\run}^{-1 / \vdim}
	\notag\\
	&+ 2 \insum_{\run=\start}^{\nRuns} \bbound_{\run}
		+ \frac{1}{2\hstr} \insum_{\run=\start}^{\nRuns} \temp_{\run} \mbound_{\run}^{2},
\end{align}
where $\nSets_{\run}$ is the number of sets in the partition $\sets_{\run}$,  
$\hvol(z) = z \hdec(1/z)$ for all $z>0$
and $\cst_\hdec$ is a constant depending only on $\hdec$.
In particular, if \eqref{eq:HDA} is run with
learning rate $\temp_{\run} \propto 1/\run^{\pexp}$, $\pexp\in(0,1)$,
a logarithmic splitting schedule $\scheduler_{\run} = \nSetsExp \log_2(\run)$
and
a sequence of estimators $\model_{\run}$ such that $\bbound_{\run} = \bigoh(1/\run^{\bexp})$ and $\mbound_{\run}^{2} = \bigoh(\run^{2\mexp})$ for some $\bexp,\mexp\geq0$,
then
\begin{equation}
\label{eq:reg-bound-stat-powers}
\exof{\reg(\nRuns)}
	= \bigoh(\hvol(\nRuns^{-\nSetsExp}) \nRuns^{\pexp} + \nRuns^{1-\nSetsExp / \vdim} + \nRuns^{1-\bexp} + \nRuns^{1+2\mexp-\pexp}).
\end{equation}
\end{restatable}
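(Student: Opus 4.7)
The plan is to decompose the horizon $\{\start,\dotsc,\nRuns\}$ into epochs $E_{0},\dotsc,E_{K}$ separated by the $K \leq \log_{2}(\nSets_{\nRuns})$ splitting events produced by the scheduler, so that the partition $\cover_{k}$ is fixed within each $E_{k}$ and \eqref{eq:HDA} literally coincides with \eqref{eq:DAX} on $\simples_{\cover_{k}}$. \Cref{prop:reg-DAX} then applies per epoch, and after taking expectations through the bias and variance bounds \eqref{eq:sigbounds} (so that $\exof{\braket{\error_{\run}}{\state_{\run}-\base_{k}}} \leq 2\bbound_{\run}$ and $\exof{\tnorm{\model_{\run}}^{2}} \leq \mbound_{\run}^{2}$), this already delivers the last two summands $2\sum \bbound_{\run}$ and $(2\hstr)^{-1}\sum \temp_{\run} \mbound_{\run}^{2}$ of \eqref{eq:reg-bound-stat}. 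The residual work is to (i) pass from the point comparator $\point \in \points$ to an element of $\simples_{\cover_{k}}$, and (ii) assemble the epoch-wise Fenchel terms across partition refinements.

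For step (i), I would apply \cref{lem:point2simple} inside each epoch with the neighborhood $\nhd = \set_{k}(\point)$, where $\set_{k}(\point)$ is the unique leaf of $\cover_{k}$ containing $\point$, and with the simple comparator $\base_{k} \defeq \one_{\set_{k}(\point)}/\leb(\set_{k}(\point))$. Combining \cref{assumpt:diameter} with the elementary estimate $2^{\lfloor \nSplits_{\run}/\vdim\rfloor} \geq \tfrac{1}{2}\nSets_{\run}^{1/\vdim}$ yields $\diam(\set_{k}(\point)) \leq 2\diamconst\diam(\points)\nSets_{\run}^{-1/\vdim}$ throughout $E_{k}$, and summing the resulting Lipschitz surcharge over all rounds produces the middle term $2\lips\diamconst\diam(\points) \sum_{\run} \nSets_{\run}^{-1/\vdim}$ of \eqref{eq:reg-bound-stat}.

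Step (ii) is the main technical hurdle and the source of the factor $\hvol(\nSets_{\nRuns}) + \cst_{\hdec}\log_{2}(\nSets_{\nRuns})$, since the per-epoch Fenchel terms $\energy^{(k)}_{\run} = \fench^{(k)}(\base_{k}, \temp_{\run}\score_{\run})/\temp_{\run}$ depend on both the regularizer $\hreg^{(k)}$ and the comparator $\base_{k}$, and both change at each splitting event. The strategy is to relate the end-of-epoch-$k$ energy to the start-of-epoch-$(k{+}1)$ energy by noting that, at a splitting event, the scoring function $\score$ is extended uniformly to the two sub-leaves of each parent leaf, and $\base_{k}$ naturally lifts to the distribution in $\simples_{\cover_{k+1}}$ that assigns equal mass to each of the two sub-leaves of $\set_{k}(\point)$. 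A direct computation using the decomposable form $\hreg(\simple) = \sum_{\set}\hdec(\simple_{\set})$ then expresses the transition cost in terms of the kernel-specific quantity $2\hdec(p/2) - \hdec(p)$ evaluated at the mass $p = \base_{k,\set}$ of each parent leaf (\eg $-\log 2$ per splitting event for the negentropy of \cref{ex:entropy}), which is absorbed into the constant $\cst_{\hdec}$; summation over the $K$ splittings with common denominator $\temp_{\nRuns+1}$ yields the correction $\cst_{\hdec}\log_{2}(\nSets_{\nRuns})/\temp_{\nRuns+1}$. The telescoped regularizer penalty $\sum_{k}(\temp_{\run_{k+1}+1}^{-1} - \temp_{\run_{k}+1}^{-1})[\hreg^{(k)}(\base_{k}) - \min\hreg^{(k)}]$ is in turn dominated by its largest value on the finest partition, $\hreg^{(K)}(\base_{K}) - \min\hreg^{(K)} \leq \hvol(\nSets_{\nRuns})$ (with a routine $\mix$-mollification of $\base_{k}$ by the uniform on $\cover_{k}$ when $\hdec$ is singular at $0$, as for the log-barrier of \cref{ex:Burg}), producing the principal term $\hvol(\nSets_{\nRuns})/\temp_{\nRuns+1}$.

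Adding the four contributions gives \eqref{eq:reg-bound-stat}. The specialization to \eqref{eq:reg-bound-stat-powers} is then obtained by substituting $\temp_{\run} \propto \run^{-\pexp}$, $\scheduler_{\run} = \nSetsExp\log_{2}(\run)$ (so that $\nSets_{\run} \asymp \run^{\nSetsExp}$), $\bbound_{\run} = \bigoh(\run^{-\bexp})$, and $\mbound_{\run}^{2} = \bigoh(\run^{2\mexp})$ into \eqref{eq:reg-bound-stat}, and evaluating the four sums as standard power-sum integrals.
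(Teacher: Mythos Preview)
Your outline for the first bound \eqref{eq:reg-bound-stat} is essentially the paper's proof: the same epoch decomposition, the same use of \cref{lem:point2simple} with the leaf $\set_{k}(\point)$ as comparator neighborhood, the same invocation of \cref{prop:reg-DAX} per epoch, and the same kernel-level bookkeeping for the splitting residue (the paper packages this as a separate lemma with two constants $K_{\hdec}$ and $\alt K_{\hdec}$, the second coming from the $\supnorm{\cdot}$-drift of the score variable across a split, but your description captures both ingredients). Your mollification remark for singular $\hdec$ is actually more careful than the paper, which tacitly uses $\hreg(\base_{k})=0$ and thereby assumes $\hdec(0)=\hdec(1)=0$.

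There is, however, a genuine gap in your passage to \eqref{eq:reg-bound-stat-powers}. The first bound controls $\exof{\reg_{\point}(\nRuns)}$ for a \emph{fixed} $\point$, i.e.\ the pseudo-regret, whereas \eqref{eq:reg-bound-stat-powers} controls the \emph{expected} regret $\exof{\reg(\nRuns)} = \exof{\max_{\point}\reg_{\point}(\nRuns)}$. Since the maximum sits inside the expectation, the second does not follow from the first by substitution of parameters. The obstruction is precisely the error term $\sum_{\run}\braket{\error_{\run}}{\state_{\run}-\base_{k}}$: after conditioning, its mean is bounded by $2\bbound_{\run}$ for any \emph{fixed} comparator, but the random maximum over $\point$ of this sum need not be. The paper handles this by running, purely for analysis, an auxiliary \eqref{eq:DAX} process $\aux_{\run}$ driven by the ``virtual'' payoffs $-\error_{\run}$; applying \cref{prop:reg-DAX} to that ghost process bounds $\max_{\point}\sum_{\run}\braket{\error_{\run}}{\aux_{\run}-\base_{k}}$ by a quantity independent of $\point$, while the residual $\sum_{\run}\braket{\error_{\run}}{\state_{\run}-\aux_{\run}}$ is comparator-free and hence has expectation controlled by $2\bbound_{\run}$ as before. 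This roughly doubles the constants but yields the same exponents. Without this (or an equivalent uniform-in-$\point$ device), your final sentence is unjustified.
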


The proof of \cref{thm:reg-stat} is presented in detail in \cref{app:hierarchical} and hinges on applying \cref{prop:reg-DAX} to bound the regret of \eqref{eq:HDA} on each time window during which the algorithm maintains a constant cover of $\points$.
Aggregating these bounds provides a regret guarantee for \eqref{eq:HDA} over the entire horizon time of play;
however, since \eqref{eq:HDA} is \emph{not} restarted at each window, joining the resulting window-by-window bounds ends up being fairly delicate.
The main dificulties (and associated error terms in the regret) are as follows:
\begin{enumerate}
\item
A comparator for a given time frame may not be admissible for a previous time frame because the granularity of an antecedent cover may not suffice to include the comparator in question.
This propagates a ``resolution error'' that becomes smaller when the cover gets finer, but larger when the window gets longer.

\item
At every splitting event, the algorithm retains the same probability distribution over $\points$ (to avoid restart-forget effects).
However, this introduces a ``splitting residue'' because of the necessary correction in the learner's scores when the resolution of the cover increases.
\end{enumerate}

The above steps are made precise in \cref{app:hierarchical}, where we show how each of these contributing factors can be bounded in an efficient manner.
For the moment, we only note that the template bound of \cref{thm:reg-stat} can be used to derive tight regret bounds for particular instances of
\begin{enumerate*}
[(\itshape i\upshape)]
\item
the estimator sequence $\{\model_{\run}\}_{\run} $ with range $\region$;
\item
the regularizer $\hreg$ which is $\hstr$-\emph{tame} relative to $\region$;
and
\item
the splitting schedule $\{\scheduler_{\run}\}_{\run}$.
\end{enumerate*}

We carry all this out for the \ac{HEW} algorithm below:

\begin{restatable}{corollary}{regstatIWE}
\label{cor:reg-stat-IWE}
If \ac{HEW} is run with learning rate $\temp_{\run} \propto \run^{-\pexp}$ and
the logarithmic splitting schedule $\scheduler_{\run} = \nSetsExp \log_2(\run)$, the learner enjoys the bound
\begin{equation}
\label{eq:reg-bound-stat-powers-cor}
\exof{\reg(\nRuns)}
	= \bigoh(\nRuns^{\pexp} + \nRuns^{1-\nSetsExp / \vdim} + \nRuns^{1+\nSetsExp-\pexp}).
\end{equation}
In particular, if the algorithm is run with $\pexp = (\vdim + 1) / (\vdim + 2)$ and $\nSetsExp = \vdim / (\vdim +2)$ we obtain the bound 
\begin{equation}
\exof{\reg(\nRuns)}
	= \bigoh(\nRuns^{\frac{\vdim+1}{\vdim+2}}).
\end{equation}
\end{restatable}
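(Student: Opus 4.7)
The plan is to apply the template bound of \cref{thm:reg-stat} with the specific parameters of \ac{HEW} (the entropic kernel $\hdec(x)=x\log x$ and the \ac{IWE} estimator). The work reduces to (i) evaluating the constants $\hvol$ and $\cst_{\hdec}$ attached to $\hdec$, (ii) verifying that $\hreg$ is tame relative to the range of \eqref{eq:IWE}, and (iii) producing concrete bounds on the bias $\bbound_{\run}$ and mean-square $\mbound_{\run}^{2}$ of \eqref{eq:IWE} in the Fisher metric. Once these are in place, substituting the logarithmic schedule and the learning rate into \eqref{eq:reg-bound-stat-powers} yields \eqref{eq:reg-bound-stat-powers-cor}, and the optimal exponents follow by balancing the resulting powers of $\nRuns$.

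For (i), a direct computation gives $\hvol(z)=z\hdec(1/z)=-\log z$, so under the schedule $\nSets_{\run}\leq\run^{\nSetsExp}$ the leading regularization term is $\hvol(\nSets_{\nRuns})+\cst_{\hdec}\log_{2}\nSets_{\nRuns}=\bigoh(\log\nRuns)$. For (ii), inspection of \eqref{eq:IWE} shows $\model_{\run,\set}\leq\rbound$ for every $\set$, so the range of \eqref{eq:IWE} lies in the upper-bounded region $\{\dvec\in\dspace:\sup_{\set}\dvec_{\set}\leq\rbound\}$; by the discussion following \cref{def:smooth}, the entropic regularizer is $1$-tame over such regions, so $\hstr=1$ is admissible.

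The crux of the proof is (iii), and it is here that the Fisher metric pays off. A direct calculation of the conditional expectation gives $\exof{\model_{\run}(\point)\given\filter_{\run}}=\leb(\set_{\point})^{-1}\int_{\set_{\point}}\pay_{\run}\,\dd\leb$, so $\braket{\bias_{\run}}{\simple}=0$ for every $\simple\in\simples_{\cover_{\run}}$; since the comparators used window-by-window in the proof of \cref{thm:reg-stat} live in $\simples_{\cover_{\run}}$ (the residual Lipschitz oscillation across each $\set_{\point}$ being already captured by the resolution-error term of \eqref{eq:reg-bound-stat}), we may take $\bbound_{\run}=0$. For the mean-square, a short expansion using the Fisher dual norm $\tnorm{\dvec}^{2}=\sum_{\set}\state_{\set,\run}\dvec_{\set}^{2}$ collapses the ``unsampled'' terms $\set\neq\set_{\run}$ (which carry $\model_{\run,\set}=\rbound$) against the sampled contribution, giving $\exof{\tnorm{\model_{\run}}^{2}\given\filter_{\run}}=\bigoh(\rbound^{2}\,\nSets_{\run})$, hence $\mbound_{\run}^{2}=\bigoh(\run^{\nSetsExp})$ and $2\mexp=\nSetsExp$. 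This is the main obstacle: using a global $L^{1}$ or $L^{2}$ norm in place of the Fisher metric would yield an extra $\nSets_{\run}$ factor and break the target rate entirely.

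With $\bexp=\infty$ (vanishing bias) and $2\mexp=\nSetsExp$, substitution into \eqref{eq:reg-bound-stat-powers} directly gives \eqref{eq:reg-bound-stat-powers-cor} after absorbing the logarithmic factor from $\hvol$ into the $\bigoh$. Finally, balancing the three exponents amounts to solving $\pexp=1-\nSetsExp/\vdim=1+\nSetsExp-\pexp$, whose unique solution is $\pexp=(\vdim+1)/(\vdim+2)$ and $\nSetsExp=\vdim/(\vdim+2)$, producing the announced $\bigoh(\nRuns^{(\vdim+1)/(\vdim+2)})$ bound.
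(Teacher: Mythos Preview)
Your proposal is correct and follows the paper's route: instantiate the template of \cref{thm:reg-stat} with the \ac{IWE} bias and Fisher mean-square bounds (the paper packages these as \cref{lem:HEW-bias-and-meansquare-bound}), then balance exponents. The only difference is that you observe $\braket{\bias_{\run}}{\simple}=0$ exactly for every $\simple\in\simples_{\cover_{\run}}$ and hence take $\bbound_{\run}=0$, whereas the paper uses the cruder pointwise Lipschitz bound $\bbound_{\run}=\bigoh(\nSets_{\run}^{-1/\vdim})$; either choice yields the same final rate because the paper's bias contribution $\nRuns^{1-\nSetsExp/\vdim}$ coincides with the resolution-error term already present in \eqref{eq:reg-bound-stat}, so your sharper observation simply makes transparent that the middle exponent in \eqref{eq:reg-bound-stat-powers-cor} comes solely from the cover's diameter and not from estimator bias.
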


\para{Dynamic regret guarantees}

We now show guarantees of \ac{HDA} in terms of the dynamic regret introduced in \eqref{eq:reg-dyn}. We would like to stress that the expected dynamic regret of an algorithm cannot be bounded without any restriction on the sequence of payoffs \citep{SS11}.
For this reason, dynamic regret guarantees are often stated in terms of the \emph{variation} of the payoff functions $\{\pay_{\run}\}_{\run}$, defined as follows \citep{BGZ15}
\begin{equation}
\label{eq:tvar}
\tvar
	\defeq \insum_{\run=\start}^{\nRuns} \supnorm{\pay_{\run+1} - \pay_{\run}},
\end{equation}
with the convention $\pay_{\run+1} = \pay_{\run}$ for $\run=\nRuns$.
We then have:
\begin{restatable}{theorem}{regdyn}
\label{thm:reg-dyn}
Suppose that \eqref{eq:HDA} is run with the negentropy kernel,
and assumptions as in \cref{thm:reg-stat}.
Then:
\begin{equation}
\label{eq:reg-bound-dyn}
\ex[\dynreg(\nRuns)]
	= \bigoh(\nRuns^{1+2\mexp-\pexp} + \nRuns^{1-\bexp} + \nRuns^{1-\nSetsExp / \vdim} + \nRuns^{2\pexp-2\mexp}\tvar).
\end{equation}
\end{restatable}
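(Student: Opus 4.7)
The plan is to reduce the dynamic regret to a sum of per-window static regrets via a standard blocking argument. Fix a block length $\nBatches$ and partition $\{1,\ldots,\nRuns\}$ into $K = \lceil\nRuns/\nBatches\rceil$ consecutive windows $B_1,\ldots,B_K$; write $\tau_k$ for the first round of $B_k$, so that $B_k = \{\tau_k,\ldots,\tau_{k+1}-1\}$. For each block choose a reference round $s_k\in B_k$ and set $\pi_k \in \argmax_\point \pay_{s_k}(\point)$. Since $\supnorm{\pay_\run - \pay_{\runalt}} \leq \tvar[B_k] \defeq \sum_{s\in B_k}\supnorm{\pay_{s+1}-\pay_s}$ for all $\run,\runalt\in B_k$, a two-step comparison yields $\max_\point\pay_\run(\point) \leq \pay_\run(\pi_k) + 2\tvar[B_k]$ for every $\run\in B_k$, whence
\begin{equation*}
\dynreg(\nRuns) \;\leq\; \sum_{k=1}^K \reg_{\pi_k}(B_k) + 2\nBatches\tvar.
\end{equation*}

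For each window I would invoke \cref{prop:reg-DAX} on $B_k$ combined with \cref{lem:point2simple} to pass from the point comparator $\pi_k$ to a simple strategy supported on a small cell of $\cover_{\tau_{k+1}-1}$, exactly as in the proof of \cref{thm:reg-stat}. This yields
\begin{align*}
\reg_{\pi_k}(B_k) \leq\; & \energy_{\tau_k}-\energy_{\tau_{k+1}}+\bigl(\temp_{\tau_{k+1}}^{-1}-\temp_{\tau_k}^{-1}\bigr)\bracks{\hreg(\pi_k)-\min\hreg}\\
&+2\sum_{\run\in B_k}\bbound_\run+\tfrac{1}{2\hstr}\sum_{\run\in B_k}\temp_\run\mbound_\run^2+\lips\diamconst\diam(\points)\sum_{\run\in B_k}\nSets_\run^{-1/\vdim}.
\end{align*}
The bias, variance, diameter and learning-rate-correction terms are insensitive to the comparator and aggregate across blocks exactly as in the static proof: using $\hreg(\pi_k)-\min\hreg = \bigoh(\log\nSets_\nRuns)$ uniformly for the negentropy kernel, they contribute the first three summands $\bigoh(\nRuns^{1-\bexp})$, $\bigoh(\nRuns^{1+2\mexp-\pexp})$, $\bigoh(\nRuns^{1-\nSetsExp/\vdim})$ of \eqref{eq:reg-bound-dyn}, plus a sub-dominant $\bigoh(\log\nRuns\cdot\nRuns^\pexp)$ from the telescoping learning-rate correction.

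The main technical obstacle is the residual energy $\sum_k[\energy_{\tau_k}-\energy_{\tau_{k+1}}]$: dropping the nonnegative $\energy_{\tau_{k+1}}$ leaves $\sum_k \energy_{\tau_k}$, which does \emph{not} telescope because the comparator $\pi_k$ changes from block to block. This is precisely where the negentropy structure is indispensable, since the logit Fenchel coupling is a Kullback\textendash Leibler divergence: combining the log-sum-exp identity with the entropy bound $-H(\pi_k)\leq \log\nSets_\nRuns$ lets one control each $\fench(\pi_k,\temp_{\tau_k}\score_{\tau_k})$ by $\bigoh(\log\nSets_\nRuns)$ uniformly, so that $\energy_{\tau_k} = \bigoh(\log\nRuns\cdot\temp_{\tau_k}^{-1}) = \bigoh(\log\nRuns\cdot\nRuns^\pexp)$ and summation over the $K=\nRuns/\nBatches$ blocks yields $\bigoh(\log\nRuns\cdot\nRuns^{1+\pexp}/\nBatches)$. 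Choosing $\nBatches = \nRuns^{2\pexp-2\mexp}$ then makes this per-block-energy contribution coalesce with the variance term (since $1+\pexp-(2\pexp-2\mexp) = 1+2\mexp-\pexp$), while $2\nBatches\tvar = 2\nRuns^{2\pexp-2\mexp}\tvar$ gives the fourth summand of \eqref{eq:reg-bound-dyn}, completing the proof.
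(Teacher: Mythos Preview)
Your blocking reduction and the choice of window length $\nBatches = \nRuns^{2\pexp-2\mexp}$ are exactly the paper's argument, and your identification of the per-window energy $\energy_{\tau_k}$ as the delicate point is right on target. The gap is in how you dispose of it.

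You assert that the log-sum-exp identity together with $-H(\pi_k)\le\log\nSets_\nRuns$ yields $\fench(\pi_k,\temp_{\tau_k}\score_{\tau_k})=\bigoh(\log\nSets_\nRuns)$ uniformly. This is false. For the negentropy kernel, $\fench(\simple,\dpoint)=\dkl(\simple\,\|\,\mirror(\dpoint))$; when $\pi_k$ is the point mass on the cell $\set^\ast$ containing the block-comparator, this equals $-\log \state_{\tau_k,\set^\ast}$. The entropy bound only controls the $h(\pi_k)$ summand in $\fench$, not the score-dependent part $\hconj(\temp_{\tau_k}\score_{\tau_k})-\braket{\temp_{\tau_k}\score_{\tau_k}}{\pi_k}=\log\sum_{\set}e^{\temp_{\tau_k}\score_{\tau_k,\set}}-\temp_{\tau_k}\score_{\tau_k,\set^\ast}$. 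With the \eqref{eq:IWE} estimator the coordinate $\score_{\tau_k,\set^\ast}$ can be driven arbitrarily negative (each time $\set^\ast$ is sampled it absorbs a term of order $-\rbound/\state_{\set^\ast,\run}$), so $\state_{\tau_k,\set^\ast}$ can be exponentially small and $\fench(\pi_k,\temp_{\tau_k}\score_{\tau_k})$ is \emph{not} bounded by any function of $\nSets_\nRuns$ alone. A single abrupt change of the optimal cell midway through the horizon already makes $\energy_{\tau_k}$ of order $\tau_k$, not $\tau_k^{\pexp}\log\nRuns$.

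The paper does not go through your route at this step: it simply invokes the windowed static-regret bound of \cref{thm:reg-stat} to obtain $\exof{\reg(\runs_i)}=\bigoh\bigl((i\batch)^{\pexp}+\sum_{\run\in\runs_i}\run^{-\nSetsExp/\vdim}+\sum_{\run\in\runs_i}\run^{-\bexp}+\sum_{\run\in\runs_i}\run^{2\mexp-\pexp}\bigr)$ and then sums over batches, getting $\batch^{\pexp}\nBatches^{1+\pexp}=\bigoh(\nRuns^{1+\pexp-\qexp})$ for the first contribution. In other words, the paper treats the per-batch static regret as a black box rather than trying to bound the Fenchel coupling at $\tau_k$ directly. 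If you want to follow the paper, replace your ``bound $\energy_{\tau_k}$'' step by an appeal to \cref{thm:reg-stat} on each virtual batch; if you want to keep your own decomposition, you need a genuinely different argument for the $\sum_k\energy_{\tau_k}$ term, because the one you wrote does not hold.
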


Finally, with judiciously chosen parameters, our template bound yields the following improvement over previous dynamic regret bounds in the literature:

\begin{restatable}{corollary}{regdynIWE}
\label{cor:reg-dyn-IWE}
Suppose that \ac{HEW} is run with splitting schedule $\scheduler_{\run} = \nSetsExp \log_2(\run)$ and learning rate $\temp_{\run} \propto 1/\run^{\pexp}$.
Then:
\begin{equation}
\label{eq:reg-bound-dyn-IWE}
\exof{\dynreg(\nRuns)}
	= \bigoh(\nRuns^{1+\nSetsExp-\pexp} + \nRuns^{1-\nSetsExp / \vdim} + \nRuns^{2\pexp-\nSetsExp}\tvar).
\end{equation}
Hence, if $\tvar = \bigoh(\nRuns^{\vexp})$ for some $\vexp<1$, setting $\pexp = (1 - \vexp)(\vdim +1)/(\vdim + 3)$ and $\nSetsExp =(1-\vexp)\vdim/(\vdim + 3) $ delivers 
\begin{equation}
\label{eq:reg-bound-dyn-IWE-tuned}
\exof{\dynreg(\nRuns)}=
\bigoh(\nRuns^{(\vdim + 2)/(\vdim + 3)}\tvar^{1/(\vdim+3)}).
\end{equation}
\end{restatable}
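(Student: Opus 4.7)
The plan is to derive this corollary as a direct instantiation of the generic template bound in \cref{thm:reg-dyn}, specialized to the \ac{HEW} algorithm. Since \ac{HEW} is by definition \eqref{eq:HDA} run with the negentropy kernel and the \eqref{eq:IWE} estimator, everything reduces to identifying the bias exponent $\bexp$ and the mean-square exponent $\mexp$ associated with \eqref{eq:IWE} under the prescribed splitting schedule, and then performing the exponent-balancing calculation.

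First, I would compute the bias and variance parameters in \eqref{eq:sigbounds} for \eqref{eq:IWE}. Because \eqref{eq:IWE} is unbiased (its conditional expectation agrees with $\pay_{\run}$ on every cell of the cover up to a constant), we may take $\bbound_{\run}=0$, making the term $\nRuns^{1-\bexp}$ disappear from \eqref{eq:reg-bound-dyn}. For the variance, the dual Fisher norm of $\model_{\run}$ admits the explicit expression
\begin{equation*}
\tnorm{\model_{\run}}^{2}
    = \sum_{\set\in\cover_{\run}} \state_{\set,\run}\,\model_{\run,\set}^{2},
\end{equation*}
and a direct calculation (expanding the square of the loss-based \ac{IWE} and integrating over $\choice_{\run}\sim\state_{\run}$) shows that $\exof{\tnorm{\model_{\run}}^{2}\given\filter_{\run}}\leq \bigoh(\rbound^{2}\nSets_{\run})$. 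The logarithmic splitting schedule $\scheduler_{\run}=\nSetsExp\log_{2}\run$ yields $\nSets_{\run}=2^{\floor{\scheduler_{\run}}}=\bigoh(\run^{\nSetsExp})$, so the variance grows as $\mbound_{\run}^{2}=\bigoh(\run^{\nSetsExp})$, which corresponds to $2\mexp=\nSetsExp$ in the notation of \cref{thm:reg-dyn}. Substituting $\bexp\to\infty$ and $2\mexp=\nSetsExp$ into \eqref{eq:reg-bound-dyn} immediately produces the three-term bound \eqref{eq:reg-bound-dyn-IWE}.

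For the tuned statement, I would then balance the three exponents in \eqref{eq:reg-bound-dyn-IWE} under the assumption $\tvar=\bigoh(\nRuns^{\vexp})$. Setting the exponents of the first two terms equal, i.e.\ $1+\nSetsExp-\pexp=1-\nSetsExp/\vdim$, gives $\pexp=\nSetsExp(\vdim+1)/\vdim$; equating the second and third exponents, $1-\nSetsExp/\vdim=2\pexp-\nSetsExp+\vexp$, and substituting yields $\nSetsExp=(1-\vexp)\vdim/(\vdim+3)$, whence $\pexp=(1-\vexp)(\vdim+1)/(\vdim+3)$, exactly as stated. The common exponent of $\nRuns$ evaluates to $(\vdim+2+\vexp)/(\vdim+3)$, and since $\tvar^{1/(\vdim+3)}=\bigoh(\nRuns^{\vexp/(\vdim+3)})$, this can equivalently be written as $\nRuns^{(\vdim+2)/(\vdim+3)}\tvar^{1/(\vdim+3)}$, yielding \eqref{eq:reg-bound-dyn-IWE-tuned}.

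The only delicate step is the variance computation for \eqref{eq:IWE} in the Fisher norm; the key point, which distinguishes this analysis from the one of \citet{HMMR20}, is that the relevant norm is the \emph{state-dependent} dual Fisher norm, not a global $\supnorm{\cdot}$ bound, so the blow-up from the importance-weighting factor $1/\state_{\set_{\run},\run}$ is precisely cancelled by the weighting $\state_{\set,\run}$ in the definition of $\tnorm{\cdot}$. Once this cancellation is performed cleanly, the remainder of the argument is a routine balancing of exponents; no additional control over the splitting residue or resolution error is needed beyond what is already absorbed into \cref{thm:reg-dyn}.
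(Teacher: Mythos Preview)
Your overall approach---instantiating \cref{thm:reg-dyn} with the \ac{IWE}-specific bias and variance exponents and then balancing---is exactly the paper's, and your variance computation ($\mbound_{\run}^{2}=\bigoh(\nSets_{\run})=\bigoh(\run^{\nSetsExp})$, hence $2\mexp=\nSetsExp$) and the exponent balancing are both correct.

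There is one genuine error, however: your claim that \eqref{eq:IWE} is unbiased is false in this continuous setting. The conditional expectation of $\model_{\run}$ at a point $\point$ equals the \emph{cell average} $\bar{\pay}_{\run}(\set_{\run}^{\point}) = \leb(\set_{\run}^{\point})^{-1}\int_{\set_{\run}^{\point}}\pay_{\run}$, not $\pay_{\run}(\point)$ itself; the estimator is piecewise constant on $\cover_{\run}$ and hence cannot be unbiased for a non-constant $\pay_{\run}$. The bias is therefore $\bias_{\run}(\point)=\pay_{\run}(\point)-\bar{\pay}_{\run}(\set_{\run}^{\point})$, which by \cref{asm:pay} is bounded by $\lips\,\diam(\set_{\run}^{\point})=\bigoh(\nSets_{\run}^{-1/\vdim})=\bigoh(\run^{-\nSetsExp/\vdim})$. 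This gives $\bexp=\nSetsExp/\vdim$, not $\bexp=\infty$. Your conclusion survives only because the resulting bias contribution $\nRuns^{1-\bexp}=\nRuns^{1-\nSetsExp/\vdim}$ happens to coincide with the resolution-error term already present in \eqref{eq:reg-bound-dyn}, so it is absorbed and \eqref{eq:reg-bound-dyn-IWE} is unaffected. But the unbiasedness claim itself is wrong and should be replaced by the Lipschitz-based bias bound (this is precisely the content of \cref{lem:HEW-bias-and-meansquare-bound} in the paper).
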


This result was conjectured by \citet{HMMR20} and, as far as we are aware, this is the first time it is achieved.
The main limiting factor in the kernel-based approach of \citet{HMMR20} is that it requires the introduction of an explicit exploration term;
in turn, this leads to an unavoidable extra term in the regret, and to suboptimal regret bounds.
The importance of the proposed splitting mechanism is that it does not require a kernel to smooth \eqref{eq:IWE}, and the use of the Fisher information metric allows us to control the variance of \eqref{eq:IWE} without introducing an explicit exploration error.

We suspect that the above bound is min-max optimal, but we are not aware of any lower bounds for the dynamic regret against non-convex Lipschitz losses \textendash\ this is actually stated as an open problem in the paper of \citet{BGZ15}.
In particular, the analysis of \citet[Theorem 2]{BGZ15} seems to suggest that, if an informed adversary can impose $\Omega(\nRuns^{q})$ \emph{static} regret, they can also impose $\Omega(T^{1/(2-q)} \budget_{\nRuns}^{(1-q)/(2-q)})$ \emph{dynamic} regret.
If this conjecture is true, this would mean that our bound is itself tight, because the static regret exponent $q=(\vdim+1)/(\vdim+2)$ is well known to be tight for Lipschitz losses.

\begin{remark}
We should also state here that \eqref{eq:HDA} is not parameter-free, as it implicitly requires knowledge of an upper bound for $\budget_{\nRuns}$.
As one of the reviewers pointed out, this requirement \textendash\ which was stated as an open problem in the work of \citet{BGZ15} \textendash\ has been partially resolved for (stochastic) multi-armed and contextual bandits by \citet{ACGL+19};
we are not aware of a similar result for adversarial online non-convex optimization problems.
\end{remark}


\para{Numerical experiments}

\begin{figure}[tbp]
\centering
\includegraphics[width=0.42\textwidth]{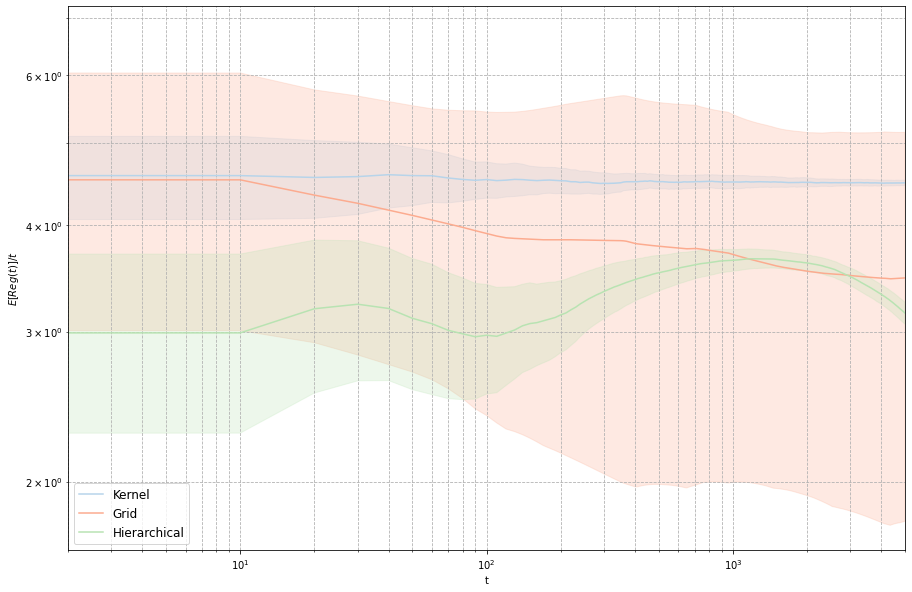}
\caption{Expected average regret over 92 realizations for each algorithm (solid line).
The shaded area represents one standard deviation from the mean. The three strategies compared are ''Hierarchical'' (ours), ''Grid'' being a naive discretized mesh on the search space, and  the ``Kernel'' strategy from \citep{HMMR20}}
\vspace{-2ex}
\label{fig:regret}
\end{figure}

For illustration purposes, \cref{fig:regret} provides some numerical experiments on different no-regret policies discussed in the rest of our paper. 
Specifically, we compared $3$ strategies, ``Grid'', ``Kernel'' and ``Hierarchical'', and plot the current instantaneous regret w.r.t. the current round $t$. The shaded area representing the instantaneous variance of such regret, each strategies being launched with multiple initialized seed (92). 
First the ``Hierarchical'' method is as outlined in \cref{sec:hierarchical} with parameters of the algorithm described below.
Second the ``Grid'' method involves partitioning the search space $\points$ into a regular grid of a given mesh-size. This \textit{a priori} discretization level constitute the algorithm hyperparameter. The ``Grid'' then treats the problem as a finite-armed bandit on the latter discretized search space, applying the EXP3 algorithm \citep{ACBF02}. Finally, the ``Kernel'' strategy is based on \citet{HMMR20}, using a squared-kernel based estimate. The adversarial function is analytic and randomly drawn, with known maximum.
We present the full details of our experiments in \cref{app:numerics}.


\appendix
\numberwithin{equation}{section}		
\numberwithin{lemma}{section}		
\numberwithin{proposition}{section}		
\numberwithin{theorem}{section}		


\section{Fisher regularizers and their properties}
\label{app:Fisher}

Our goal here is to formally state and prove some basic properties for the regularizer functions that underlie the definition of \eqref{eq:DAX}.
These properties are relatively well-known in the literature in the case where $\hreg$ is strongly convex relative to a global, reference norm;
however, the use of the Fisher information metric introduces a number of complications that necessitate a more careful treatment.

We begin by recalling the basic setup of \eqref{eq:DAX}, as formalized in \cref{sec:dualavg} for a fixed cover $\sets$ of $\points$.
In particular, we will write $\dspace$ for the space of piecewise constant functions on $\sets$
and
$\simples_{\sets}$ for the space of probability distributions supported on $\sets$.
Then, given $\tvec\in\pspace$ and $\dpoint \in \dspace$, we define respectively the primal and dual Fisher norm relative to $\simple\in\simples_{\sets}$ as
\begin{subequations}
\label{eq:Fisher-norm}
\begin{align}
\label{eq:Fisher-primal}
\norm{\tvec}_{\simple}^{2}
	&= \int_{\points} \frac{\tvec(\point)^{2}}{\simple(\point)} \dd\leb(\point)
	= \sum_{\set\in\sets} \frac{\tvec_{\set}^{2}}{\simple_{\set}},
	\\
\label{eq:Fisher-dual}
\norm{\dpoint}_{\simple,\ast}^{2}
	&= \int_{\points} \simple(\point) \dpoint(\point)^{2} \dd\leb(\point)
	= \sum_{\set\in\sets} \simple_{\set} \dpoint_{\set}^{2}.
\end{align}
\end{subequations}
We then have the following basic lemma.

\begin{lemma}
\label{lem:Fisher}
With notation as above, we have
\begin{enumerate}
\item
$\norm{\dpoint}_{\simple,\ast} = \max\setdef{\abs{\braket{\dpoint}{\tvec}}}{\norm{\tvec}_{\simple} = 1}$.
\item
$\norm{\cdot}_{\simple} \geq \onenorm{\cdot}$ and $\norm{\cdot}_{\simple,\ast} \leq \supnorm{\cdot}$.
\end{enumerate}
\end{lemma}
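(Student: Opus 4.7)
\medskip

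The plan is to prove both parts as direct consequences of the Cauchy--Schwarz inequality applied with appropriate weights given by $\simple$. Throughout I will work with the discrete representations in \eqref{eq:Fisher-norm}, viewing elements of $\pspace$ and $\dspace$ as vectors indexed by $\sets$, with pairing $\braket{\dpoint}{\tvec} = \sum_{\set\in\sets} \dpoint_{\set} \tvec_{\set}$.

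For part 1, I would first establish the upper bound by the weighted Cauchy--Schwarz factorization
\[
\abs{\braket{\dpoint}{\tvec}}
	= \Bigl\lvert \sum_{\set\in\sets} \bigl(\sqrt{\simple_{\set}} \dpoint_{\set}\bigr) \cdot \bigl(\tvec_{\set}/\sqrt{\simple_{\set}}\bigr) \Bigr\rvert
	\leq \norm{\dpoint}_{\simple,\ast} \norm{\tvec}_{\simple},
\]
where on the coordinates with $\simple_{\set}=0$ both factors can be taken to vanish (the $\simple$-a.s.\ conventions handle this in the integral form). To match this bound, I would exhibit an explicit maximizer: set $\tvec_{\set} = c \, \simple_{\set} \dpoint_{\set}$ with $c$ chosen so that $\norm{\tvec}_{\simple} = 1$, i.e.\ $c = 1/\norm{\dpoint}_{\simple,\ast}$. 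A quick computation then gives $\braket{\dpoint}{\tvec} = c \sum_{\set} \simple_{\set} \dpoint_{\set}^{2} = \norm{\dpoint}_{\simple,\ast}$, proving the characterization.

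For part 2, both inequalities also follow from Cauchy--Schwarz, but with the weights distributed differently. For the primal bound I would write
\[
\onenorm{\tvec}^{2}
	= \Bigl(\sum_{\set\in\sets} \sqrt{\simple_{\set}} \cdot \abs{\tvec_{\set}}/\sqrt{\simple_{\set}}\Bigr)^{2}
	\leq \Bigl(\sum_{\set\in\sets} \simple_{\set}\Bigr) \Bigl(\sum_{\set\in\sets} \tvec_{\set}^{2}/\simple_{\set}\Bigr)
	= \norm{\tvec}_{\simple}^{2},
\]
using $\sum_{\set} \simple_{\set} = 1$. For the dual bound, I would simply estimate
\[
\norm{\dpoint}_{\simple,\ast}^{2}
	= \sum_{\set\in\sets} \simple_{\set} \dpoint_{\set}^{2}
	\leq \supnorm{\dpoint}^{2} \sum_{\set\in\sets} \simple_{\set}
	= \supnorm{\dpoint}^{2}.
\]

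The only minor subtlety is handling coordinates with $\simple_{\set} = 0$, which render the primal norm infinite unless $\tvec_{\set} = 0$ also; the standard convention $0/0 = 0$ (or restricting to $\tvec \ll \simple$, as already implicit in the integral definition of the Fisher metric) makes both inequalities and the duality statement well-posed. No step here is genuinely hard — the entire lemma is a two-line exercise in Cauchy--Schwarz with $\simple$ playing the role of the weight measure.
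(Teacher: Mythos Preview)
Your proof is correct and follows essentially the same approach as the paper: both parts are handled via weighted Cauchy--Schwarz, with the same factorization $\dpoint_{\set}\tvec_{\set} = (\sqrt{\simple_{\set}}\,\dpoint_{\set})(\tvec_{\set}/\sqrt{\simple_{\set}})$ and the same maximizer $\tvec_{\set}\propto\simple_{\set}\dpoint_{\set}$ for part~1, and the same Cauchy--Schwarz argument for $\norm{\cdot}_{\simple}\geq\onenorm{\cdot}$ in part~2. The only cosmetic difference is that the paper derives $\norm{\cdot}_{\simple,\ast}\leq\supnorm{\cdot}$ by dualizing the primal inequality, whereas you give the (equally trivial) direct estimate; neither choice offers any real advantage over the other.
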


\begin{proof}
For the first part of our claim, an application of the Cauchy-Schwarz inequality gives
\begin{equation}
\abs{\braket{\dpoint}{\tvec}}
	= \abs*{\sum_{\set\in\sets} \tvec_{\set} \dpoint_{\set}}
	= \abs*{\sum_{\set\in\sets} \frac{\tvec_{\set}}{\sqrt{\simple_{\set}}} \cdot \sqrt{\simple_{\set}} \dpoint_{\set}}
	\leq \norm{\tvec}_{\simple} \cdot \norm{\dpoint}_{\simple,\ast}.
\end{equation}
Since equality is attained when $\tvec_{\set} \propto \simple_{\set} \dpoint_{\set}$, maximizing over the Fisher unit sphere $\norm{\tvec}_{\simple} = 1$ yields the desired result.

For the second part of our claim, a second application of the Cauchy-Schwarz inequality readily gives
\begin{equation}
\sum_{\set\in\sets} \frac{\tvec_{\set}^{2}}{\simple_{\set}}
	= \sum_{\set\in\sets} \simple_{\set} \cdot \sum_{\set\in\sets} \frac{\tvec_{\set}^{2}}{\simple_{\set}}
	\geq \parens*{ \sum_{\set\in\sets} \sqrt{\simple_{\set}} \frac{\abs{\tvec_{\set}}}{\sqrt{\simple_{\set}}} }^{2}
	= \onenorm{\tvec}^{2},
\end{equation}
\ie $\norm{\cdot}_{\simple} \geq \onenorm{\cdot}$, as claimed.
The inequality $\norm{\cdot}_{\simple,\ast} \leq \supnorm{\cdot}$ then follows by taking duals.
\end{proof}

To proceed, recall that the convex conjugate $\hconj$ of $\hreg$ is defined as
\begin{equation}
\label{eq:conj}
\hreg^{\ast}(\dpoint)
	= \sup_{\simple\in\simples_{\sets}} \{ \braket{\dpoint}{\simple} - \hreg(\simple) \}.
\end{equation}
Since $\hreg$ is assumed strongly convex relative to the Fisher information metric, \cref{lem:Fisher} shows that it is also strongly convex relative to the $\ell_{1}$-norm on $\simples_{\sets}$.
As a result, the supremum in \eqref{eq:conj} is always attained, and $\hreg^{\ast}(\dpoint)$ is finite for all $\dpoint\in\dspace$ \citep{BC17}.
Moreover, by standard results in convex analysis \citep[Chap.~26]{Roc70}, it follows that $\hreg^{\ast}$ is differentiable on $\dspace$;
finally, by Danskin's theorem \citep[Chap.~4]{Ber97}, its gradient satisfies the identity
\begin{equation}
\label{eq:dconj}
\nabla\hreg^{\ast}(\dpoint)
	= \argmax_{\simple\in\simples_{\sets}} \{ \braket{\dpoint}{\simple} - \hreg(\simple) \}.
\end{equation}
Thus, recalling the definition \eqref{eq:choice} of the choice map $\mirror\from\dspace\to\simples_{\sets}$, we get the equivalent expression
\begin{equation}
\label{eq:mirror}
\mirror(\dpoint)
	= \nabla\hreg^{\ast}(\dpoint).
\end{equation}
For convenience and concision, any regularizer as above will be referred to as a \emph{Fisher regularizer} on $\simples_{\sets}$.

With this background in hand, we proceed to prove some auxiliary results and estimates that are used throughout the analysis of \cref{sec:dualavg,sec:results}.
The first concerns the basic primal-dual properties of the choice map $\mirror$.
\begin{lemma}
\label{lem:mirror}
Let $\hreg$ be a Fisher regularizer on $\simples_{\sets}$.
Then, for all $\ppoint\in\dom\subd\hreg$ and all $\dpoint,\dvec\in\dspace$, we have:
\begin{subequations}
\label{eq:links}
\begin{alignat}{4}
\label{eq:links-mirror}
&a)&
	&\;\;
	\ppoint = \mirror(\dpoint)
	&\;\iff\;
	&\dpoint \in \subd\hreg(\ppoint).
	\\
\label{eq:links-prox}
&b)&
	&\;\;
	\new\ppoint = \mirror(\nabla\hreg(\ppoint) + \dvec)
	&\;\iff\;
	&\nabla\hreg(\ppoint) + \dvec \in \subd\hreg(\new\ppoint)
	\hspace{20em}
\end{alignat}
\end{subequations}
Finally, if $\ppoint = \mirror(\dpoint)$ and $\base\in\simples_{\sets}$, we have
\begin{equation}
\label{eq:selection}
\braket{\nabla\hreg(\ppoint)}{\ppoint - \base}
	\leq \braket{\dpoint}{\ppoint - \base}.
\end{equation}
\end{lemma}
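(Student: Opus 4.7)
The plan is to derive all three parts from standard convex duality for the regularizer $\hreg$ on $\simples_{\sets}$, combined with the identity $\mirror = \nabla\hconj$ already established in \eqref{eq:mirror}. No heavy computation should be needed: once one unpacks the definitions, each statement reduces either to the Fenchel--Young duality or to a first-order optimality condition on the convex set $\simples_{\sets}$.

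For part (a), I would invoke the Fenchel--Young equivalence. Since $\hreg$ is proper, closed, and strongly convex relative to the Fisher metric (hence relative to $\onenorm{\cdot}$ by \cref{lem:Fisher}) on the compact set $\simples_{\sets}$, its conjugate $\hconj$ is finite and differentiable on all of $\dspace$, with $\nabla\hconj(\dpoint) = \mirror(\dpoint)$ by \eqref{eq:mirror}. The standard equivalence
\[
\dpoint \in \subd\hreg(\ppoint) \iff \ppoint \in \subd\hconj(\dpoint)
\]
then reduces, via differentiability of $\hconj$, to $\ppoint = \nabla\hconj(\dpoint) = \mirror(\dpoint)$, which is exactly \eqref{eq:links-mirror}. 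Part (b) is then an immediate corollary: substitute $\dpoint \gets \nabla\hreg(\ppoint) + \dvec$ in the equivalence of part (a) and \eqref{eq:links-prox} drops out.

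For part (c), I would use the variational characterization of $\ppoint = \mirror(\dpoint)$ as the maximizer over $\simples_{\sets}$ of the differentiable concave function $\Phi(\simple) = \braket{\dpoint}{\simple} - \hreg(\simple)$. The first-order optimality condition on the convex feasible set $\simples_{\sets}$ reads
\[
\braket{\nabla\Phi(\ppoint)}{\base - \ppoint} \leq 0 \quad \text{for all } \base \in \simples_{\sets},
\]
and since $\nabla\Phi(\ppoint) = \dpoint - \nabla\hreg(\ppoint)$, rearranging yields $\braket{\nabla\hreg(\ppoint)}{\ppoint - \base} \leq \braket{\dpoint}{\ppoint - \base}$, which is \eqref{eq:selection}.

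The only subtlety is ensuring that $\nabla\hreg(\ppoint)$ is a bona fide gradient (rather than a mere subgradient) wherever it appears. The decomposability assumption $\hreg(\simple) = \sum_{\set} \hdec(\simple_{\set})$ with $\hdec \in C^{2}((0,1])$ makes this automatic as soon as $\ppoint$ has full support on $\sets$ — which is exactly what the hypothesis $\ppoint \in \dom\subd\hreg$ encodes for the essentially smooth regularizers of interest (negentropy, log-barrier). I expect this boundary bookkeeping to be the only mildly delicate point; everything else follows from textbook convex analysis once the identification $\mirror = \nabla\hconj$ is in place.
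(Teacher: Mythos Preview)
Your proposal is correct and rests on the same convex-analytic ingredients as the paper, with only cosmetic differences in presentation. For part~(a) the paper argues directly from the optimality condition of the defining $\argmax$, writing $\ppoint = \mirror(\dpoint) \iff 0 \in \dpoint - \subd\hreg(\ppoint)$, whereas you route through the Fenchel--Young equivalence $\dpoint \in \subd\hreg(\ppoint) \iff \ppoint \in \subd\hconj(\dpoint)$ and then use $\subd\hconj = \{\nabla\hconj\} = \{\mirror\}$; both are one-liners. Part~(b) is handled identically in both approaches. For \eqref{eq:selection} the paper restricts first to $\base \in \relint\simples_{\sets}$, introduces the one-variable function $\phi(t) = \hreg(\ppoint + t(\base - \ppoint)) - \hreg(\ppoint) - \braket{\dpoint}{\ppoint + t(\base-\ppoint)}$, and deduces the claim from $\phi'(0) \geq 0$; your argument applies the variational inequality $\braket{\dpoint - \nabla\hreg(\ppoint)}{\base - \ppoint} \leq 0$ directly on $\simples_{\sets}$. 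Your route is the more economical of the two, and the paper's one-variable reduction ultimately encodes exactly the same first-order condition.
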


\begin{remark*}
Note that \eqref{eq:links-prox} directly implies that $\subd\hreg(\new\simple) \neq \varnothing$, \ie $\new\simple \in \dom\subd\hreg$ for all $\dvec\in\dspace$.
An immediate consequence of this is that the update rule $\new\simple = \mirror(\dpoint + \dvec)$ is \emph{well-posed} for all $\dpoint\in\dspace$, $\dvec\in\region$,
\ie it can be iterated in perpetuity.
\end{remark*}

\begin{proof}[Proof of \cref{lem:mirror}]
To prove \eqref{eq:links-mirror}, note that $\ppoint$ solves \eqref{eq:dconj} if and only if $\dpoint - \subd\hreg(\ppoint) \ni 0$, \ie if and only if $\dpoint\in\subd\hreg(\ppoint)$.
\cref{eq:links-prox} is then obtained in the same manner.

For the inequality \eqref{eq:selection}, it suffices to show it holds for all $\base\in\relint\simples_{\sets}$ (by continuity).
To do so, let
\begin{equation}
\phi(t)
	= \hreg(\ppoint + t(\base-\ppoint))
	- \bracks{\hreg(\ppoint) +  \braket{\dpoint}{\ppoint + t(\base-\ppoint)}}.
\end{equation}
Since $\hreg$ is strongly convex relative to the Fisher metric and $\dpoint\in\subd\hreg(\ppoint)$ by \eqref{eq:links-mirror}, it follows that $\phi(t)\geq0$ with equality if and only if $t=0$.
Moreover, note that $\psi(t) = \braket{\nabla\hreg(\ppoint + t(\base-\ppoint)) - \dpoint}{\base - \ppoint}$ is a continuous selection of subderivatives of $\phi$.
Since $\phi$ and $\psi$ are both continuous on $[0,1]$, it follows that $\phi$ is continuously differentiable and $\phi' = \psi$ on $[0,1]$.
Thus, with $\phi$ convex and $\phi(t) \geq 0 = \phi(0)$ for all $t\in[0,1]$, we conclude that $\phi'(0) = \braket{\nabla\hreg(\ppoint) - \dpoint}{\base - \ppoint} \geq 0$, from which our claim follows.
\end{proof}

We now proceed to prove the basic properties of $\hreg$ and $\hconj$ relative to the primal and dual Fisher norms respectively.
For convenience, we restate the relevant result below.

\smooth*

\begin{proof}[Proof of \cref{lem:smooth}]
We begin with the direct implication ``$(1)\implies(2)$''.
For convenience, let $\new\dpoint = \dpoint + \dvec$, and set $\ppoint = \mirror(\dpoint)$, $\new\ppoint = \mirror(\new\dpoint)$.
We then have:
\begin{align}
\label{eq:hconj-1}
\hconj(\new\dpoint)	
		- \hconj(\dpoint)
		- \braket{\dvec}{\ppoint}
	&= \hconj(\new\dpoint)
		- \braket{\new\dpoint}{\ppoint}
		+ \hreg(\ppoint)
	\notag\\
	&= \braket{\new\dpoint}{\new\ppoint}
		- \hreg(\new\ppoint)
		- \braket{\new\dpoint}{\ppoint}
		+ \hreg(\ppoint)
	\notag\\
	&= \hreg(\ppoint)
		- \hreg(\new\ppoint)
		- \braket{\new\dpoint}{\ppoint - \new\ppoint}.
\end{align}
However, by \cref{lem:mirror}, we also have $\dpoint \in \subd\hreg(\ppoint)$ and $\new\dpoint \in \subd\hreg(\new\ppoint)$.
Hence, by the strong convexity of $\hreg$ relative to the Fisher information metric, we readily get
\begin{equation}
\label{eq:hreg-1}
\hreg(\new\ppoint)
	\geq \hreg(\ppoint)
		+ \braket{\dpoint}{\new\ppoint - \ppoint}
		+ \frac{\hstr}{2} \norm{\new\ppoint - \ppoint}_{\ppoint}^{2}.
\end{equation}
Therefore, substituting \eqref{eq:hreg-1} into \eqref{eq:hconj-1} and rearranging, we obtain
\begin{align}
\label{eq:hconj-2}
\hconj(\new\dpoint)	
		- \hconj(\dpoint)
		- \braket{\dvec}{\ppoint}
	&= \hreg(\ppoint)
		- \hreg(\new\ppoint)
		- \braket{\new\dpoint}{\ppoint - \new\ppoint}
	\notag\\
	&\leq \braket{\dpoint}{\ppoint - \new\ppoint}
		- \frac{\hstr}{2} \norm{\new\ppoint - \ppoint}_{\ppoint}^{2}
		- \braket{\new\dpoint}{\ppoint - \new\ppoint}
	\notag\\
	&= \braket{\new\dpoint - \dpoint}{\new\ppoint - \ppoint}
		- \frac{\hstr}{2} \norm{\new\ppoint - \ppoint}_{\ppoint}^{2}
	\notag\\
	&\leq \frac{\hstr}{2} \norm{\new\ppoint - \ppoint}_{\ppoint}^{2}
		+ \frac{1}{2\hstr} \norm{\new\dpoint - \dpoint}_{\ppoint,\ast}^{2}
		- \frac{\hstr}{2} \norm{\new\ppoint - \ppoint}_{\ppoint}^{2}
	\notag\\
	&= \frac{1}{2\hstr} \norm{\new\dpoint - \dpoint}_{\ppoint,\ast}^{2}
\end{align}
where, in the last line, we used \cref{lem:Fisher} to apply the Fenchel\textendash Young inequality to the convex function $\phi(\cdot) = (\hstr/2) \norm{\cdot}_{\ppoint}^{2}$ and its conjugate $\phi^{\ast}(\cdot) = 1/(2\hstr) \norm{\cdot}_{\ppoint,\ast}^{2}$.
Our claim then follows by a trivial rearrangement of \eqref{eq:hconj-2}.

For the converse direction ``$(2)\implies(1)$'', fix some $\ppoint,\new\ppoint \in \relint\simples_{\sets}$, and let $\dpoint\in\subd\hreg(\ppoint)$, $\new\dpoint\in\subd\hreg(\new\ppoint)$.
Then, reversing \eqref{eq:hconj-1} gives
\begin{align}
\label{eq:hreg-2}
\hreg(\new\ppoint)
		- \hreg(\ppoint)
		- \braket{\dpoint}{\new\ppoint - \ppoint}
	&= \hconj(\dpoint)
		- \hconj(\new\dpoint)
		+ \braket{\new\dpoint - \dpoint}{\new\ppoint}.
\end{align}
However, by the Lipschitz smoothness of $\hconj$, we have
\begin{equation}
\hconj(\new\dpoint)
	\leq \hconj(\dpoint)
		+ \braket{\ppoint}{\new\dpoint - \dpoint}
		+ \frac{1}{2\hstr} \norm{\new\dpoint - \dpoint}_{\ppoint,\ast}^{2}
\end{equation}
and hence
\begin{align}
\hreg(\ppoint)
		- \hreg(\new\ppoint)
		+ \braket{\dpoint}{\new\ppoint - \ppoint}
	&= \hconj(\new\dpoint)
		- \hconj(\dpoint)
		- \braket{\new\dpoint - \dpoint}{\new\ppoint}
	\notag\\
	&= \hconj(\new\dpoint)
		- \hconj(\dpoint)
		- \braket{\new\dpoint - \dpoint}{\ppoint}
		- \braket{\new\dpoint - \dpoint}{\new\ppoint - \ppoint}
	\notag\\
	&\leq \frac{1}{2\hstr} \norm{\new\dpoint - \dpoint}_{\ppoint,\ast}^{2}
		- \frac{1}{2\hstr} \norm{\new\dpoint - \dpoint}_{\ppoint,\ast}^{2}
		- \frac{\hstr}{2} \norm{\new\ppoint - \ppoint}_{\ppoint}^{2}
	\notag\\
	&= - \frac{\hstr}{2} \norm{\new\ppoint - \ppoint}_{\ppoint}^{2},
\end{align}
where we used the Fenchel\textendash Young inequality as above.
Our claim then follows by rearranging.
\end{proof}

We now proceed to establish some of the basic properties for the Fenchel coupling
\begin{equation}
\fench(\base,\dpoint)=\hreg(\base)+\hreg^{\ast}(\dpoint)-\braket{\dpoint}{\base}.
\end{equation}
The first property we present is a primal-dual analogue of the so-called ``three-point identity'' that is commonly used in the theory of Bregman functions \citep{CT93}.

\begin{lemma}
\label{lem:3point}
With notation as above, we have:
\begin{equation}
\label{eq:3point}
\fench(\base,\new\dpoint)
	= \fench(\base,\dpoint)
	+ \fench(\ppoint,\new\dpoint)
	+ \braket{\new\dpoint-\dpoint}{\ppoint - \base}.
\end{equation}
\end{lemma}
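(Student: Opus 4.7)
The plan is to prove the identity by direct algebraic expansion of both sides, using the Fenchel--Young equality condition that $\ppoint = \mirror(\dpoint)$ guarantees. This is fundamentally a computation rather than a conceptual proof, so the main work is bookkeeping.

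First, I would recall the crucial consequence of $\ppoint = \mirror(\dpoint)$: by \cref{lem:mirror} (specifically \eqref{eq:links-mirror}), we have $\dpoint \in \subd\hreg(\ppoint)$, and equivalently $\ppoint$ attains the supremum in the definition of $\hconj(\dpoint)$. This yields the Fenchel--Young equality
\begin{equation*}
\hreg^{\ast}(\dpoint) + \hreg(\ppoint) = \braket{\dpoint}{\ppoint},
\qquad \text{equivalently} \qquad \fench(\ppoint,\dpoint) = 0.
\end{equation*}
This is the single analytical input of the proof; everything else is rearrangement.

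Next, I would expand the right-hand side of \eqref{eq:3point} directly from the definition of $\fench$:
\begin{align*}
\fench(\base,\dpoint) + \fench(\ppoint,\new\dpoint)
	&+ \braket{\new\dpoint - \dpoint}{\ppoint - \base}
	\\
	={}& \bigl[\hreg(\base) + \hconj(\dpoint) - \braket{\dpoint}{\base}\bigr]
	\\
	&+ \bigl[\hreg(\ppoint) + \hconj(\new\dpoint) - \braket{\new\dpoint}{\ppoint}\bigr]
	\\
	&+ \braket{\new\dpoint}{\ppoint} - \braket{\new\dpoint}{\base} - \braket{\dpoint}{\ppoint} + \braket{\dpoint}{\base}.
\end{align*}
Collecting terms, the $\braket{\dpoint}{\base}$ cancel, the two occurrences of $\braket{\new\dpoint}{\ppoint}$ cancel, and one is left with
\begin{equation*}
\hreg(\base) + \hconj(\new\dpoint) - \braket{\new\dpoint}{\base}
	\;+\; \bigl[\hconj(\dpoint) + \hreg(\ppoint) - \braket{\dpoint}{\ppoint}\bigr].
\end{equation*}
The bracketed expression is exactly $\fench(\ppoint,\dpoint)$, which vanishes by the Fenchel--Young equality noted above, leaving precisely $\fench(\base,\new\dpoint)$. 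This completes the identity.

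There is no real obstacle here: the identity is a two-line computation once one recognizes that $\ppoint = \mirror(\dpoint)$ forces $\fench(\ppoint,\dpoint) = 0$. The lemma should be read as the primal--dual analogue of the Bregman three-point identity $\breg(\base,\new\ppoint) = \breg(\base,\ppoint) + \breg(\ppoint,\new\ppoint) + \braket{\nabla\hreg(\ppoint) - \nabla\hreg(\new\ppoint)}{\base - \ppoint}$, with the role of $\nabla\hreg$ played by $\mirror = \nabla\hconj$ acting in the dual direction.
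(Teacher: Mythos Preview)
Your proof is correct and takes essentially the same approach as the paper: both arguments are direct algebraic expansions of the Fenchel coupling, relying on the single fact that $\ppoint = \mirror(\dpoint)$ forces the Fenchel--Young equality $\hconj(\dpoint) + \hreg(\ppoint) = \braket{\dpoint}{\ppoint}$. The paper computes $\fench(\base,\new\dpoint) - \fench(\base,\dpoint)$ and simplifies forward, whereas you expand the right-hand side and cancel; these are the same computation read in opposite directions.
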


\begin{proof}
By alternating the dual point of comparison in the definition of the Fenchel coupling, we have:
\begin{subequations}
\begin{align}
\label{eq:Fench-a}
\fench(\base,\new\dpoint)
	&= \hreg(\base) +\hreg^{\ast}(\new\dpoint) - \braket{\new\dpoint}{\base}
	\\
\label{eq:Fench-b}
\fench(\base,\dpoint)\hphantom{'}
	&= \hreg(\base) + \hreg^{\ast}(\dpoint) - \braket{\dpoint}{\base}.
\end{align}
\end{subequations}
Then, by subtracting \eqref{eq:Fench-a} from \eqref{eq:Fench-b}, we get:
\begin{align*}
\fench(\base,\new\dpoint)-\fench(\base,\dpoint)
	&= \hreg(\base)
		+ \hreg^{\ast}(\new\dpoint)
		- \braket{\new\dpoint}{\base}
		- \hreg(\base)
		- \hreg^{\ast}(\dpoint)
		+ \braket{\dpoint}{\base}
	\notag\\
	&=\hreg^{\ast}(\new\dpoint)
		- \hreg^{\ast}(\dpoint)
		- \braket{\new\dpoint - \dpoint}{\base}
	\notag\\
	&=\hreg^{\ast}(\new\dpoint)
		- \braket{\dpoint}{\mirror(\dpoint)}
		+ \hreg(\mirror(\dpoint))
		- \braket{\new\dpoint - \dpoint}{\base}
	\notag\\
	&=\hreg^{\ast}(\new\dpoint)
		- \braket{\dpoint}{\ppoint}
		+ \hreg(\ppoint)
		- \braket{\new\dpoint - \dpoint}{\base}
	\notag\\
	&= \hreg^{\ast}(\new\dpoint)
		+ \braket{\new\dpoint - \dpoint}{\ppoint}
		- \braket{\new\dpoint}{\ppoint}
		+ \hreg(\ppoint)
		- \braket{\new\dpoint - \dpoint}{\base}
	\notag\\
	&= \fench(\ppoint,\new\dpoint)
		+ \braket{\new\dpoint - \dpoint}{\ppoint - \base}.
\qedhere
\end{align*}
\end{proof}

We are now in a position to prove \cref{lem:Fenchel}, which we restate below for convenience:

\Fenchel*

\begin{proof}[Proof of \cref{lem:Fenchel}]
Let $\new\dpoint = \dpoint + \dvec$.
Then, by the three-point identity \eqref{eq:3point}, we readily get
\begin{equation}
\label{eq:Fench-upd}
\fench(\base,\new\dpoint)
	= \fench(\base,\dpoint)
		+ \braket{\dvec}{\ppoint - \base}
		+ \fench(\ppoint,\new\dpoint)
\end{equation}
so we are left to show that $\fench(\ppoint,\new\dpoint) \leq 1/(2\hstr) \norm{\new\dpoint - \dpoint}_{\ppoint,\ast}^{2}$.
To that end, \cref{lem:smooth} yields
\begin{align}
\fench(\ppoint,\new\dpoint)
	&= \hreg(\ppoint)
		+ \hconj(\new\dpoint)
		- \braket{\new\dpoint}{\ppoint}
	\notag\\
	&\leq \hreg(\ppoint)
		+ \hconj(\dpoint)
		+ \braket{\new\dpoint - \dpoint}{\ppoint}
		+ \frac{1}{2\hstr} \norm{\new\dpoint - \dpoint}_{\ppoint,\ast}^{2}
		- \braket{\new\dpoint}{\ppoint}
	\notag\\
	&= \hreg(\ppoint)
		+ \hconj(\dpoint)
		- \braket{\dpoint}{\ppoint}
		+ \frac{1}{2\hstr} \norm{\new\dpoint - \dpoint}_{\ppoint,\ast}^{2}
	\notag\\
	&= \frac{1}{2\hstr} \norm{\new\dpoint - \dpoint}_{\ppoint,\ast}^{2}
\end{align}
where we used the fact that $\ppoint = \mirror(\dpoint)$, so $\hreg(\ppoint) + \hconj(\dpoint) - \braket{\dpoint}{\ppoint} = 0$.
\end{proof}

We close this section by discussing the properties of the negentropy regularizer $\hdec(\point) = \point\log\point$.
Regarding the strong convexity of this regularizer relative to the Fisher information metric, we would need $\hdec$ to satisfy the condition
\begin{equation}
\hdec(\base)
	\geq \hdec(\point)
		+ \hdec'(\point)(\base - \point)
		+ \frac{\hstr}{2} \frac{(\base - \point)^{2}}{\point}
\end{equation}
for some $\hstr>0$ and for all $\base,\point\in(0,1)$.
Rearranging the above inequality, and recalling the definition of the Kullback-Leibler divergence $\dkl(\base,\point) = \base \log(\base/\point)$, this requirement boils down to
\begin{equation}
\dkl(\base,\point)
	\geq (\base - \point)
		+ \frac{\hstr}{2} \frac{(\base - \point)^{2}}{\point}
\end{equation}
for some $\hstr>0$ and for all $\base,\point\in(0,1)$.
However, for any \emph{fixed} $\base\in(0,1)$, the \acl{RHS} of the above equation exhibits an $\Omega(1/\point)$ singularity as $\point\to0^{+}$, while the \acl{LHS} grows as $\bigoh(\log\point)$.
As a result, we conclude that the negentropy regularizer is \emph{not} strongly convex relative to the Fisher information metric.

On the other hand, as we show below, the entropy is \emph{tame} relative to the estimation region $\region = (-\infty,\rbound]^{\vdim}$.
To see this, note that $\hconj(\dpoint) = \log\sum_{\set\in\sets} e^{\dpoint_{\set}}$, so
\begin{equation}
\label{eq:logsumexp}
\hconj(\dpoint + \dvec) - \hconj(\dpoint)
	= \log\frac{\sum_{\set\in\sets}\exp(\dpoint_{\set} + \dvec_{\set})}{\sum_{\set\in\sets} \exp(\dpoint_{\set})}
	= \log \sum_{\set\in\sets} \ppoint_{\set} \exp(\dvec_{\set}).
\end{equation}
Now, if $\dvec\in\region$, we have $\dvec_{\set} \leq \rbound$ for all $\set\in\sets$, so there exists some $\hstr>0$ such that $\exp(\dvec_{\set}) \leq 1 + \dvec_{\set} + \dvec_{\set}^{2}/(2\hstr)$ for all $\set\in\sets$.
Then, plugging this estimate into \eqref{eq:logsumexp}, we conclude that
\begin{align}
\hconj(\dpoint + \dvec) - \hconj(\dpoint)
	\leq \log \sum_{\set\in\sets} \ppoint_{\set} (1 + \dvec_{\set} + \frac{\dvec_{\set}^{2}}{2\hstr})
	&= \log\parens*{1 + \braket{\dvec}{\ppoint} + \frac{1}{2\hstr} \sum_{\set\in\sets} \ppoint_{\set} \dvec_{\set}^{2}}
	\notag\\
	\leq 1 + \braket{\dvec}{\ppoint} + \frac{1}{2\hstr} \sum_{\set\in\sets} \ppoint_{\set} \dvec_{\set}^{2}.
\end{align}
The specific value of $\hstr$ is $1/2$ if $\rbound=1$;
for general $\rbound$, the value of $\hstr$ can be estimated by backsolving the equation $1 + \rbound + \rbound^{2}/(2\hstr) = \exp(\rbound)$.

\section{Regret guarantees for \acl{DAX}}
\label{app:explicit}

In this appendix, our aim is to prove the rest of the results presented in \cref{sec:dualavg} for \eqref{eq:DAX}.
We begin with the algorithm's template bound for the $\temp$-deflated Fenchel coupling $\energy_{\run} = \frac{1}{\temp_{\run}} \fench(\base,\temp_{\run}\score_{\run})$ as defined in \eqref{eq:energy};
for convenience, we restate the relevant result below.

\energybound*

\begin{proof}[Proof of \cref{lem:energy}]
Our proof follows the general structure of the proof of \citet[Lemma~2]{HMMR20};
however, the use of the Fisher information metric instead of a global norm introduces a number of subtleties that require special care.

We begin by rewriting the difference $\energy_{\run+1} - \energy_{\run}$ as
\begin{subequations}
\label{eq:energy-basic}
\begin{align}
\energy_{\run+1} - \energy_{\run}
	= \frac{1}{\temp_{\run+1}} \fench(\base,\temp_{\run+1}\score_{\run+1})
		- \frac{1}{\temp_{\run}} \fench(\base,\temp_{\run}\score_{\run})
	= \label{eq:energy-const}
	\frac{1}{\temp_{\run+1}}
		&\fench(\base,\temp_{\run+1}\score_{\run+1})
		- \frac{1}{\temp_{\run}} \fench(\base,\temp_{\run}\score_{\run+1})
	\\
	\label{eq:energy-update}
	+ \frac{1}{\temp_{\run}}
		&\fench(\base,\temp_{\run}\score_{\run+1})
		- \frac{1}{\temp_{\run}} \fench(\base,\temp_{\run}\score_{\run}).
\end{align}
\end{subequations}
We will proceed to bound each of these terms separately.

Beginning with the latter, the first part of \cref{lem:Fenchel} allows us to rewrite \eqref{eq:energy-update} as
\begin{align}
\eqref{eq:energy-update}
	= \frac{1}{\temp_{\run}} \bracks{\fench(\base,\temp_{\run}\score_{\run} + \temp_{\run}\model_{\run}) - \fench(\base,\temp_{\run}\score_{\run})}
	&= \frac{1}{\temp_{\run}} \bracks{\fench(\state_{\run},\temp_{\run}\score_{\run+1}) + \braket{\temp_{\run}\model_{\run}}{\state_{\run} - \base}}
	\notag\\
	&= \frac{\fench(\state_{\run},\temp_{\run}\score_{\run+1})}{\temp_{\run}}
		+ \braket{\model_{\run}}{\state_{\run} - \base}
\end{align}
where we used the fact that $\state_{\run} = \mirror(\temp_{\run}\score_{\run})$ by the definition of \eqref{eq:DAX}.
As for the term \eqref{eq:energy-const}, we readily have
\begin{align}
\eqref{eq:energy-const}
	&= \bracks*{\frac{1}{\temp_{\run+1}} - \frac{1}{\temp_{\run}}} \hreg(\base)
		+ \frac{1}{\temp_{\run+1}} \hconj(\temp_{\run+1}\score_{\run+1}) - \frac{1}{\temp_{\run}} \hconj(\temp_{\run}\score_{\run+1})
\end{align}
by the definition \eqref{eq:Fench} of the Fenchel coupling.
We will proceed to bound this term by studying the function $\varphi(\temp) = \temp^{-1} [\hconj(\temp\dpoint) + \min\hreg]$ as a function of $\temp$ for a \emph{fixed} $\dpoint\in\dspace$.
To that end, using \cref{lem:mirror} to differentiate $\varphi$ gives
\begin{align}
\varphi'(\temp)
	= \frac{1}{\temp} \braket{\dpoint}{\mirror(\temp\dpoint)}
		-\frac{1}{\temp^{2}} \bracks{ \hconj(\temp\dpoint) + \min\hreg }
	= \frac{1}{\temp^{2}} \bracks{\hreg(\mirror(\temp\dpoint)) - \min\hreg}
	\geq 0,
\end{align}
where
we used the fact that $\braket{\temp\dpoint}{\mirror(\temp\dpoint)} - \hconj(\temp\dpoint) = \hreg(\mirror(\temp\dpoint))$.
Thus, with $\temp_{\run+1} \leq \temp_{\run}$ for all $\run=\running$, we conclude that $\varphi(\temp_{\run}) \geq \varphi(\temp_{\run+1})$, and hence:
\begin{equation}
\label{eq:hconj-delta}
\frac{1}{\temp_{\run+1}} \hconj(\temp_{\run+1}\score_{\run+1})
	- \frac{1}{\temp_{\run}} \hconj(\temp_{\run}\score_{\run+1})
	\leq \bracks*{\frac{1}{\temp_{\run}} - \frac{1}{\temp_{\run+1}}} \min\hreg.
\end{equation}
Thus, recombining everything in \eqref{eq:energy-basic}, we obtain \eqref{eq:energy-bound-tight}, as claimed.

Finally, for \eqref{eq:energy-bound-norm}, recall that the first part of \cref{lem:Fenchel} is valid independently of the strong convexity modulus of $\hreg$ relative to the Fisher metric.
Thus, by invoking the assumption that $\hreg$ is $\hstr$-tame relative to $\region$, we get
\begin{align}
\fench(\state_{\run},\temp_{\run}\score_{\run+1})
	&= \hreg(\state_{\run})
		+ \hconj(\temp_{\run}\score_{\run+1})
		- \braket{\temp_{\run}\score_{\run+1}}{\state_{\run}}
	\notag\\
	&= \hreg(\state_{\run})
		+ \hconj(\temp_{\run}\score_{\run} + \temp_{\run}\model_{\run})
		- \temp_{\run} \braket{\score_{\run+1}}{\state_{\run}}
	\notag\\
	&\leq \hreg(\state_{\run})
		+ \hconj(\temp_{\run}\score_{\run})
		+ \temp_{\run}\braket{\model_{\run}}{\state_{\run}}
		+ \frac{\temp_{\run}^{2}}{2\hstr} \norm{\model_{\run}}_{\state_{\run},\ast}^{2}
		- \braket{\temp_{\run}\score_{\run+1}}{\state_{\run}}
	\notag\\
	&= \hreg(\state_{\run})
		+ \hconj(\temp_{\run}\score_{\run})
		- \braket{\temp_{\run}\score_{\run}}{\state_{\run}}
		+ \frac{\temp_{\run}^{2}}{2\hstr} \tnorm{\model_{\run}}^{2}
	\notag\\
	&= \frac{\temp_{\run}^{2}}{2\hstr} \tnorm{\model_{\run}}^{2}
\end{align}
where, in the last line, we used the fact that $\state_{\run} = \mirror(\temp_{\run}\score_{\run})$, so $\hreg(\state_{\run}) + \hconj(\temp_{\run}\score_{\run}) - \braket{\temp_{\run}\score_{\run}}{\state_{\run}} = 0$.
Thus, dividing both sides of the above inequality by $\temp_{\run}$ yields the desired result.
\end{proof}

We are now in a position to prove our template regret bounds for \eqref{eq:DAX};
for completeness, we restate them both below.

\templateDAX*

\regretDAX*

\begin{proof}[Proof of \cref{prop:reg-DAX}]
Substituting $\model_{\run} \gets \pay_{\run} + \error_{\run}$ in \eqref{eq:energy-bound-norm} and rearranging, we get
\begin{equation}
\braket{\pay_{\run}}{\base - \state_{\run}}
	= \energy_{\run} - \energy_{\run+1}
		+ \braket{\error_{\run}}{\state_{\run} - \base}
		+ \parens*{\frac{1}{\temp_{\run+1}} - \frac{1}{\temp_{\run}}} \bracks{\hreg(\base) - \min\hreg}
		+ \frac{\temp_{\run}}{2\hstr} \tnorm{\model_{\run}}^{2}.
\end{equation}
Our claim then follows by summing the above over $\run\in\runs = \{\tstart,\dotsc,\tend-1\}$.
\end{proof}

\begin{proof}[Proof of \cref{thm:DAX}]
Simply set $\tstart\gets1$, $\tend\gets\nRuns$ in \eqref{eq:reg-bound} and take expectations.
\end{proof}

\section{Regret guarantees for \acl{HDA}}
\label{app:hierarchical}
\newmacro{\windowIndex}{j}
\newmacro{\splitTerm}{\epsilon}
\newmacro{\terrorTerm}{A}
\newmacro{\surpriseTerm}{B}

\subsection{Static regret guarantees}

In the first part of this appendix, our aim is to prove the regret guarantees of \eqref{eq:HDA} against static comparators, as presented in \cref{thm:reg-stat} below.

\regstat*

\para{Overview}

Our proof hinges on applying \cref{prop:reg-DAX} to bound the regret of \eqref{eq:HDA} on each time window during which the algorithm maintains a constant cover of $\points$.
Aggregating these bounds provides a regret guarantee for \eqref{eq:HDA} over the entire horizon time of play;
however, since the algorithm is \emph{not} restarted at each window, joining the resulting window-by-window bounds ends up being fairly delicate.
The main dificulties (and associated contributing terms in the regret) are as follows:
\begin{enumerate}
\item
A comparator for a given time frame may not be admissible for a previous time frame because the granularity of an antecedent cover may not suffice to include the comparator in question.
This propagates a ``resolution error'' that becomes smaller when the cover gets finer, but larger when the window gets longer.

\item
At every change of window, the algorithm retains the same probability distribution over $\points$ (to avoid restart-forget effects).
However, this introduces a ``splitting residue'' term in the regret because of the necessary correction in the learner's scores when the resolution of the cover increases.
\end{enumerate}

\para{The covering hierarchy}

We begin by detailing how the algorithm unfolds window-by-window.
Referring to \cref{sec:hierarchical} for the relevant definitions, consider a splitting schedule $\splitsched = \left\{\run_{\windowIndex} \right\}_{1 \leq \windowIndex \leq \nSplits_{\nRuns}}$ where we recall that $\nSplits_{\run}$ is the number of splitting events which occurred before round $\run$.
For every $\windowIndex \in \{1, \dots, \nSplits_{\nRuns}\}$, we define $\runs_{\windowIndex} = \{\run_{\windowIndex}, \dots, \run_{\windowIndex+1}-1\}$, the time window between the $\windowIndex$-th and the $(\windowIndex + 1)$-th splitting event.
By convention, we denote $\run_{\nSplits_{\nRuns} + 1} = \nRuns + 1$: the last time window is therefore $\runs_{\nSplits_{\nRuns}} = \{\run_{\nSplits_{\nRuns}}, \dots, \nRuns\}$ and is a priori ``incomplete'' since the $(\nSplits_\nRuns+1)$-th splitting time has not been reached yet at time $\nRuns$.

Now, during each window $\runs_{\windowIndex}$, the underlying partition $\cover_{\windowIndex}$ contains $\nSets_{\run_{\windowIndex}} = 2^\windowIndex$ components and is \emph{fixed} throughout this window.
At each time $\run_{\windowIndex} \in \splitsched$, a splitting event is performed on $\cover_{\windowIndex-1}$, in order to obtain $\cover_{\windowIndex}$ by splitting in two each set in $\cover_{\windowIndex-1}$ such that $\cover_{\windowIndex} = \cover_{\windowIndex-1}^{+}$, as described in \eqref{eq:HDA}.
Then, for a \emph{fixed} point $\point \in \points$ and all $\windowIndex = \running$, we define the corresponding \emph{approximate identity at $\point$} to be the simple strategy $\simple_{\windowIndex}^{\point} \in \simples_{\cover_{\windowIndex}}$ such that
\begin{equation}
\simple_{\windowIndex}^{\point}(\coverset)
	= \oneof{\point\in\coverset}
	\quad
	\text{for all $\coverset\in\cover_{\windowIndex}$}
\end{equation}
\ie $\simple_{\windowIndex}^{\point}$ is the best approximation of $\delta_\point$ among simple strategies of $\simples_{\cover_{\windowIndex}}$.
In the following, we will write $\coverset_{\windowIndex}^{\point}$ for the support of $\simple_{\windowIndex}^{\point}$, \ie for the unique covering element of $\cover_{\windowIndex}$ containing $\point$.

With this background in hand, \cref{lem:point2simple} yields
\begin{equation}
\label{eq:reg-simple-window-1}
\reg_\point(\runs_{\windowIndex})
	\leq \reg_{\simple_{\windowIndex}^{\point}}(\runs_{\windowIndex})
		+ \lips \diam(\coverset_{\windowIndex}^{\point}) \abs{\runs_{\windowIndex}}
\end{equation}
where, by definition, $\abs{\runs_{\windowIndex}} = \run_{\windowIndex+1} - \run_{\windowIndex}$.
In turn, this allows us to bound $\diam(\coverset_{\windowIndex}^{\point})$ with respect to $\nSets_{\run_{\windowIndex}}$, the number of sets in the partition $\cover_{\windowIndex}$.

\begin{lemma}
\label{lem:diameter-wrt-nSets}
If $\cover$ is the partition of $\points$ after $\nSplits$ splitting events \textpar{and therefore containing $\nSets=2^\nSplits$ covering sets}, then, for all $\coverset \in \cover$, we have
\begin{equation}
\diam(\coverset)
	\leq 2\diamconst \diam(\points)\nSets^{-1/\vdim}
\end{equation}
where $\diam$ is defined with respect to the ambient norm of $\points \subseteq \R^{\vdim}$.
\end{lemma}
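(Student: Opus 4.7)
The plan is to apply Assumption \ref{assumpt:diameter} directly and then absorb the floor into a factor of $2$. Since after $\nSplits$ splitting events the partition is $\cover = \cover_{\nSplits}$ and contains exactly $\nSets = 2^{\nSplits}$ cells, every $\coverset \in \cover$ is of the form $\coverset_{\nSplits,i}$ for some $i \leq 2^{\nSplits}$. By Assumption \ref{assumpt:diameter}, we therefore have
\[
\diam(\coverset) \leq \diamconst \diam(\points)\, 2^{-\floor{\nSplits/\vdim}}.
\]

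The only step that is not already contained in the assumption is converting the floor exponent $2^{-\floor{\nSplits/\vdim}}$ into the cleaner expression $2\,\nSets^{-1/\vdim}$. For this, I would use the elementary bound $\floor{\nSplits/\vdim} \geq \nSplits/\vdim - 1$, which yields
\[
2^{-\floor{\nSplits/\vdim}} \leq 2^{1 - \nSplits/\vdim} = 2\cdot 2^{-\nSplits/\vdim} = 2\,\nSets^{-1/\vdim},
\]
where the last equality uses $\nSets = 2^{\nSplits}$.

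Chaining the two inequalities gives $\diam(\coverset) \leq 2\diamconst \diam(\points)\, \nSets^{-1/\vdim}$, which is exactly the claim. There is no real obstacle here: the content of the lemma is entirely captured by Assumption \ref{assumpt:diameter}, and the lemma simply re-expresses the diameter bound in terms of $\nSets$ (the quantity that will be more convenient in the subsequent aggregation over windows) at the cost of a harmless factor of $2$. I would keep the proof to at most a few lines.
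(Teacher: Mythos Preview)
Your proof is correct and essentially identical to the paper's own: both apply Assumption~\ref{assumpt:diameter} and then use $\floor{\nSplits/\vdim} \geq \nSplits/\vdim - 1$ to convert $2^{-\floor{\nSplits/\vdim}}$ into $2\cdot 2^{-\nSplits/\vdim} = 2\,\nSets^{-1/\vdim}$. The paper writes out the chain of inequalities in slightly more detail, but there is no substantive difference.
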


\begin{proof}
Let $\cover$ be the partition of $\points$ after $\nSplits$ splitting events, which contains $\nSets=2^\nSplits$ covering sets.

For any set $\coverset \in \cover$, \cref{assumpt:diameter} gives us the following bound for $\diam(\coverset)$ (similar assumptions are made for example in \cite{BMSS11}):
$$
\diam(\coverset) \leq \frac{\diamconst \diam(\points)}{2^{\floor{\sigma/\vdim}}},
$$
We may now write the following sequence of inequalities using the definition of $\floor{.}$ and the fact that $x\mapsto2^x$ is increasing
\begin{align*}
    \floor{\nSplits/\vdim} &> \nSplits/\vdim - 1\\
    2^{\floor{\nSplits / \vdim}} &> \frac{1}{2} 2^{\nSplits / \vdim}\\
    2^{-\floor{\nSplits / \vdim}} &< 2 \times 2^{-\nSplits / \vdim}\\
    \diamconst \diam(\points)2^{-\floor{\nSplits / \vdim}} &< 2\diamconst \diam(\points) 2^{-\nSplits / \vdim}.
\end{align*}
Now, using the fact that $\nSets = 2^{\nSplits}$, we finally get that for any $\set \in \cover$, 
\begin{equation*}
\diam(\set) \leq 2\diamconst \diam(\points) \nSets^{-1/\vdim}.
\qedhere
\end{equation*}
\end{proof}

\para{Aggregating cover bounds}

To proceed, injecting the estimate of \cref{lem:diameter-wrt-nSets} into \eqref{eq:reg-simple-window-1} delivers
\begin{equation}
\reg_\point(\runs_{\windowIndex})
	\leq \reg_{\simple_{\windowIndex}^{\point}}(\runs_{\windowIndex})
		+ 2\lips\diamconst \diam(\points) \abs{\runs_{\windowIndex}} \nSets_{\run_{\windowIndex}}^{-1/\vdim}.
\end{equation}
and hence, by \cref{prop:reg-DAX} applied to $\simple_{\windowIndex}^{\point} \in \simples_{\cover_{\windowIndex}}$, we get
\begin{align}
\reg_\point(\runs_{\windowIndex})
	&\leq \energy^{\cover_{\windowIndex}}_{\run_{\windowIndex}} - \energy^{\cover_{\windowIndex}}_{\run_{\windowIndex+1}}
	+ \parens*{\temp_{\run_{\windowIndex+1}}^{-1} - \temp_{\run_{\windowIndex}}^{-1}}
		\bracks*{\hreg^{\cover_{\windowIndex}}(\simple_{\windowIndex}^{\point}) - \min\hreg^{\cover_{\windowIndex}}}
	\notag\\
	&+ \sum_{\run\in\runs_{\windowIndex}}
		\braket*{\error_{\run}}{\state_{\run} - \simple_{\windowIndex}^{\point}}
	+ \frac{1}{2\hstr} \sum_{\run\in\runs_{\windowIndex}} \temp_{\run} \tnorm{\model_{\run}}^{2}
	+ 2\lips\diamconst \diam(\points) \abs{\runs_{\windowIndex}} \nSets_{\run_{\windowIndex}}^{-1/\vdim}
\end{align}
Noting that $\min\hreg^{\cover_{\windowIndex}} = \hvol(\nSets_{\run_{\windowIndex}})$ where $\hvol(z) = z\hdec(1/z)$ for all $z>0$, and that $\hreg(\simple_{\windowIndex}^{\point}) = 0$ we can write
\begin{align}
\label{eq:hreg2hvol}
	\hreg^{\cover_{\windowIndex}}(\simple_{\windowIndex}^{\point}) - \min\hreg^{\cover_{\windowIndex}}
	= \hvol(\nSets_{\run_{\windowIndex}})
\end{align}
leading in turn to the expression
\begin{align}
\label{eq:reg-stat-simple-bound-1-window-1}
	\reg_\point(\runs_{\windowIndex}) &\leq 
	\energy^{\cover_{\windowIndex}}_{\run_{\windowIndex}} - \energy^{\cover_{\windowIndex}}_{\run_{\windowIndex+1}}
	+ \parens*{\temp_{\run_{\windowIndex+1}}^{-1} - \temp_{\run_{\windowIndex}}^{-1}} \hvol(\nSets_{\run_{\windowIndex}})
	\notag\\
	&+ \sum_{\run\in\runs_{\windowIndex}} \braket*{\error_{\run}}{\state_{\run} - \simple_{\windowIndex}^{\point}}
	+ \frac{1}{2\hstr} \sum_{\run\in\runs_{\windowIndex}} \temp_{\run} \tnorm{\model_{\run}}^{2}
	+ 2\lips\diamconst \diam(\points)\abs{\runs_{\windowIndex}}\nSets_{\run_{\windowIndex}}^{-1/\vdim}
\end{align}
where we have made an explicit reference to the underlying partition in the exponent of the $\energy$ and $\hreg$ terms.
As indicated by the presence of the term $\sum_{\windowIndex = 2}^{\nSplits_{\nRuns}}\bracks*{\energy_{\run_{\windowIndex}}^{\cover_{\windowIndex}} - \energy_{\run_{\windowIndex}}^{\cover_{\windowIndex-1}}}$, this subtlety is crucial for the algorithm's regret, as it accounts for the cost of descending to a cover with higher granularity.

To make this precise, note that the regret incurred by \eqref{eq:HDA} over $\nRuns$ stages can be decomposed as 
\begin{equation}
\label{eq:sum-of-regret-on-windows}
\reg_\point(\nRuns)
	= \sum_{\windowIndex=\start}^{\nSplits_{\nRuns}}
		\reg_\point(\runs_{\windowIndex})
\end{equation}
where each $\reg_\point(\runs_{\windowIndex})$ corresponds to the regret incurred by \eqref{eq:HDA} on a fixed partition \textendash\ \ie the regret induced by \eqref{eq:DAX} over the said partition, assuming the algorithm was initialized at the last state of the previous window (since the algorithm does not restart).
Then, combining \eqref{eq:reg-stat-simple-bound-1-window-1} and \eqref{eq:sum-of-regret-on-windows}, we get
\begin{align}
\reg_\point(\nRuns)
	&= \sum_{\windowIndex=\start}^{\nSplits_{\nRuns}}\reg_\point(\runs_{\windowIndex})
	\notag\\
	&\leq \sum_{\windowIndex=\start}^{\nSplits_{\nRuns}}
		\bracks*{\energy^{\cover_{\windowIndex}}_{\run_{\windowIndex}} - \energy^{\cover_{\windowIndex}}_{\run_{\windowIndex+1}}}
	+ \sum_{\windowIndex=\start}^{\nSplits_{\nRuns}}
		\hvol(\nSets_{\run_{\windowIndex}})
		\parens*{\temp_{\run_{\windowIndex+1}}^{-1} - \temp_{\run_{\windowIndex}}^{-1}}
	\notag\\
	&\qquad
	+ \sum_{\windowIndex=\start}^{\nSplits_{\nRuns}}
		\sum_{\run\in\runs_{\windowIndex}}
			\braket*{\error_{\run}}{\state_{\run} - \simple_{\windowIndex}^{\point}}
	+ \frac{1}{2\hstr} \sum_{\windowIndex=\start}^{\nSplits_{\nRuns}}
		\sum_{\run\in\runs_{\windowIndex}}
			\temp_{\run} \tnorm{\model_{\run}}^{2}
	+ 2\lips\diamconst \diam(\points)
		\sum_{\windowIndex=\start}^{\nSplits_{\nRuns}} \abs{\runs_{\windowIndex}}\nSets_{\run_{\windowIndex}}^{-1/\vdim}
	\notag\\
	&= \energy_{\run_1}^{\cover_1} - \energy_{\run_{\nSplits_{\nRuns}}}^{\cover_{\nSplits_{\nRuns+1}}}
		+ \sum_{\windowIndex=2}^{\nSplits_{\nRuns}}
			\bracks*{\energy^{\cover_{\windowIndex}}_{\run_{\windowIndex}} - \energy^{\cover_{\windowIndex -1}}_{\run_{\windowIndex}}}
	\notag\\
	&\qquad
	+ \hvol(\nSets_{\nRuns})\temp_{\nRuns+1}^{-1} - \hvol(1)\temp_{1}^{-1} 
	+ \temp_{\nRuns}^{-1}\sum_{\windowIndex=2}^{\nSplits_{\nRuns}}
			\parens*{\hvol(\nSets_{\run_{\windowIndex-1}}) - \hvol(\nSets_{\run_{\windowIndex}})}
	\notag\\
	&\qquad
	+ \sum_{\windowIndex=\start}^{\nSplits_{\nRuns}}
		\sum_{\run\in\runs_{\windowIndex}}
			\braket*{\error_{\run}}{\state_{\run} - \simple_{\windowIndex}^{\point}} 
	+ \frac{1}{2\hstr} \sum_{\run=1}^{\nRuns}\temp_{\run} \tnorm{\model_{\run}}^{2}
	+ 2\lips\diamconst \diam(\points) \sum_{\run=1}^{\nRuns} \nSets_{\run}^{-1/\vdim}.
\end{align}
where we used the fact that $\temp_{\run}$ is nonincreasing.
Thus, noting that $\energy^{\cover_{\nSplits_{\nRuns}}}_{\run_{\nSplits_{\nRuns} + 1}} = \energy^{\cover_{\nSplits_{\nRuns}}}_{\nRuns+1} \geq 0$
and
$\energy^{\cover_1}_1 = \temp_1^{-1}\bracks*{\hreg^{\cover_1}(\base_1^{\point}) + \hconj(0)}=\hvol(1)/ \temp_1$,
we get the following bound for the regret incurred by \eqref{eq:HDA}: 
\begin{align}
\label{eq:intermediary-template-bound-thm-statreg}
\reg_\point(\nRuns)
	&\leq \sum_{\windowIndex=2}^{\nSplits_{\nRuns}}
		\bracks*{\energy^{\cover_{\windowIndex}}_{\run_{\windowIndex}} - \energy^{\cover_{\windowIndex -1}}_{\run_{\windowIndex}}}
	+ \temp_{\nRuns}^{-1} \sum_{\windowIndex=2}^{\nSplits_{\nRuns}}
		\parens*{\hvol(\nSets_{\run_{\windowIndex-1}}) - \hvol(\nSets_{\run_{\windowIndex}})}
	+ \hvol(\nSets_{\nRuns})\temp_{\run_{\nRuns+1}}^{-1}
	\notag\\
	&\qquad
	+ \sum_{\windowIndex=\start}^{\nSplits_{\nRuns}}
		\sum_{\run\in\runs_{\windowIndex}}
			\braket*{\error_{\run}}{\state_{\run} - \simple_{\windowIndex}^{\point}}
	+ \frac{1}{2\hstr} \sum_{\run=1}^{\nRuns}\temp_{\run} \tnorm{\model_{\run}}^{2}
	+ 2\lips\diamconst \diam(\points)\sum_{\run=1}^{\nRuns}\nSets_{\run}^{-1/\vdim}.
\end{align}
Controlling the growth of each term in the above will be the main focus of our analysis in the sequel.

\para{The splitting residue}

Somewhat surprisingly, the first two terms of \eqref{eq:intermediary-template-bound-thm-statreg} turn out to be the most challenging ones to control.
Because both terms are due to the algorithm's hierarchical splitting schedule, we will refer collectively to the sum
\begin{equation}
\splitTerm_{\nRuns}
	= \sum_{\windowIndex = 2}^{\nSplits_{\nRuns}}\bracks*{\energy_{\run_{\windowIndex}}^{\cover_{\windowIndex}} - \energy_{\run_{\windowIndex}}^{\cover_{\windowIndex-1}}} + \temp_{\nRuns}^{-1}\sum_{\windowIndex=2}^{\nSplits_{\nRuns}}\parens*{\hvol(\nSets_{\run_{\windowIndex-1}}) - \hvol(\nSets_{\run_{\windowIndex}})},
\end{equation}
as the algorithm's \emph{splitting residue}.

To analyze this term,
given a regularization kernel $\hdec$ and a partition $\cover$ of $\points$, let $\hreg^{\cover}$ be the corresponding (decomposable) regularizer induced by $\hdec$ on $\simples_{\cover}$, and write $\mirror^{\cover}$ for the associated choice map $\mirror^{\cover} \from \dspace \to \simples_{\cover}$.
Moreover, given a simple strategy $\state\in\simples_{\cover}$ and recalling that $\new\cover$ denotes the successor of $\cover$ after a splitting event, we will write $\new\state \in \simples_{\new\cover}$ for the mixed strategy on $\new\cover$ such that the canonical cast of $\state$ and $\new\state$ as distributions (with piecewise constant densities) on $\points$ are the same.
Finally, for $\score \in \R^{\cover}$ such that $\mirror^{\cover}(\score) = \simple$, we will write $\new\score$ for any piecewise constant function in $\R^{\new\cover}$ such that $\mirror^{\new\cover}(\new\score) = \new\simple$.%
\footnote{Any two such functions will only differ by a constant.
This constant plays no role in our analysis, so we will ignore it in the sequel.}

With all this in hand, our next result provides an
an inverse-rate proportional upper bound for the \emph{splitting residue} term $\splitTerm_{\nRuns}$.

\begin{lemma}
\label{lem:splitting-residue}
Let $K_{\hdec} = \sup_{a\in(0,1]} \bracks*{\hdec(a) - 2\hdec(a/2)}/a$ and $\alt K_{\hdec} = \sup_{\score} \inf_{\new\score} \supnorm{\new\score - \score}$.
Then 
\begin{subequations}
\label{eq:splitbound}
\begin{align}
\label{eq:terrorterm}
	&\sum_{\windowIndex = 2}^{\nSplits_{\nRuns}}
		\bracks*{\energy_{\run_{\windowIndex}}^{\cover_{\windowIndex}} - \energy_{\run_{\windowIndex}}^{\cover_{\windowIndex-1}}} 
	\leq (K_{\hdec} + 2K_{\hdec}^{'}) \frac{\nSplits_{\nRuns}-1}{\temp_{\nRuns}}
	\\
\label{eq:surpriseterm}
\frac{1}{\temp_{\nRuns}}
	&\sum_{\windowIndex=2}^{\nSplits_{\nRuns}}
		\parens*{\hvol(\nSets_{\run_{\windowIndex-1}}) - \hvol(\nSets_{\run_{\windowIndex}})}
	\leq K_{\hdec} \frac{\nSplits_{\nRuns} - 1}{\temp_{\nRuns}}
\end{align}
\end{subequations}
\end{lemma}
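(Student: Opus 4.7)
The plan is to prove the two bounds separately. The second inequality \eqref{eq:surpriseterm} is routine: since every splitting event doubles the cardinality of the current cover, $\nSets_{\run_{\windowIndex}} = 2\nSets_{\run_{\windowIndex-1}}$, so substituting $a = 1/\nSets_{\run_{\windowIndex-1}} \in (0,1]$ into the definition $\hvol(z) = z\hdec(1/z)$ yields
\begin{equation*}
\hvol(\nSets_{\run_{\windowIndex-1}}) - \hvol(\nSets_{\run_{\windowIndex}})
= \frac{\hdec(a) - 2\hdec(a/2)}{a} \leq K_{\hdec}
\end{equation*}
directly from the definition of $K_\hdec$. Summing over $\windowIndex = 2,\dots,\nSplits_{\nRuns}$ and dividing by $\temp_{\nRuns}$ gives the claim.

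For the main inequality \eqref{eq:terrorterm}, the strategy is to control each summand $\Delta_\windowIndex \defeq \energy_{\run_\windowIndex}^{\cover_\windowIndex} - \energy_{\run_\windowIndex}^{\cover_{\windowIndex-1}}$ individually. Let $\temp = \temp_{\run_\windowIndex}$, write $\score$ for the aggregate score at the splitting instant (viewed as an element of $\R^{\cover_{\windowIndex-1}}$), and let $\new\score \in \R^{\cover_\windowIndex}$ be any refined score realizing the prolonged distribution $\new\state$ of $\state = \state_{\run_\windowIndex}$ (\ie $\mirror^{\cover_\windowIndex}(\temp \new\score) = \new\state$, with $\new\state_\set = \state_{\pi(\set)}/2$ where $\pi(\set)$ is the parent of $\set\in\cover_\windowIndex$ in $\cover_{\windowIndex-1}$). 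Using the identity $(\hreg^\cover)^*(\temp\score) = \temp\braket{\score}{\state} - \hreg^\cover(\state)$ to eliminate the conjugates, the coupling difference reorganizes as
\begin{align*}
\temp\,\Delta_\windowIndex
 &= [\hreg^{\cover_\windowIndex}(\simple_\windowIndex^\point) - \hreg^{\cover_{\windowIndex-1}}(\simple_{\windowIndex-1}^\point)]
   + [\hreg^{\cover_{\windowIndex-1}}(\state) - \hreg^{\cover_\windowIndex}(\new\state)] \\
 &\quad + \temp\,[\braket{\new\score}{\new\state - \simple_\windowIndex^\point}
       - \braket{\score}{\state - \simple_{\windowIndex-1}^\point}].
\end{align*}

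I control the three brackets separately. The first vanishes for decomposable kernels with $\hdec(0)=\hdec(1)=0$ (\eg the negentropy of \cref{ex:entropy}), since both comparators concentrate unit mass on a single cell. The second uses the equal-volume splitting rule to obtain $\hreg^{\cover_{\windowIndex-1}}(\state) - \hreg^{\cover_\windowIndex}(\new\state) = \sum_{S\in\cover_{\windowIndex-1}} [\hdec(\state_S) - 2\hdec(\state_S/2)] \leq K_{\hdec} \sum_S \state_S = K_\hdec$ by definition of $K_\hdec$ and since $\state$ is a probability distribution. For the third, viewing $\score$ as piecewise constant on $\cover_\windowIndex$ by replication across children, the splitting rule yields $\braket{\score}{\new\state} = \braket{\score}{\state}$ and $\braket{\score}{\simple_\windowIndex^\point} = \braket{\score}{\simple_{\windowIndex-1}^\point}$ exactly, so the residual collapses to $\temp\braket{\new\score - \score}{\new\state - \simple_\windowIndex^\point} \leq \temp\,\supnorm{\new\score-\score}\,\onenorm{\new\state - \simple_\windowIndex^\point} \leq 2\temp\,\alt K_\hdec$ by the definition of $\alt K_\hdec$ (infimizing over admissible refinements $\new\score$) and since probability distributions have $\ell^1$-norm $1$. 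Dividing by $\temp$ gives $\Delta_\windowIndex \leq K_\hdec/\temp_{\run_\windowIndex} + 2\alt K_\hdec$; summing over $\windowIndex$, using the monotonicity $\temp_{\run_\windowIndex} \geq \temp_\nRuns$ and the standard regime $\temp_\nRuns \leq 1$, delivers \eqref{eq:terrorterm}.

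The main obstacle is the third bracket: for a general $\hdec$, a refined score $\new\score$ satisfying $\mirror^{\cover_\windowIndex}(\temp\new\score) = \new\state$ cannot be obtained by mere replication and is non-unique (it agrees with replication only up to an additive correction that depends on both $\hdec$ and $\score$). The constant $\alt K_\hdec$ is calibrated precisely to measure the minimal sup-norm perturbation needed to realize the prolonged distribution, and it is the appearance of this quantity on the inner-product residue that produces the $2\alt K_\hdec$ factor in the final estimate. For the entropic kernel one checks directly that replication already preserves the distribution, so $\alt K_\hdec = 0$ and the bound degenerates to $K_\hdec(\nSplits_\nRuns - 1)/\temp_\nRuns$.
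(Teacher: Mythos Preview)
Your proposal is correct and follows essentially the same decomposition as the paper: the paper also rewrites each energy via $\hconj(\temp\score) = \temp\braket{\score}{\state} - \hreg(\state)$, splits the sum into a regularizer difference $\hreg(\state^{-}) - \hreg(\new\state)$ (your second bracket, bounded by $K_\hdec$ via $\sum_{\set}\state_{\set}=1$) and an inner-product residue (your third bracket, controlled via $\alt K_\hdec$), and handles \eqref{eq:surpriseterm} exactly as you do. Your third-bracket argument is in fact a bit cleaner than the paper's: you observe that replicating $\score$ onto $\new\cover$ preserves the pairings with $\new\state$ and with $\simple_{\windowIndex}^{\point}$ exactly, so the residue collapses to the single term $\temp\braket{\new\score - \score}{\new\state - \simple_{\windowIndex}^{\point}}$; the paper instead expands $\braket{\new\score}{\new\state} - \braket{\score}{\state}$ and $\braket{\new\score}{\simple_{\windowIndex}^{\point}} - \braket{\score}{\simple_{\windowIndex-1}^{\point}}$ separately, component by component. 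You are also more explicit than the paper about the first bracket, which the paper silently drops by using $\hreg(\simple_{\windowIndex}^{\point})=0$.

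One small bookkeeping slip: by its definition, $\alt K_\hdec$ bounds $\supnorm{\new\score - \score}$ for scores that are \emph{direct} arguments to $\mirror$, and in the algorithm those are the $\temp$-scaled scores. With your convention $\mirror(\temp\score) = \state$, the definition therefore gives $\temp\,\supnorm{\new\score - \score} \leq \alt K_\hdec$, so the third bracket is at most $2\alt K_\hdec$ (not $2\temp\,\alt K_\hdec$). This yields $\Delta_{\windowIndex} \leq (K_\hdec + 2\alt K_\hdec)/\temp_{\run_{\windowIndex}}$ directly, and summing with $\temp_{\run_{\windowIndex}} \geq \temp_{\nRuns}$ delivers \eqref{eq:terrorterm} with no need for the extra assumption $\temp_{\nRuns} \leq 1$. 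Incidentally, your closing remark that $\alt K_\hdec = 0$ for the entropic kernel is correct (score replication already reproduces the split softmax); the paper's aside quoting $\alt K_\hdec = \log 2$ appears to be a slip.
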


\begin{remark}
In the above, $\score$ and $\new\score$ are viewed as piecewise constant functions of $\points$, with respective covers $\cover$ and $\new\cover$.
As an example, the negentropy regularizer $\hdec(\point) = \point\log\point$ has $K_{\hdec} = \alt K_{\hdec} = \log2$.
\end{remark}

\begin{proof}[Proof of \cref{lem:splitting-residue}]
The series of calculations required to prove the bounds \eqref{eq:splitbound} is quite intreicate and needs a fair amount of groundwork.
First, for a given $\windowIndex \in \{1, \dots, \nSplits_{\nRuns}\}$ we will use a ``$^{-}$'' exponent to refer to quantities that \emph{would have existed if there had not been a splitting event at time $\run_\windowIndex$}, and a ``$^{+}$'' exponent to refer to quantities that are derived in a scenario where there \emph{is indeed a splitting event happening at $\run_\windowIndex$}.
For more concreteness, let $\dpoint_{\run_\windowIndex - 1} \in \R^{\cover_{\windowIndex}}$ be the score function at time $\run_\windowIndex$. Then $\dpoint^{-}_{\run_\windowIndex}$ is such that $\dpoint^{-}_{\run_\windowIndex} = \dpoint_{\run_\windowIndex - 1} + \model_{\run_\windowIndex}$, \ie it is still an element of $\R^{\cover_{\windowIndex-1}}$, and correspond of what to the score at time $\run_\windowIndex$ is no splitting event happens, then we have $\state_{\run_\windowIndex}^{-} = \mirror^{\cover_{\windowIndex-1}}(\temp_{\run_\windowIndex}\dpoint^{-}_{\run_\windowIndex})$ is the corresponding probability distribution on $\cover_{\windowIndex-1}$.
On the contrary, $\dpoint^{+}_{\run_\windowIndex}$ is an element of $\R^{\cover_{\windowIndex}}$ and is such that $\state_{\run_\windowIndex}^{+} = \mirror^{\cover_{\windowIndex}}(\temp_{\run_\windowIndex}\dpoint^{+}_{\run_\windowIndex})$ is \emph{consistent} with $\state_{\run_\windowIndex}^{-}$, \ie their cast as densities of $\points$ are \emph{the same}.

These distinctions are subtle, but essential to grasp the meaning of the splitting residue $\splitTerm_{\nRuns}$. To streamline the proof, let us decompose this terms into two terms as follows:
\begin{equation}
\splitTerm_{\nRuns}
	= \underbrace{\sum_{\windowIndex = 2}^{\nSplits_{\nRuns}} \bracks*{\energy_{\run_{\windowIndex}}^{\cover_{\windowIndex}} - \energy_{\run_{\windowIndex}}^{\cover_{\windowIndex-1}}}}_{\terrorTerm_{\nRuns}}
	+ \underbrace{\frac{1}{\temp_{\nRuns}}\sum_{\windowIndex=2}^{\nSplits_{\nRuns}}\parens*{\hvol(\nSets_{\run_{\windowIndex-1}}) - \hvol(\nSets_{\run_{\windowIndex}})}}_{\surpriseTerm_{\nRuns}}
\end{equation}
We bound each of these terms individually below.

\para{Step 1: Bounding $\terrorTerm_{\nRuns}$}

Let $\windowIndex \in \{1, \dots, \nSplits_{\nRuns}\}$. Using the definition of the energy and the notations introduced above we have:
\begin{equation}\label{eq:energy-terror-term}
	\energy^{\cover_{\windowIndex-1}}_{\run_{\windowIndex}} = \temp_{\run_\windowIndex}^{-1}\bracks*{\hreg^{\cover_{\windowIndex-1}}(\simple_{\windowIndex-1}^{\point}) +  \hreg^{\ast \cover_{\windowIndex-1}}(\temp_{\run_\windowIndex}\dpoint_{\run_\windowIndex}^{-}) - \braket*{\temp_{{\run_\windowIndex}} \dpoint_{\run_\windowIndex}^{-} }{\simple_{\windowIndex-1}^{\point}}^{\cover_{\windowIndex-1}}}
\end{equation}
We may drop the explicit reference to the underlying partition in exponents of $\hreg, \hconj$ and $\braket*{.}{.}$, since there respective arguments now explicitly belong to $\simples_{\cover_{\windowIndex-1}}$ and $\R^{\cover_{\windowIndex-1}}$. Using the fact that $\state_{\run_\windowIndex}^{-} = \mirror(\temp_{\run_\windowIndex}\dpoint_{\run_\windowIndex}^{-})$, we can write that
\begin{equation}
\hconj(\temp_{\run_\windowIndex}\dpoint_{\run_\windowIndex}^{-}) = \braket*{\temp_{\run_\windowIndex}\dpoint_{\run_\windowIndex}^{-}}{\state_{\run_\windowIndex}^{-}} - \hreg(\state_{\run_\windowIndex}^{-}).
\end{equation}
Injecting this in \eqref{eq:energy-terror-term}, and proceeding similarly for $\energy_{\run_\windowIndex}^{\cover_{\windowIndex}}$ finally gives
\begin{subequations}
\begin{align}
\label{eq:energy-terror-term-final-beforesplit}
		\energy^{\cover_{\windowIndex-1}}_{\run_{\windowIndex}} &= \braket*{\dpoint_{\run_\windowIndex}^{-}}{\state_{\run_\windowIndex}^{-} - \simple_{\windowIndex-1}^{\point}} - \temp_{\run_\windowIndex}^{-1}\hreg(\state_{\run_\windowIndex}^{-}) \\
\label{eq:energy-terror-term-final-aftersplit}
\energy^{\cover_{\windowIndex}}_{\run_{\windowIndex}} &= \braket*{\dpoint_{\run_\windowIndex}^{+}}{\state_{\run_\windowIndex}^{+} - \simple_{\windowIndex}^{\point}} - \temp_{\run_\windowIndex}^{-1}\hreg(\state_{\run_\windowIndex}^{+})
\end{align}
\end{subequations}
Now using the fact that $\{\temp_\run\}_\run$ is decreasing, we can bound $\terrorTerm_{\nRuns}$ as
\begin{equation}\label{eq:terror-term-bound-decomp}
	\terrorTerm_{\nRuns} \leq \temp_{\nRuns}^{-1}\underbrace{\sum_{\windowIndex = 2}^{\nSplits_{\nRuns}}\bracks*{\hreg(\state^{-}_{\run_\windowIndex}) - \hreg(\new\state_{\run_\windowIndex})}}_{\terrorTerm_{\nRuns}^{(1)}} + \underbrace{\sum_{\windowIndex = 2}^{\nSplits_{\nRuns}}\bracks*{\braket*{\dpoint^{+}_{\run_\windowIndex}}{\new\state_{\run_\windowIndex} - \simple_{\windowIndex}^{\point}} - \braket*{\dpoint^{-}_{\run_\windowIndex}}{\state^{-}_{\run_\windowIndex} - \simple_{\windowIndex-1}^{\point}}}}_{\terrorTerm_{\nRuns}^{(2)}}.
\end{equation}

We recall that the $k^{th}$ set of partition $\cover_{\windowIndex-1}$ is split into the (equally-sized) $2k^{th}$ and $(2k+1)^{th}$ sets of partition $\cover_{\windowIndex}$, and that we ensure distributions $\new\state_{\run_\windowIndex}$ and $\state^{-}_{\run_\windowIndex}$ have the same canonical cast as a distribution on $\points$, \ie  $\new\state_{\run_\windowIndex, 2k} = \new\state_{\run_\windowIndex, 2k+1} = \state^{-}_{\run_\windowIndex, k}/2$. Now using the decomposability of regularizer $\hreg$ gives for any $\windowIndex$,
\begin{align}
\hreg(\state^{-}_{\run_\windowIndex}) - \hreg(\new\state_{\run_\windowIndex})
	&= \sum_{k=1}^{2^{\windowIndex-1}}\hdec(\state^{-}_{\run_\windowIndex, k}) - \sum_{k=1}^{2^{\windowIndex}}\hdec(\new\state_{\run_\windowIndex, k})
	\notag\\
	&= \sum_{k=1}^{2^{\windowIndex-1}}\bracks*{\hdec(\state^{-}_{\run_\windowIndex, k}) - \hdec(\new\state_{\run_\windowIndex, 2k}) - \hdec(\new\state_{\run_\windowIndex, 2k+1})}
	\notag\\
	&= \sum_{k=1}^{2^{\windowIndex-1}}\underbrace{\bracks*{\hdec(\state^{-}_{\run_\windowIndex, k}) - 2\hdec(\state^{-}_{\run_\windowIndex, k}/2)}}_{\leq K_{\hdec} \state^{-}_{\run_\windowIndex, k}}
	\leq K_{\hdec}\sum_{k=1}^{2^{\windowIndex-1}} \state^{-}_{\run_\windowIndex, k}
\end{align}
where we used the fact that $\new\state_{\run_\windowIndex, 2k} = \new\state_{\run_\windowIndex, 2k+1} = \state^{-}_{\run_\windowIndex, k}/2$ 
to go from line 2 to 3, and the definition of $K_\hdec$ to go from line 3 to 4. Now using that $\sum_{k=1}^{2^{\windowIndex-1}}\state^{-}_{\run_\windowIndex, k} = 1$ and summing for $\windowIndex \in \{2, \dots, \nSplits_{\nRuns}\}$ delivers
\begin{equation}\label{eq:terror-term-A-bound}
	\terrorTerm_{\nRuns}^{(1)} \leq (\nSplits_{\nRuns} -1)K_{\hdec}\temp_{\nRuns}^{-1}
\end{equation}

Turning now to $\terrorTerm_{\nRuns}^B$ we begin by writing explicitly the braket terms associated to partitions $\cover_\windowIndex$ and $\cover_{\windowIndex-1}$ 
\begin{subequations}
\begin{align}
	\braket*{\dpoint^{-}_{\windowIndex}}{\state^{-}_{\run_\windowIndex}} &= \sum_{k=1}^{2^{\windowIndex-1}}\dpoint^{-}_{\windowIndex,k}\state^{-}_{\run_\windowIndex, k} \\
	\braket*{\dpoint^{+}_{\windowIndex}}{\new\state_{\run_\windowIndex}} &= \sum_{k=1}^{2^{\windowIndex}}\dpoint^{+}_{\windowIndex,k}\new\state_{\run_\windowIndex, k}
\end{align}
\end{subequations}
Therefore, their difference can be rewritten as:
\begin{align}
\notag
	\braket*{\dpoint^{+}_{\windowIndex}}{\new\state_{\run_\windowIndex}} - \braket*{\dpoint^{-}_{\windowIndex}}{\state^{-}_{\run_\windowIndex}} &= \sum_{k=1}^{2^{\windowIndex-1}}\bracks*{\dpoint^{+}_{\windowIndex,2k}\new\state_{\run_\windowIndex, 2k} + \dpoint^{+}_{\windowIndex,2k+1}\new\state_{\run_\windowIndex, 2k+1} - \dpoint^{-}_{\windowIndex,k}\state^{-}_{\run_\windowIndex, k}}\\
	\notag
	&= \sum_{k=1}^{2^{\windowIndex-1}}\state^{-}_{\run_\windowIndex, k}\left[\frac{\dpoint^{+}_{\windowIndex,2k} + \dpoint^{+}_{\windowIndex,2k+1}}{2} - \dpoint^{-}_{\windowIndex,k}\right]\\
	\label{eq:terror-term-B-prior-bound}
	&= \frac{1}{2}\sum_{k=1}^{2^{\windowIndex-1}}\state^{-}_{\run_\windowIndex, k}\left[\left(\dpoint^{+}_{\windowIndex,2k} - \dpoint^{-}_{\windowIndex,k} \right) + \left(\dpoint^{+}_{\windowIndex,2k+1} - \dpoint^{-}_{\windowIndex,k}\right)\right]
\end{align}
where we used the fact that $\new\state_{\run_\windowIndex, 2k} = \new\state_{\run_\windowIndex, 2k+1} = \state^{-}_{\run_\windowIndex, k}/2$. 
Finally, the second assumption of \cref{lem:splitting-residue} states that there exists a constant $\alt{K}_{\hdec}$ such that, if $S, \new\score$ are such that $\state = \mirror(S)$ and $\new\state = \mirror^{+}(\new\score)$ where $\state$ and $\new\state$ are consistent distributions on two successive partitions of $\points$, $\cover$ and $\new\cover$, then $\supnorm{S - \new\score} \leq \alt{K}_{\hdec}$. Applying this condition to $S = \temp_{\run_\windowIndex}\dpoint^{-}_{\run_\windowIndex}$ and $\new\score = \temp_{\run_\windowIndex}\dpoint^{+}_{\run_\windowIndex}$ readily gives
\begin{equation}
\supnorm{\dpoint^{-}_{\run_\windowIndex} - \dpoint^{+}_{\run_\windowIndex}} \leq \temp_{\run_\windowIndex}^{-1}\alt{K}_{\hdec},
\end{equation}
where $\dpoint^{-}_{\run_\windowIndex} \in \R^{\cover_{\windowIndex-1}}$ and $\dpoint^{+}_{\run_\windowIndex} \in \R^{\cover_{\windowIndex}}$ are cannonically casted as piecewise constant functions on $\points$. Injecting this inequality in \eqref{eq:terror-term-B-prior-bound}, using the fact that $\sum_{k=1}^{2^{\windowIndex-1}}\state^{-}_{\run_\windowIndex, k} = 1$ and summing for $\windowIndex \in \{2, \dots, \nSplits_{\nRuns}\}$ gives
\begin{equation}\label{eq:terror-term-B-almostthere-bound}
	 \sum_{\windowIndex = 2}^{\nSplits_{\nRuns}}\bracks*{\braket*{\dpoint^{+}_{\run_\windowIndex}}{\new\state_{\run_\windowIndex}} - \braket*{\dpoint^{-}_{\run_\windowIndex}}{\state^{-}_{\run_\windowIndex}}} \leq \alt{K}_{\hdec} \sum_{\windowIndex = 2}^{\nSplits_{\nRuns}} \temp_{\run_\windowIndex}^{-1} \leq \alt{K}_{\hdec} \temp_{\nRuns}^{-1}
\end{equation}
To finally conclude on a bound for $\terrorTerm_{\nRuns}^B$, we just need to bound $\braket*{\dpoint^{+}_{\windowIndex}}{\simple_{\windowIndex}^{\point}} - \braket*{\dpoint^{-}_{\windowIndex}}{\simple_{\windowIndex-1}^{\point}}$. Remarking that for all $\coverset \in \cover_{\windowIndex}$, $\simple_{\windowIndex, k}^{\point}(\coverset) = \one_{\point \in \coverset}$ and using a similar approach as before, it is straightforward to show that  
\begin{equation}
\abs{\braket*{\dpoint^{+}_{\windowIndex}}{\simple_{\windowIndex}^{\point}} - \braket*{\dpoint^{-}_{\windowIndex}}{\simple_{\windowIndex-1}^{\point}}} \leq \temp_{\run_\windowIndex}^{-1}\alt{K}_{\hdec},
\end{equation}
which combined with \eqref{eq:terror-term-B-almostthere-bound} finally gives
\begin{equation}\label{eq:terror-term-B-bound}
	 \terrorTerm_{\nRuns}^{(2)} \leq  2\alt{K}_{\hdec} (\nSplits_{\nRuns} - 1) \temp_{\nRuns}^{-1},
\end{equation}
Showing \eqref{eq:terrorterm} comes down to summing up \eqref{eq:terror-term-A-bound} \eqref{eq:terror-term-B-bound}.

\para{Step 2: Bounding $\surpriseTerm_{\nRuns}$}
We now finish this showing \eqref{eq:surpriseterm}. The bound can be directly obtained from using previously introduced tools. Indeed, we can write
\begin{align}
	\sum_{\windowIndex = 2}^{\nSplits_{\nRuns}}\bracks*{\hvol(\nSets_{\run_{\windowIndex-1}}) - \hvol(\nSets_{\run_{\windowIndex}})}
	&= \sum_{\windowIndex = 2}^{\nSplits_{\nRuns}}\bracks*{\nSets_{\run_{\windowIndex-1}}\hdec(\nSets^{-1}_{\run_{\windowIndex-1}}) - \nSets_{\run_{\windowIndex}}\hdec(\nSets^{-1}_{\run_{\windowIndex}})}
	\notag\\
	&= \sum_{\windowIndex = 2}^{\nSplits_{\nRuns}}\nSets_{\run_{\windowIndex-1}}\underbrace{\bracks*{\hdec(\nSets^{-1}_{\run_{\windowIndex-1}}) - 2\hdec(\nSets^{-1}_{\run_{\windowIndex-1}}/2 )}}_{\leq K_{\hdec} \nSets^{-1}_{\run_{\windowIndex-1}}}
	\notag\\
	&\leq \sum_{\windowIndex = 2}^{\nSplits_{\nRuns}}\nSets_{\run_{\windowIndex-1}} K_{\hdec} \nSets_{\run_{\windowIndex-1}}^{-1}
	\notag\\
	&= K_{\hdec} (\nSplits_{\nRuns} - 1).
\end{align}
where we used the fact that $\nSets_{\run_{\windowIndex}} = 2\nSets_{\run_{\windowIndex-1}}$ to go from line 1 to 2, and the first assumption of $\hdec$ to go from line 2 to 3. This directly delivers \eqref{eq:surpriseterm} after multiplying by $\temp_{\nRuns}^{-1}$, and therefore completes the proof.
\end{proof}

\para{Putting everything together}

We are finally in a position to derive our static regret guarantees for \eqref{eq:HDA}.

\begin{proof}[Proof of \cref{thm:reg-stat}]
Let $\cst_{\hdec} = 2(K_{\hdec} + \alt{K}_{\hdec})$.
Then, plugging \eqref{eq:terrorterm} and \eqref{eq:surpriseterm} into \eqref{eq:intermediary-template-bound-thm-statreg}, we obtain
\begin{align}
\label{eq:regret-simple-template}
\reg_\point(\nRuns)
	&\leq  \frac{\cst_\theta\nSplits_{\nRuns} + \hvol(\nSets_{\nRuns})}{\temp_{\nRuns+1}}
	\notag\\
	&+ \sum_{\windowIndex=\start}^{\nSplits_{\nRuns}}
			\sum_{\run\in\runs_{\windowIndex}}
				\braket*{\error_{\run}}{\state_{\run} - \simple_{\windowIndex}^{\point}}
	+ \frac{1}{2\hstr} \sum_{\run=1}^{\nRuns}\temp_{\run} \tnorm{\model_{\run}}^{2}
	+ 2\lips\diamconst \diam(\points)\sum_{\run=1}^{\nRuns} \frac{1}{\nSets_{\run}^{1/\vdim}}
\end{align}
The bound \eqref{eq:reg-bound-stat} then follows by
taking expectations in \eqref{eq:regret-simple-template},
using the bounds \eqref{eq:sigbounds} for the estimator $\model_{\run}$,
and recalling that $\nSplits_{\nRuns} = \log_2(\nSets_\nRuns)$.

We now turn to the second part of \cref{thm:reg-stat}, namely the expected regret bound \eqref{eq:reg-bound-stat-powers}.
The main challenge here is that \eqref{eq:reg-bound-stat-powers} bounds the algorithm's \emph{expected regret} (and not the incurred \emph{pseudo}-regret), so we cannot simply exchange the maximum and expectation operations.
The obstacle to this is the term $\sum_{\windowIndex=\start}^{\nSplits_{\nRuns}} \sum_{\run\in\runs_{\windowIndex}} \braket*{\error_{\run}}{\state_{\run} - \simple_{\windowIndex}^{\point}}$ in \eqref{eq:regret-simple-template}, which we will bound window-by-window below.

To do so, let
\begin{equation}
\label{eq:auxreg}
\auxreg_{\windowIndex}(\point)
	= \sum_{\run\in\runs_{\windowIndex}}
		\braket*{\error_{\run}}{\state_{\run} - \simple_{\windowIndex}^{\point}}
	\qquad
	\windowIndex = \running,\nSplits_{\nRuns},
\end{equation}
and consider the auxiliary processes
\begin{equation}
\label{eq:DA-aux}
\daux_{\run+1}
	= \daux_{\run} - \error_{\run},
\quad
\aux_{\run+1}
	= \mirror^{\cover_{\windowIndex}}(\temp_{\run+1} \daux_{\run+1}),
	\qquad
	\run = \run_{\windowIndex},\dotsc,\run_{\windowIndex+1} - 1,
\end{equation}
with $\aux_{\run_{\windowIndex}} = \state_{\run_{\windowIndex}}$.
We then have
\begin{subequations}
\label{eq:reg-aux}
\begin{align}
\auxreg_{\windowIndex}(\point)
	= \sum_{\run\in\runs_{\windowIndex}}
		\braket*{\error_{\run}}{(\state_{\run} - \aux_{\run}) + (\aux_{\run} - \simple_{\windowIndex}^{\point})}
	&\label{eq:reg-aux-diff}
	= \sum_{\run\in\runs_{\windowIndex}}
		\braket*{\error_{\run}}{\state_{\run} - \aux_{\run}}
	\\
	&\label{eq:reg-aux-error}
	+ \sum_{\run\in\runs_{\windowIndex}}
		\braket*{\error_{\run}}{\aux_{\run} - \simple_{\windowIndex}^{\point}}
\end{align}
\end{subequations}
We now proceed to bound each of the above terms in expectation:
\begin{enumerate}
[left=\parindent,label={\itshape\alph*\upshape)}]
\item
Since the term \eqref{eq:reg-aux-diff} does not depend on $\point$, we readily obtain
\begin{align}
\label{eq:reg-aux1}
\exof*{\max_{\point\in\points} \sum_{\run\in\runs_{\windowIndex}} \braket*{\error_{\run}}{\state_{\run} - \aux_{\run}}}
	&= \sum_{\run\in\runs_{\windowIndex}}
		\exof*{\exof*{\braket*{\error_{\run}}{\state_{\run} - \aux_{\run}} \given \filter_{\run}}}
	\notag\\
	&= \sum_{\run\in\runs_{\windowIndex}}
		\exof*{\braket*{\bias_{\run}}{\state_{\run} - \aux_{\run}}}
	\leq 2 \sum_{\run\in\runs_{\windowIndex}}\bbound_{\run}
\end{align}
where the last line follows from \eqref{eq:bbound} and the fact that $\state_{\run}$ and $\aux_{\run}$ are both simple strategies supported on $\cover_{\windowIndex}$ (so $\braket{\bias_{\run}}{\state_{\run} - \aux_{\run}} \leq \abs{\braket{\bias_{\run}}{\state_{\run}}} + \abs{\braket*{\bias_{\run}}{\aux_{\run}}} \leq 2 \bbound_{\run}$).


\item
For the term \eqref{eq:reg-aux-error}, applying \cref{prop:reg-DAX} to the sequence of ``virtual'' payoff functions $-\error_{\run}$, $\run=\running$, we get
\begin{equation}
\sum_{\run\in\runs_{\windowIndex}}
		\braket*{\error_{\run}}{\aux_{\run} - \simple_{\windowIndex}^{\point}}
	\leq \energy^{\cover_{\windowIndex}}_{\run_{\windowIndex}} - \energy^{\cover_{\windowIndex}}_{\run_{\windowIndex+1}}
	+ \parens*{\temp_{\run_{\windowIndex+1}}^{-1} - \temp_{\run_{\windowIndex}}^{-1}}
		\hvol(\nSets_{\run_{\windowIndex}})
	+ \frac{1}{2\hstr}
		\sum_{\run\in\runs_{\windowIndex}}
			\temp_{\run} \tnorm{\error_{\run}}^{2}.
\end{equation}
Since the \acl{RHS} of this last equation does not depend on $\point$, maximizing and taking expectations yields
\begin{equation}
\label{eq:reg-aux2}
\exof*{\max_{\point\in\points} \sum_{\run\in\runs_{\windowIndex}} \braket*{\error_{\run}}{\aux_{\run} - \simple_{\windowIndex}^{\point}}}
	\leq \energy^{\cover_{\windowIndex}}_{\run_{\windowIndex}} - \energy^{\cover_{\windowIndex}}_{\run_{\windowIndex+1}}
	+ \parens*{\temp_{\run_{\windowIndex+1}}^{-1} - \temp_{\run_{\windowIndex}}^{-1}}
		\hvol(\nSets_{\run_{\windowIndex}})
	+ \frac{1}{2\hstr} \sum_{\run\in\runs_{\windowIndex}} \temp_{\run} \zeta_{\run}^{2},
\end{equation}
where we set $\zeta_{\run}^{2} = \exof*{\tnorm{\error_{\run}}^{2}}$.
\end{enumerate}
Therefore, taking expectations in \eqref{eq:reg-aux} and plugging \cref{eq:reg-aux1,eq:reg-aux2} into the resulting expression, we obtain
\begin{equation}
\exof*{\max_{\point\in\points} \auxreg_{\windowIndex}(\point)}
	\leq \energy^{\cover_{\windowIndex}}_{\run_{\windowIndex}} - \energy^{\cover_{\windowIndex}}_{\run_{\windowIndex+1}}
	+ \parens*{\temp_{\run_{\windowIndex+1}}^{-1} - \temp_{\run_{\windowIndex}}^{-1}}
		\hvol(\nSets_{\run_{\windowIndex}})
	+ 2 \sum_{\run\in\runs_{\windowIndex}}\bbound_{\run}
	+ \frac{1}{2\hstr} \sum_{\run\in\runs_{\windowIndex}} \temp_{\run} \zeta_{\run}^{2}
\end{equation}
and hence, using \cref{lem:splitting-residue} and working as in the case of \eqref{eq:regret-simple-template}, we get:
\begin{align}
\exof*{\max_{\point\in\points}
		\sum_{\windowIndex=\start}^{\nSplits_{\nRuns}}
			\sum_{\run\in\runs_{\windowIndex}}
				\braket*{\error_{\run}}{\state_{\run} - \simple_{\windowIndex}^{\point}}
		} 
	&\leq
\sum_{\windowIndex=\start}^{\nSplits_{\nRuns}}
	\exof*{\max_{\point\in\points} \auxreg_{\windowIndex}(\point)}
	\notag\\
	&\leq \frac{\cst_\theta(\nSplits_{\nRuns}-1) + \hvol(\nSets_{\nRuns})}{\temp_{\nRuns+1}}
		+ 2 \sum_{\run=\start}^{\nRuns} \bbound_{\run}
		+ \frac{1}{2\hstr} \sum_{\run=\start}^{\nRuns} \temp_{\run} \zeta_{\run}^{2}.
\end{align}
Thus, going back to \eqref{eq:regret-simple-template} and taking expectations, we get the expected regret bound
\begin{align}
\exof*{\reg(\nRuns)}
	&\leq  2 \frac{\cst_\theta(\nSplits_{\nRuns}-1) + \hvol(\nSets_{\nRuns})}{\temp_{\nRuns+1}}
	\notag\\
	&+ 2 \sum_{\run=\start}^{\nRuns} \bbound_{\run}
		+ \frac{1}{2\hstr} \sum_{\run=1}^{\nRuns}\temp_{\run} (\zeta_{\run}^{2} + \mbound_{\run}^{2})
		+ 2\lips\diamconst \diam(\points)\sum_{\run=1}^{\nRuns} \frac{1}{\nSets_{\run}^{1/\vdim}}
\end{align}
As a last step, since $\error_{\run} = \model_{\run} - \pay_{\run}$, we readily get $\exof*{\tnorm{\error_{\run}}^{2}} \leq 2 \exof*{\tnorm{\pay_{\run}}^{2} + \tnorm{\model_{\run}}^{2}} \leq 2(\rbound^{2} + \mbound_{\run}^{2})$ by \cref{lem:Fisher}, \cref{asm:pay}, and the definition \eqref{eq:mbound} of $\mbound_{\run}$.
The bound \eqref{eq:reg-bound-stat-powers} then follows by a straightforward substitution.
\end{proof}

To proceed with the proof of the specific regret bound for the \ac{HEW} instantiation of \eqref{eq:HDA}, we will require a series of intermediate results to bound the bias and second moment of the estimator \eqref{eq:IWE}.
These are as follows.

\begin{lemma}
\label{lem:HEW-bias-and-meansquare-bound}
Running \ac{HEW} with any splitting schedule implying $\nSets_{\run}$ components of the underlying partiton of $\points$ at time $\run$, the bias and mean square of the \eqref{eq:IWE} satisfy for all $\run$:
\begin{equation}
\label{eq:HEW-bias-and-meansquare-bound}
\begin{aligned}
	\bbound_{\run} &\leq 2\lips \diamconst \diam(\points) \nSets_{\run}^{- 1 / \vdim} \\
	\mbound^2_{\run} &\leq R^2 (\nSets_{\run} + 1)
\end{aligned}
\end{equation}
\end{lemma}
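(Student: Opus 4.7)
\medskip

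The plan is to treat the two bounds separately, each via a short direct computation that exploits the fact that $\state_{\run}\in\simples_{\cover_{\run}}$ has a density that is constant on each covering set; every conditional expectation appearing in the analysis then reduces to a sum of Lebesgue integrals over the sets $\set\in\cover_{\run}$, and the inverse-probability factor in \eqref{eq:IWE} cancels against the local density.

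For the bias bound, I would first compute the conditional expectation of the \eqref{eq:IWE} estimator. Since $\state_{\run}$ is $\filter_{\run}$-measurable and $\choice_{\run}\sim\state_{\run}$, integration against the piecewise-constant density shows that, for every $\point\in\points$ with containing cell $\set_{\point}\in\cover_{\run}$, one has $\exof{\model_{\run}(\point)\given\filter_{\run}}$ equal to the Lebesgue average of $\pay_{\run}$ over $\set_{\point}$; in particular, the inverse-probability factor $1/\state_{\set_{\point},\run}$ cancels exactly against the density of $\state_{\run}$ on $\set_{\point}$. Subtracting $\pay_{\run}(\point)$, applying Lipschitz continuity (\cref{asm:pay}), and then using \cref{lem:diameter-wrt-nSets} gives the pointwise bound $\abs{\bias_{\run}(\point)} \leq \lips\diam(\set_{\point}) \leq 2\lips\diamconst\diam(\points)\nSets_{\run}^{-1/\vdim}$, which is uniform in $\point$ and therefore passes to $\abs{\braket{\bias_{\run}}{\simple}}$ for any simple comparator.

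For the mean-square bound, the first step is to compute $\tnorm{\model_{\run}}^{2} = \sum_{\set\in\cover_{\run}}\state_{\set,\run}\model_{\run,\set}^{2}$ directly from the definition of \eqref{eq:IWE}: on every set $\set\neq\set_{\run}$ the estimator equals $\rbound$, and on $\set_{\run}$ itself it equals $\rbound - (\rbound-\pay_{\run}(\choice_{\run}))/\state_{\set_{\run},\run}$. Expanding the square on $\set_{\run}$ and simplifying yields the identity $\tnorm{\model_{\run}}^{2} = \rbound^{2} - 2\rbound(\rbound-\pay_{\run}(\choice_{\run})) + (\rbound-\pay_{\run}(\choice_{\run}))^{2}/\state_{\set_{\run},\run}$. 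Taking conditional expectation, the middle term is nonpositive (since $\pay_{\run}\leq\rbound$) and can be discarded, while the inverse-probability term unfolds via the same density argument as above into $\sum_{\set\in\cover_{\run}}\leb(\set)^{-1}\int_{\set}(\rbound-\pay_{\run})^{2}\dd\leb \leq \nSets_{\run}\rbound^{2}$, yielding the stated $\rbound^{2}(\nSets_{\run}+1)$.

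The main obstacle I anticipate is keeping the cancellations clean: it is only because the Fisher norm $\tnorm{\cdot}$ is weighted by $\state_{\set,\run}$ that the $1/\state_{\set_{\run},\run}$ singularity inherent to \eqref{eq:IWE} cancels, producing a bound that scales linearly in $\nSets_{\run}$ rather than diverging as the sampling probability of some cell shrinks; the same weighting is what lets the bias depend only on $\diam(\set_{\point})$ and not on $\state_{\set_{\point},\run}$. An analogous computation in any global norm (e.g.\ $\supnorm{\cdot}$) would fail to close, which is exactly the structural reason the Fisher-adapted analysis of \eqref{eq:DAX} is needed here.
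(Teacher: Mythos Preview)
Your proposal is correct and follows essentially the same approach as the paper's proof: both compute $\exof{\model_{\run}(\point)\given\filter_{\run}}$ as the Lebesgue average of $\pay_{\run}$ over the containing cell (via the density cancellation you describe) and then invoke Lipschitz continuity plus \cref{lem:diameter-wrt-nSets} for the bias, and both derive the identity $\tnorm{\model_{\run}}^{2} = \rbound^{2} - 2\rbound(\rbound-\pay_{\run}(\choice_{\run})) + (\rbound-\pay_{\run}(\choice_{\run}))^{2}/\state_{\set_{\run},\run}$, drop the nonpositive middle term, and reduce the last term to a sum of cell averages bounded by $\nSets_{\run}\rbound^{2}$. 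Your closing remark on why the Fisher weighting is essential to the cancellation is a useful structural observation that the paper does not spell out explicitly.
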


\begin{proof}
To streamline the proof, we first need to introduce some notation.
Specifically, we will write $\cover_\run$ for the underlying partition at time $\run$, and for any $\point \in \points$, $\coverset^\point_\run$ denotes the component of $\cover_\run$ such that $\point \in \coverset^\point_\run$. Let $\choice_\run \in \points$ be the action played at time $\run$ ; to simplify the notations we use the convention introduced in the main text and denote $\coverset_\run := \coverset_\run^{\choice_\run}$. 

Moreover, we recall that $\state_\run \in \simples_{\cover_\run}$ designates the current mixed strategy at $\run$. Specifically for any $\coverset \in \cover_\run$, $\state_{\coverset, \run}$ denote the probability to pick an action $\choice_\run \in \coverset$ at time $\run$. In a slight abuse of notation, we overload $\state_\run$ and also consider it refers to the corresponding \emph{density function} defined on $\points$, \ie for all $\point \in \points$, we have
\begin{equation}
\state_\run(\point) = \leb(\coverset_\run^\point)^{-1} \state_{\coverset_\run^\point, \run}.
\end{equation}
Finally, we recall the definition of the \acdef{IWE}:
\begin{equation}
\tag{\acs{IWE}}
\model_{\run}(\point)
	= \rbound
		- \frac{\rbound - \pay_{\run}(\choice_{\run})}{\state_{\set_{\run},\run}} \oneof{\point\in\set_{\run}},
\end{equation}

\para{Bounding $\bbound_\run$ in the setting of \ac{HEW}}
Recall first that
\begin{equation}
	\textit{Bias:}
	\hspace{1em}
	\abs{\braket*{\bias_{\run}}{\simple}}
	\leq \bbound_{\run}
\end{equation}
for all $\run=\running$, and all $\simple\in\simples$.

Let $\point \in \points$.\\
By definition $\bias_\run(\point) = \pay_\run(\point) - \exof*{\model_\run(\point) | \filter_\run}$. Using \eqref{eq:IWE}, a series of mechanical computations bring
\begin{align}
    \exof*{\model_\run(\point) \given \filter_\run} &= \exof*{\rbound
		- \frac{\rbound - \pay_{\run}(\choice_{\run})}{\state_{\set_{\run},\run}} \oneof{\point\in\set_{\run}} \given \filter_\run}
	\notag\\
		&= \rbound - \int_\points\left(\frac{\rbound - \pay_{\run}(\alt{\point})}{\state_{\set^{\alt{\point}}_{\run},\run}} \oneof{\point\in\set^{\alt{\point}}_{\run}}\right)\state_\run(\alt{\point})d\alt{\point}
	\notag\\
		&= \rbound - \int_{\set^{\point}_{\run}} \left(\frac{\rbound - \pay_{\run}(\alt{\point})}{\state_{\set^{\point}_{\run},\run}}\right)\underbrace{\state_\run(\point)}_{\leb(\coverset_\run^{\point})^{-1} \state_{\coverset_\run^{\point}, \run}}d\alt{\point}
	\notag\\
		&= \rbound - \rbound + \leb(\coverset_\run^{\point})^{-1}\int_{\set^{\point}_{\run}}\pay_{\run}(\alt{\point})d\alt{\point} 
\end{align}
	
For any $\set \subset \points$ and for any measurable function $f : \points \rightarrow \R$ we denote $\bar{f}(\set) = \leb(\set)^{-1}\int_S f(\point)d\point$. We can therefore write
\begin{equation}
\exof*{\model_\run(\point) \given \filter_\run} = \bar{\pay}_\run(\set_\run^\point).
\end{equation}
Therefore, 
\begin{equation}
\bias_\run(\point) = \pay_\run(\point) - \bar{\pay}_\run(\set_\run^\point),
\end{equation}
and the fact that the stream of payoff functions $\pay_\run$ is uniformly Lipschitz directly delivers $\bias_\run(\point) \leq \lips \diam(\coverset_\run^\point)$. Using \cref{lem:diameter-wrt-nSets} finally brings, for all $\point \in \points$:
\begin{equation}
\bias_\run(\point) \leq 2\lips\diamconst \diam(\points)\nSets_\run^{-1/\vdim}
\end{equation}
which, using $\abs{\braket*{\bias_{\run}}{\simple}} \leq \supnorm{\bias_\run}\onenorm{\simple} = \supnorm{\bias_\run}$ shows that 

\begin{equation}\label{eq:bbound-HEW}
    \mu_\run \leq 2\lips\diamconst \diam(\points)\nSets_\run^{-1/\vdim}
\end{equation}

\para{Bounding $\mbound^2_\run$ in the setting of \ac{HEW}}

We recall the definition of $\mbound$ from \eqref{eq:mbound}:
\begin{equation}
	\textit{Mean square:}
	\hspace{1em}
	\exof*{\tnorm{\model_{\run}}^{2} \given \filter_{\run}}
	\leq \mbound_{\run}^{2}
\end{equation}
To simplify the incoming computations, we denote $l_\run:\points \rightarrow \R+$ the \emph{loss} function such that for all $\point \in \points$, $l_\run(\point) = \rbound - \pay_\run(\point)$. Since $0 \leq \pay_\run \leq \rbound$, we also have $0 \leq l_\run \leq \rbound$. For any $\set \subset \points$, we also introduce $\delta_\set : \points \rightarrow \{0,1\}$ the function such that for all $\point \in \points$, $\delta_\set(\point) = \oneof{\point \in \set}$.

With this in hand, we can proceed to the following rewriting of $\tnorm{\model_{\run}}^{2}$, which we recall is a random quantity given filtration $\filter_\run$ since it depends on the choice of the $\run^{th}$ action, $\choice_\run$:

\begin{align}
\tnorm{\model_{\run}}^{2}
	= \braket*{\model_\run^2}{\state_\run}
    &= \braket*{\left(\rbound - \frac{l_\run(\choice_\run)}{\state_{\set_{\run},\run}} \delta_{\set_\run}\right)^2}{\state_\run}
    \notag\\
    &= \rbound^2 - 2\rbound\frac{l_\run(\choice_\run)}{\state_{\set_{\run},\run}}\braket*{\delta_{\set_\run}}{\state_\run} + \frac{l_\run(\choice_\run)^2}{\state_{\set_{\run},\run}^2}\braket*{\delta^2_{\set_\run}}{\state_\run}
\end{align}
For any $\set \subset \points$, $\delta_\set^2 = \delta_\set$, and simple computations give
\(
\braket*{\delta_{\set_\run}}{\state_\run} = \state_{\set_\run, \run}.
\)
This then delivers the following expression for $\tnorm{\model_{\run}}^{2}$:
\begin{equation}\label{eq:mbound-HEW-intermediary}
    \tnorm{\model_{\run}}^{2} = \rbound^2 - 2\rbound l_\run(\choice_\run) + \frac{l_\run(\choice_\run)^2}{\state_{\set_\run, \run}}.
\end{equation}

The aim of the proof is to bound the expectancy of \eqref{eq:mbound-HEW-intermediary} given filtration $\filter_\run$. We are primarily interested in the quantity $\exof*{\frac{l_\run(\choice_\run)^2}{\state_{\set_\run, \run}} \given \filter_\run}$ which is the most complex to handle. We write
\begin{align}
    \exof*{\frac{l_\run(\choice_\run)^2}{\state_{\set_\run, \run}} \given \filter_\run} &= \int_\points \frac{l_\run(\alt{\point})^2}{\state_{\set^{\alt{\point}}_\run, \run}} \state_\run(\alt{\point})d\alt{\point}
    \notag\\
    &= \sum_{\set \in \cover_\run}\int_\set \frac{l_\run(\alt{\point})^2}{\state_{\set, \run}} \underbrace{\state_\run(\alt{\point})}_{\leb(\set)^{-1}\state_{\set, \run}}d\alt{\point}
    \notag\\
    &= \sum_{\set \in \cover_\run}\leb(\set)^{-1}\int_\set l_\run(\alt{\point})^2d\alt{\point}
    = \sum_{\set \in \cover_\run} \bar{l^2_\run}(\set).
\end{align}
Now using that $\abs{l_\run} \leq \rbound$ we have that $\bar{l^2_\run}(\set) \leq \rbound^2$ for any $\set \in \cover_\run$ and therefore
\begin{equation}\label{eq:mbound-HEW-intermediary2}
    \exof*{\frac{l_\run(\choice_\run)^2}{\state_{\set_\run, \run}} \given \filter_\run} \leq |\cover_\run|\rbound^2 = \nSets_\run \rbound^2
\end{equation}
Then, remarking that $\exof*{l_\run(\choice_\run) \given \filter_\run} \geq 0$ and combining \eqref{eq:mbound-HEW-intermediary} and \eqref{eq:mbound-HEW-intermediary2} we finally get
$$
\exof*{\tnorm{\model_{\run}}^{2} \given \filter_\run} \leq \rbound^2 (\nSets_\run + 1)
$$
which delivers
\begin{equation}\label{eq:mbound-HEW}
\mbound^2_\run \leq \rbound^2 (\nSets_\run + 1),
\end{equation}
and concludes the proof.
\end{proof}

To conclude, we now need to relate $\nSets_\run$, the number of sets in partition $\cover_\run$ at time $\run$, and the chosen splitting schedule. In case of a logarithmic splitting schedule $\scheduler_{\run} = \nSetsExp \log_{2}\run$, we present the following result giving upper and lower bound on $\nSets_{\run}$ with respect to $\run$ and $\nSetsExp$.

\begin{lemma}
\label{lem:logarithmic-splitting-schedule}
In case of a logarithmic splitting schedule $\scheduler_{\run} = \nSetsExp \log_{2}\run$, we have for every $\run$:
\begin{equation}
\label{eq:logarithmic-splitting-schedule}
\frac{1}{2}\run^{\nSetsExp} \leq \nSets_{\run} \leq \run^{\nSetsExp}	
\end{equation}
\end{lemma}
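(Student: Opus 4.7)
The plan is to reduce the claim to two elementary inequalities on the floor function, leveraging two definitional identities already recorded in \cref{sec:hierarchical}. First, by construction every cover $\cover_{\run}$ has $\nSets_{\run} = 2^{\nSplits_{\run}}$ leaves, where $\nSplits_{\run}$ counts the splitting events carried out strictly before round $\run$. Second, the map between a scheduler sequence $\{\scheduler_{\run}\}_{\run}$ and its associated splitting schedule $\splitsched(\scheduler)$ was set up so that $\nSplits_{\run} = \floor{\scheduler_{\run}}$ for every $\run$. Substituting the logarithmic scheduler $\scheduler_{\run} = \nSetsExp \log_{2}\run$ therefore gives the closed-form expression
\begin{equation*}
\nSets_{\run}
    = 2^{\floor{\nSetsExp \log_{2}\run}},
\end{equation*}
and the entire lemma will follow from the standard bracket on the floor function, namely $x - 1 < \floor{x} \leq x$.

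Concretely, for the upper bound I would apply $\floor{\nSetsExp \log_{2}\run} \leq \nSetsExp \log_{2}\run$, exponentiate by $2$, and invoke the identity $2^{\nSetsExp \log_{2}\run} = \run^{\nSetsExp}$ to conclude $\nSets_{\run} \leq \run^{\nSetsExp}$. For the lower bound I would apply the strict estimate $\floor{\nSetsExp \log_{2}\run} > \nSetsExp \log_{2}\run - 1$, which upon exponentiating by $2$ yields $\nSets_{\run} > 2^{-1} \run^{\nSetsExp} = \tfrac{1}{2}\run^{\nSetsExp}$, so in particular $\nSets_{\run} \geq \tfrac{1}{2}\run^{\nSetsExp}$ as required. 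There is no real obstacle here: the result is a one-line computation, and the only point meriting attention is that the lower bound holds in its strict form, which is what makes the non-strict inequality \eqref{eq:logarithmic-splitting-schedule} valid even for integer values of $\nSetsExp \log_{2}\run$.
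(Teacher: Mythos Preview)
Your proposal is correct and follows essentially the same approach as the paper: both reduce to the identity $\nSets_{\run} = 2^{\floor{\nSetsExp \log_{2}\run}}$ via $\nSplits_{\run} = \floor{\scheduler_{\run}}$ and then apply the standard floor bounds $x-1 < \floor{x} \leq x$ together with the monotonicity of $x \mapsto 2^{x}$. There is no substantive difference.
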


\begin{proof}
Let $\scheduler_{\run} = \nSetsExp \log_{2}\run$. By definition of the scheduler function $\scheduler_\run$, this implies that at any time $\run$, $\floor{\scheduler_\run}$ splitting events have occurred. Therefore, we have
$\nSplits_\run = \floor{\scheduler_\run} = \floor{\nSetsExp \log_{2}\run}$.
Since $\nSets_\run = 2^{\nSplits_\run}$ by definition, we get
\begin{equation}\label{eq:nSets-log-splitting}
    \nSets_\run = 2^{\floor{\nSetsExp \log_{2}\run}}.
\end{equation}
The result then follows directly from remarking that
\begin{equation}
\nSetsExp \log_{2}\run - 1 < \floor{\nSetsExp \log_{2}\run} \leq \nSetsExp \log_{2}\run
\end{equation}
and using the fact that $x\mapsto2^x$ is an increasing function.
\end{proof}

We are now in a position to prove our main regret guarantee for the \ac{HEW} algorithm.
For convenience, we restate the relevant result below.

\regstatIWE*

\begin{proof}[Proof of \cref{cor:reg-stat-IWE}]
The idea of this proof consists in bounding the different terms on the right hand side of \eqref{thm:reg-stat} in the case of \ac{HEW} with learning rate $\temp_{\run} \propto \run^\pexp$ and a logarithmic splitting schedule $\scheduler_{\run} = \nSetsExp \log_{2}\run$.
With this in mind, combining \cref{lem:logarithmic-splitting-schedule,lem:HEW-bias-and-meansquare-bound} we get
\begin{subequations}
\begin{alignat}{3}
\mbound_\run^2
	&\leq \rbound^2(\nSets_\run + 1)
	&&\leq \rbound^2(\run^\nSetsExp + 1)
	&&= \bigoh(\run^\nSetsExp)
	\\
\bbound_\run
	&\leq 2\lips\diamconst \diam(\points) \nSets_\run^{-1/\vdim}
	&&\leq 2\lips\diamconst \diam(\points) \left(\frac{1}{2}\run^\nSetsExp\right)^{-1/\vdim}
	&&= \bigoh(\run^{-\nSetsExp/\vdim}).
\end{alignat}
\end{subequations}
The result stated in \cref{cor:reg-stat-IWE} directly follows from injecting this into \eqref{eq:reg-bound-stat-powers}.
\end{proof}

\subsection{Dynamic regret guarantees}

We now turn to the dynamic regret guarantees of \eqref{eq:HDA} as stated in \cref{thm:reg-dyn} below.

\regdyn*

\begin{proof}[Proof of \cref{thm:reg-dyn}]
Our proof will be again based on a window-by-window analysis.
However, instead of focusing on the windows of $\{1,\dotsc,\nRuns\}$ over which the cover of \eqref{eq:HDA} remains constant and fixed, we will decompose the horizon of the process into $\nBatches$ \emph{virtual} batches, and we will compare the learner's static and dynamic regret over each such batch.%
\footnote{A further important difference is that these virtual batches will all have the same length, in contrast to the windows of time between two consecutive splitting events.}
We will then harvest a bound for the aggregate dynamic regret over $\nRuns$ stages following a comparison technique first introduced by \citet{BGZ15}.

To proceed, write the interval $\runs = \window{1}{\nRuns}$ as the union of $\nBatches$ contiguous sub-intervals $\runs_{\iBatch}$, $\iBatch=1,\dotsc,\nBatches$, each of length $\batch$ (with the possible exception of the final batch, which might be shorter).
Formally, let $\batch = \ceil{\nRuns^{\qexp}}$ for some constant $\qexp\in[0,1]$ to be determined later;
then the number of virtual batches is $\nBatches = \ceil{\nRuns/\batch} = \Theta(\nRuns^{1-\qexp})$ and we have
\begin{equation}
\label{eq:batch}
\runs_{\iBatch}
	= \window{(\iBatch-1)\batch+1}{\iBatch\batch}
	\qquad
	\text{for all $\iBatch = 1,\dotsc,\nBatches-1$},
\end{equation}
with $\runs_{\nBatches} = \runs \setminus \union_{\iBatch=1}^{\nBatches-1} \batch_{\iBatch}$ being excluded from the above enumeration as (possibly) smaller than the rest.

Now, focusing on the $\iBatch$-th batch $\runs_{\iBatch}$ of $\runs$ and taking
$\sol_{\run} \in \argmax_{\point\in\points} \pay_{\run}(\point)$
and
$\sol_{\iBatch} \in \argmax_{\point\in\points} \sum_{\run\in\runs_{\iBatch}} \pay_{\run}(\point)$,
we get
\begin{equation}
\pay_{\run}(\sol_{\run}) - \braket*{\pay_{\run}}{\state_{\run}}
	= \pay_{\run}(\sol_{\iBatch}) - \braket*{\pay_{\run}}{\state_{\run}}
		+ \pay_{\run}(\sol_{\run}) - \pay_{\run}(\sol_{\iBatch})
\end{equation}
We may then bound the dynamic regret incurred by \eqref{eq:HDA} over the interval $\runs_{\iBatch}$ as
\begin{align}
\label{eq:dyn=reg+var}
\dynreg(\runs_{\iBatch})
	&= \sum_{\run\in\runs_{\iBatch}}
		\bracks*{\pay_{\run}(\sol_{\run}) - \braket*{\pay_{\run}}{\state_{\run}}}
		+ \sum_{\run\in\runs_{\iBatch}} \bracks*{\pay_{\run}(\sol_{\run}) - \pay_{\run}(\sol_{\iBatch})}
	\notag\\
	&= \reg(\runs_{\iBatch})
		+ \sum_{\run\in\runs_{\iBatch}} \bracks*{\pay_{\run}(\sol_{\run}) - \pay_{\run}(\sol_{\iBatch})}.
\end{align}
Moving forward, we will bound the difference $\sum_{\run\in\runs_{\iBatch}} \bracks*{\pay_{\run}(\sol_{\run}) - \pay_{\run}(\sol_{\iBatch})}$ following a comparison technique originally due to \citet[Prop.~2]{BGZ15}.
To do so, let $\runstart_{\iBatch} = \min\runs_{\iBatch}$ denote the starting epoch of the $\iBatch$-th virtual batch, and let $\sol_{\runstart_{\iBatch}}$ denote a maximizer of the first payoff function encountered in the batch $\runs_{\iBatch}$.
We then obtain by construction
\begin{align}
\label{eq:varbound}
\sum_{\run\in\runs_{\iBatch}} \bracks*{\pay_{\run}(\sol_{\run}) - \pay_{\run}(\sol_{\iBatch})}
	&\leq \sum_{\run\in\runs_{\iBatch}} \bracks*{\pay_{\run}(\sol_{\run}) - \pay_{\run}(\sol_{\runstart_{\iBatch}})}
	\leq \batch \max_{\run\in\runs_{\iBatch}} \bracks*{\pay_{\run}(\sol_{\run}) - \pay_{\run}(\sol_{\runstart_{\iBatch}})}
	\leq 2\batch \tvar[\iBatch],
\end{align}
where we used the fact that $\abs{\runs_{\iBatch}} \leq \batch$ for all $\iBatch=1,\dotsc,\nBatches$ (this time \emph{including} the last batch).
Hence, by combining \eqref{eq:varbound} and \eqref{eq:dyn=reg+var}, we get
\begin{align}
\dynreg(\runs_{\iBatch})
	\leq \reg(\runs_{\iBatch})
		+ 2 \batch \tvar[\iBatch]
\end{align}
Thus, finally, after summing over all batches and taking expectations, we obtain the static-to-dynamic comparison bound
\begin{align}
\label{eq:dyn=reg+var2}
\exof*{\dynreg(\nRuns)}
	\leq \sum_{\iBatch=\start}^{\nBatches} \exof*{\reg(\runs_{\iBatch})}
		+ 2 \batch \tvar.
\end{align}
We will proceed to bound $\exof*{\dynreg(\nRuns)}$ by bounding the ``batch regret'' $\sum_{\iBatch=\start}^{\nBatches} \exof*{\reg(\runs_{\iBatch})}$ and retroactively tuning the batch-size $\batch$.

To carry out this approach, \cref{thm:reg-stat} with $\hdec(\point) = \point\log\point$, $\temp_{\run} \propto \run^{-\pexp}$ and $\nSplits_{\run} = \floor{\nSetsExp \log_{2}\run}$ readily yields
\begin{equation}
\exof*{\reg(\runs_{\iBatch})}
	= \bigoh\parens*{
		(\iBatch\batch)^{\pexp} 
		+ \sum_{\run\in\runs_{\iBatch}} \run^{-\nSetsExp / \vdim}
		+ \sum_{\run\in\runs_{\iBatch}} \run^{-\bexp}
		+ \sum_{\run\in\runs_{\iBatch}} \run^{2\mexp-\pexp}}
\end{equation}
and hence, after summing over all batches:
\begin{align}
\label{eq:reg-sum}
\sum_{\iBatch=1}^{\nBatches} \exof*{\reg(\runs_{\iBatch})}
	&= \bigoh\parens*{
		\batch^{\pexp} \sum_{\iBatch=1}^{\nBatches} \iBatch^{\pexp}
		+ \sum_{\run=\start}^{\nRuns} \run^{-\nSetsExp / \vdim}
		+ \sum_{\run=\start}^{\nRuns} \run^{-\bexp}
		+ \sum_{\run=\start}^{\nRuns} \run^{2\wexp-\pexp}
		} 
	\notag\\[1ex]
	&= \bigoh\parens*{
		\batch^{\pexp} \nBatches^{1+\pexp}
		+ \nRuns^{1-\nSetsExp / \vdim}
		+ \nRuns^{1-\bexp}
		+ \nRuns^{1+2\wexp-\pexp}
		}. 
\end{align}
Now, since $\batch = \bigoh(\nRuns^{\qexp})$ and $\nBatches = \bigoh(\nRuns/\batch) = \bigoh(\nRuns^{1-\qexp})$, the first summand above can be bounded as
\begin{equation}
\batch^{\pexp} \nBatches^{1+\pexp}
	= \bigoh((\nBatches\batch)^{\pexp} \, \nBatches)
	= \bigoh(\nRuns^{\qexp\pexp} \nRuns^{(1-\qexp)(1+\pexp)})
	= \bigoh(\nRuns^{1+\pexp-\qexp}).
\end{equation}
Thus, going back to \eqref{eq:reg-sum} and \eqref{eq:dyn=reg+var2}, we get the dynamic regret bound
\begin{equation}
\exof*{\dynreg(\nRuns)}
	= \bigoh\parens*{
		\nRuns^{1 + \pexp - \qexp}
		+ \nRuns^{1 - \nSetsExp / \vdim}
		+ \nRuns^{1 - \bexp}
		+ \nRuns^{1 + 2\wexp - \pexp}
		+ \nRuns^{\qexp}\tvar}. 
\end{equation}
To calibrate the above expression, the ``virtual batch-size'' exponent $\qexp$ must be chosen such that $1 + \pexp - \qexp = 1 + 2\wexp - \pexp$, \ie $\qexp = 2\pexp - 2\wexp$.
This choice then yields the bound
\begin{equation*}
\exof*{\dynreg(\nRuns)}
	= \bigoh\parens*{
		\nRuns^{1 + 2\wexp - \pexp}
		+ \nRuns^{1 - \nSetsExp / \vdim}
		+ \nRuns^{1 - \bexp}
		+ \nRuns^{2\pexp - 2\wexp}\tvar}.
\qedhere
\end{equation*}
\end{proof}

Finally, \cref{cor:reg-dyn-IWE} simply follows by combining the dynamic regret guarantee of \cref{thm:reg-dyn} with the bounds of \cref{lem:HEW-bias-and-meansquare-bound} for the \ac{IWE} estimator.

\section{Numerical experiments}
\label{app:numerics}

In this appendix, we present the details of the numerical experiments for the \ac{HEW} algorithm \textendash\ dubbed \texttt{Hierarchical} in the sequel. Specifically, we ran different adversarial models (reward design mechanisms) and compared the performance of \texttt{Hierarchical} with two baselines:
\begin{itemize}
\item
a fixed-mesh strategy \textendash\ \texttt{Grid} \textendash\ 
that employs an underlying structure, being an \textit{a priori} chosen mesh of discretization for the search space $\points$, into grid points, and then uses on top of it, the EXP3 algorithm \citep{ACBF02} with rewards sampled at the latter grid points, as per the \ac{DAX} template;
and
\item
the \texttt{Kernel} policy proposed by \citet{HMMR20}, which plugs a kernel density estimate around the sampled action points, and combines it with an explicit exploration term, as per \eqref{eq:IWE3}. 
\end{itemize}

About the adversary functions we choose analitycal functions, defined on the compact search space $\points\subset \R^d$ for $d \in \{ 1,2\}$. Specifically, we instantiate the adversary strategy (the reward mechanism) being:
\begin{itemize}
    \item \texttt{Sine}, with $\points=[0,1]^d$: $\pay_{\run}\from\points\to\R$ is a linear combination of trigonometric terms with different frequencies and amplitudes, arbitrarily drawn, allowing us to know the best action to choose in hindsight (or instantaneously), in order to compute the instantaneous regret. However, we stress that this setting is more a stationary bandit than a proper adversarial one. 
    For this first adversary, \textit{the dynamic regret and the static regret coincide}. That is why we only display the static regret behavior hereafter. We denote respectively the \texttt{Sine} strategy on the 1-dimensional and 2-dimensional search space  by (resp.) \texttt{Sine1D} and \texttt{Sine2D}.

    
    \item \texttt{Gauss} (gaussian reward with stochastic mean), $\points=[-1,1]^d$: a stochastic bandit, with multinomial type reward (with fixed covariance), with mean $\mu$ randomly drawn (iid) round after round, following a uniform distribution on the action space $[-1,1]^d$. We can compute the asymptotic averaged reward over a high number of rounds (used to know the best fixed action). We draw in Figure \ref{fig:sa} some realization of the gaussian reward in 1 dimension and we display on Figure \ref{fig:sb} its asymptotic mean, averaged over 10000 runs. This plot has been produced using Monte Carlo averaging technique and assess the location of the known best action (0). We denote respectively the \texttt{Gauss} strategy on the 1-dimensional and 2-dimensional search space  by (resp.) \texttt{Gauss1D} and \texttt{Gauss2D}.
\end{itemize}

\begin{figure}
\centering
\begin{subfigure}[b]{.45\linewidth}
\centering
\includegraphics[width=\textwidth]{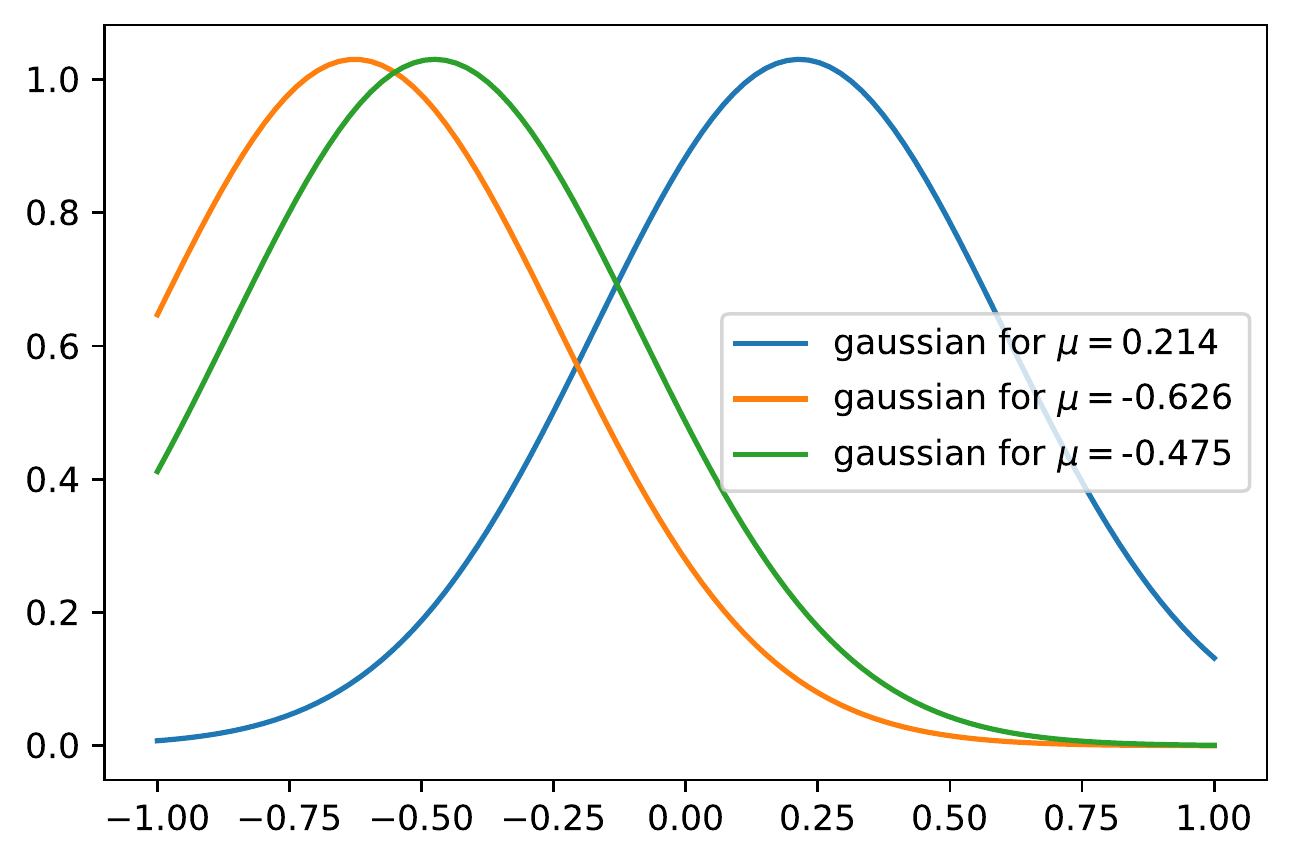}
\caption{3 realizations of such reward functions}
\label{fig:sa}
\end{subfigure}
\quad
\begin{subfigure}[b]{.45\linewidth}
\includegraphics[width=\textwidth]{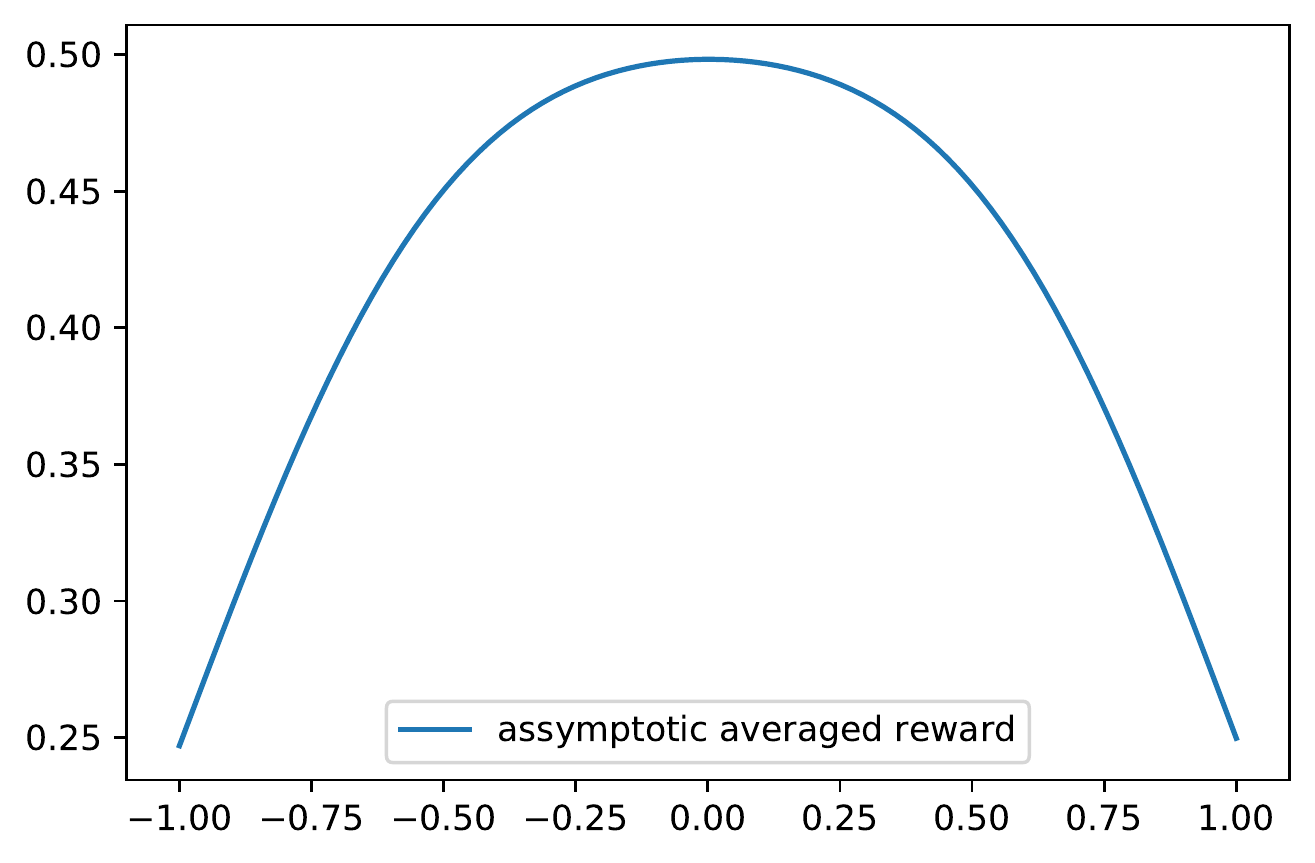}
\caption{Asymptotic averaged reward}
\label{fig:sb}
\end{subfigure}
\caption{Gaussian adversarial reward}
\label{fig:gaussian}
\end{figure}


All numerical experiments were run on a machine with 48 CPUs (Intel(R) Xeon(R) Gold 6146 CPU @ 3.20GHz), with 2 Threads per core, and 500Go of RAM. The horizon was set to $\nRuns= 10^{5}$, and we used the anytime version of every algorithm. We run the algorithm with 46 initial seeds, and then averaged the regret per round, divided by the current round (to exhibit the sub-linear behavior), over the 46 seeds. We add the box-and-whiskers plot showing the confidence interval of the quantity $\reg(t)/t$ for 2 specific round, namely at $t=10^4$ and $t=10^5$, computed empirically on the 46 seeds. We present different sets of hyperparameters for each algorithms, specifically:
\begin{itemize}
    \item \texttt{Kernel}:
    \begin{itemize}
        \item $(\gamma_0,\gamma_r)$ if the learning rate is equal to $\gamma_t=\gamma_0/t^{\gamma_r}$,
        \item number of arms used to store an approximate of the functions defined on $\points$,
        \item $(w_0,w_r )$ if the windows of the squared kernel varies as $w_t=w_0/t^{w_r}$,
        \item $(\text{ee}_0,\text{ee}_r )$ if the explicit exploration equals $\text{ee}_t=\text{ee}_0/t^{\text{ee}_r}$.
    \end{itemize}
    \item \texttt{Grid}
    \begin{itemize}
        \item $\gamma_0$ if the learning rate is equal to $\gamma_t=\gamma_0/t^{\frac{d+1}{d+2}}$,
        \item number of arms used to discretize $\points$ in hindsight, 
    \end{itemize}
    \item \texttt{Hierarchical}
    \begin{itemize}
        \item $(\gamma_0,\gamma_r)$ if the learning rate is equal to $\gamma_t=\gamma_0/t^{\gamma_r}$ 
    \end{itemize}
\end{itemize}
We would like to stress that the number of hyperparameters are not the same, and that the \ac{HEW} algorithm enjoys a lower number of tunable hyperparameters.

On \cref{fig:sinus1D} we plot the mean regret for the \texttt{Sine1D} adversary, with different hyperparameters, over $\nRuns=10^5$ iterations. We display the empirically distribution of such regret divided by the current round $t$ on \cref{fig:sinus1D_slice}, to exhibit the sub-linear behavior. We process the same way on \cref{fig:sinus2D}, and \cref{fig:sinus2D_slice} for the \texttt{Sine2D} adversary, \cref{fig:gauss1D}, \cref{fig:gauss1D_dyn} and \cref{fig:gauss1D_slice} for the \texttt{Gauss1D} adversary and finally \cref{fig:gauss2D}, \cref{fig:gauss2D_dyn} and \cref{fig:gauss2D_slice} for the \texttt{Gauss2D} adversary.

Although the theoretical guarantees of \texttt{Hierarchical} are better, we noticed that the experiments are \emph{very sensitive} to the choice of hyper-parameters and adversary model, which in some cases favored the \texttt{Kernel} algorithm. Moreover, \texttt{Kernel} requires the storage of the entire estimated model as a function defined on the whole domain $\points$, whereas \texttt{Hierarchical} only requires a sub-linear number of bits. Because of this, the per-iteration complexity of \texttt{Kernel} is several orders of magnitude greater than that of \texttt{Hierarchical}; this forced us to consider horizons $T \leq 10^5$ for which the rate difference between \texttt{Kernel} and \texttt{Hierarchical} is relatively small.




\begin{figure}
    \centering
    \includegraphics[width=0.8\textwidth]{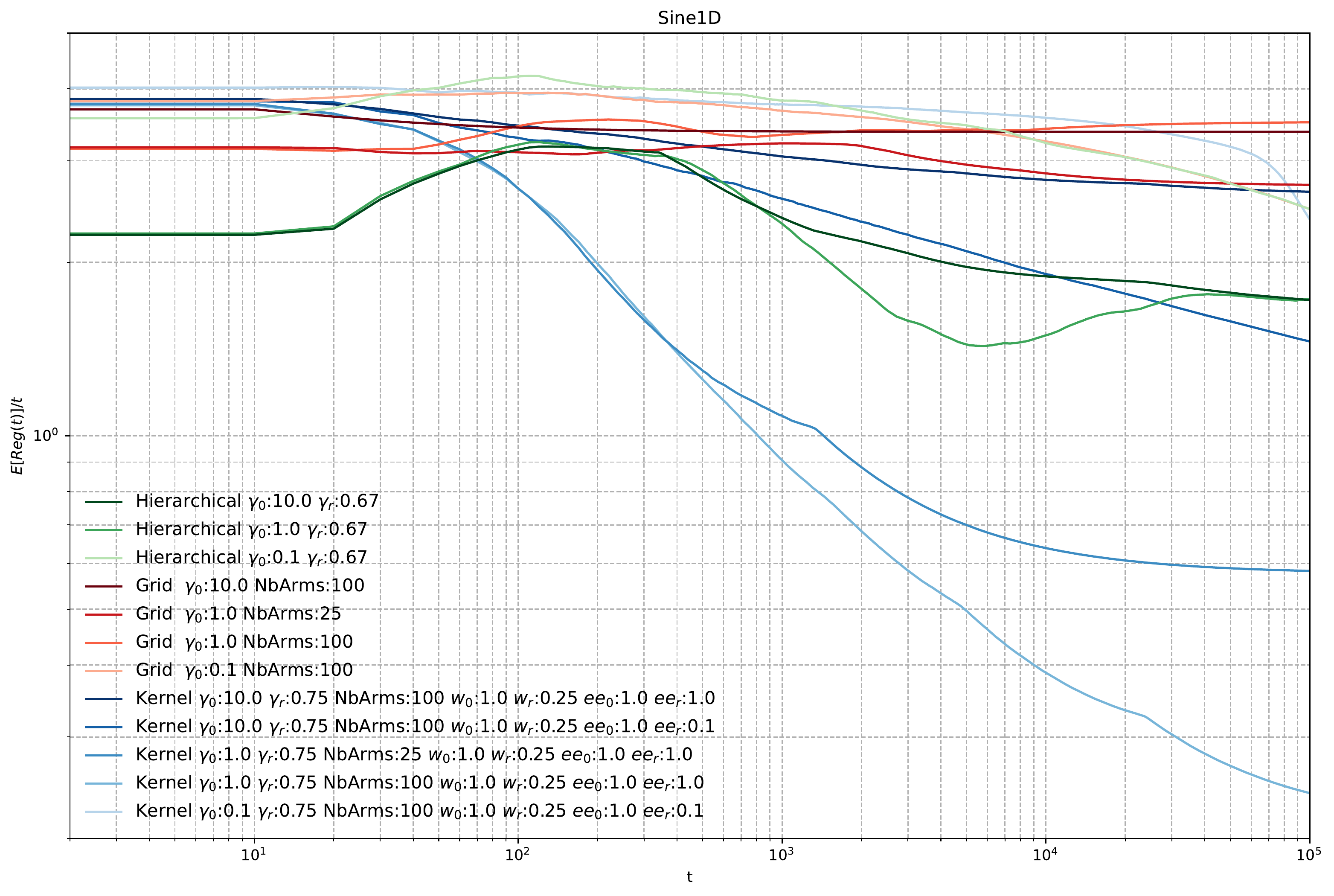}
    \caption{\legendStatic \texttt{Sine1D} adversary.}
\label{fig:sinus1D}
\end{figure}


\begin{figure}
    \centering
    \includegraphics[width=0.95\textwidth]{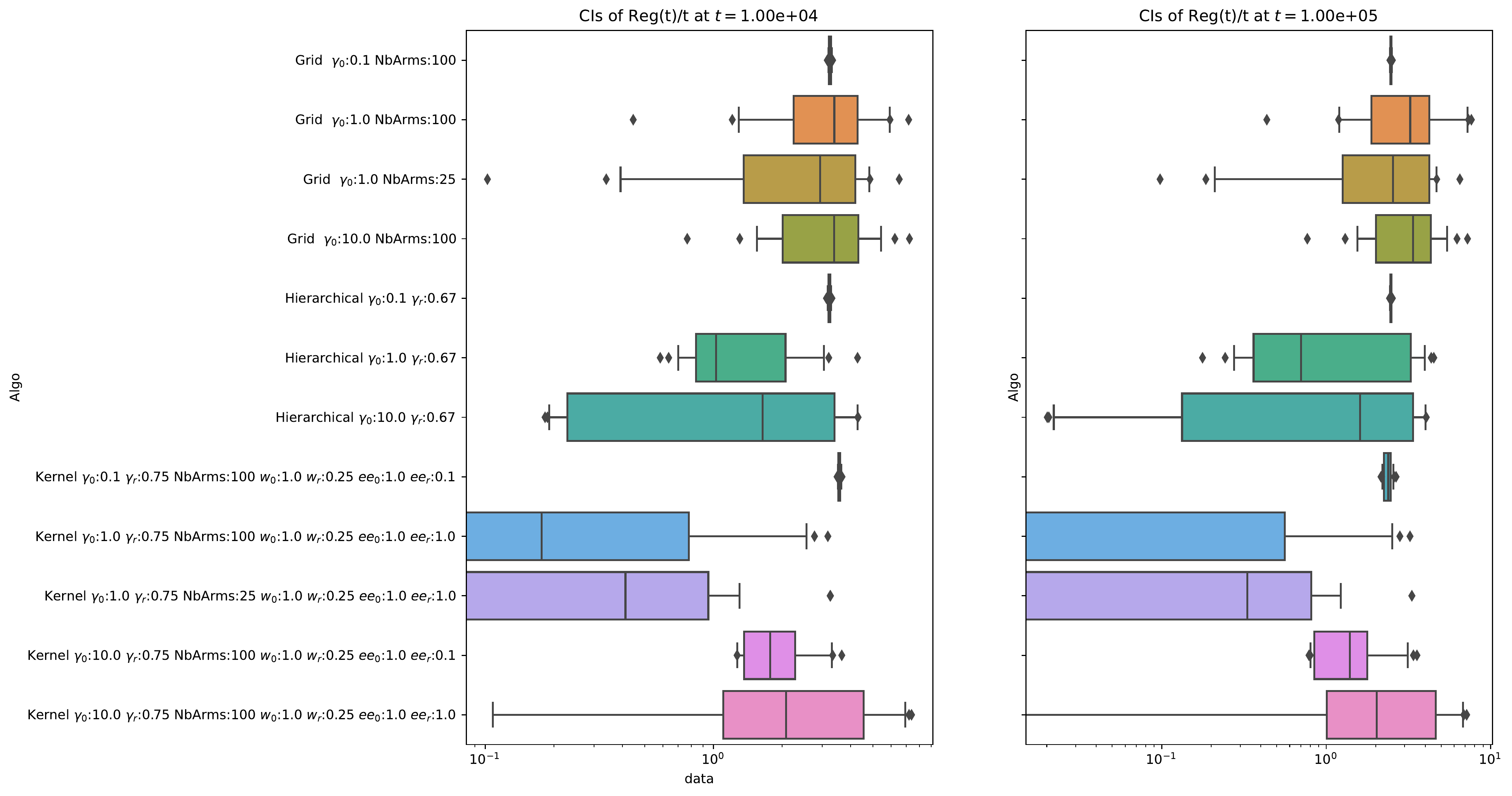}
    \caption{\legendSlide \texttt{Sine1D} adversary.}
\label{fig:sinus1D_slice}
\end{figure}

\begin{figure}
    \centering
    \includegraphics[width=0.8\textwidth]{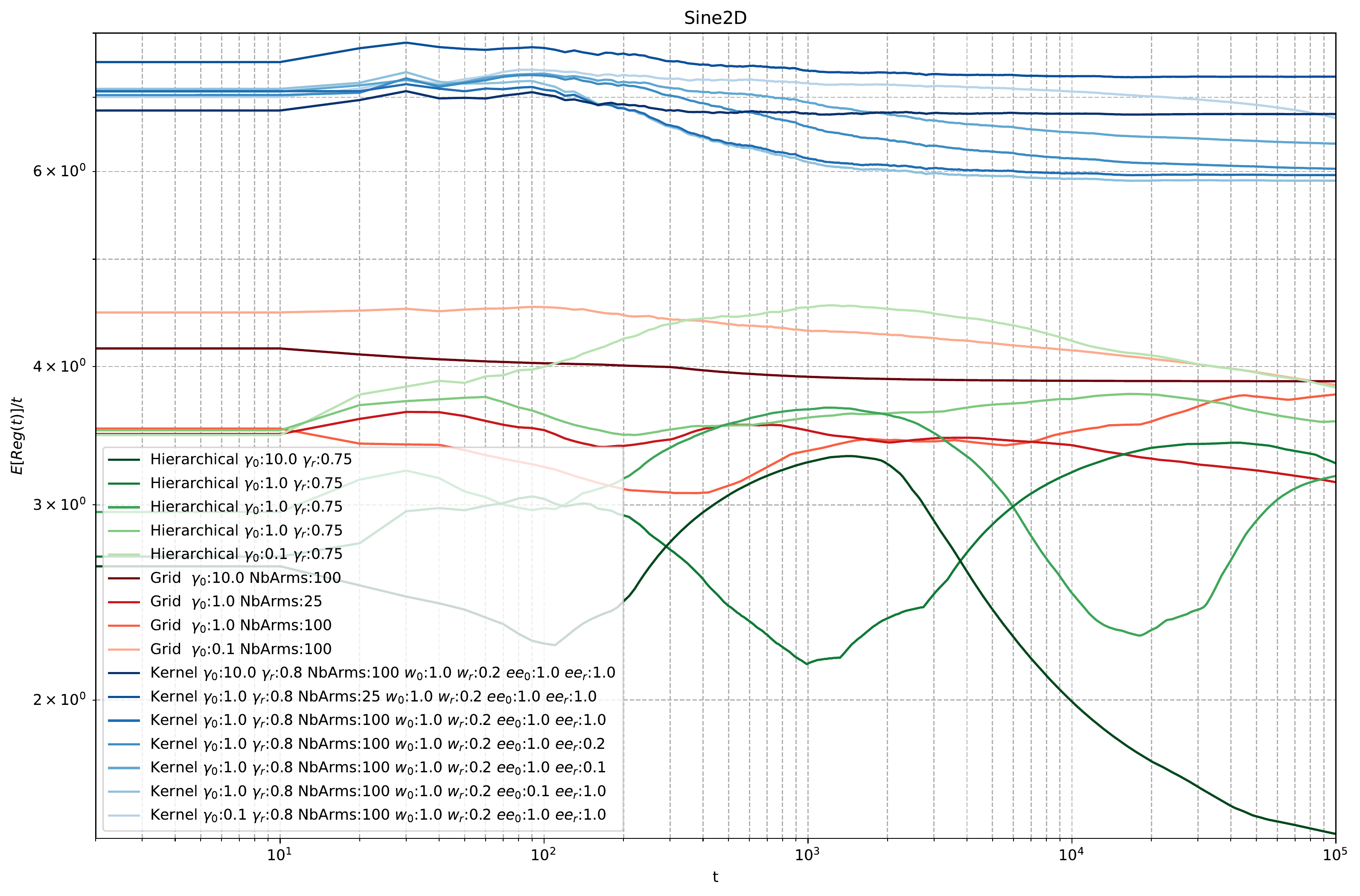}
    \caption{\legendStatic \texttt{Sine2D} adversary.}
\label{fig:sinus2D}
\end{figure}


\begin{figure}
    \centering
    \includegraphics[width=0.95\textwidth]{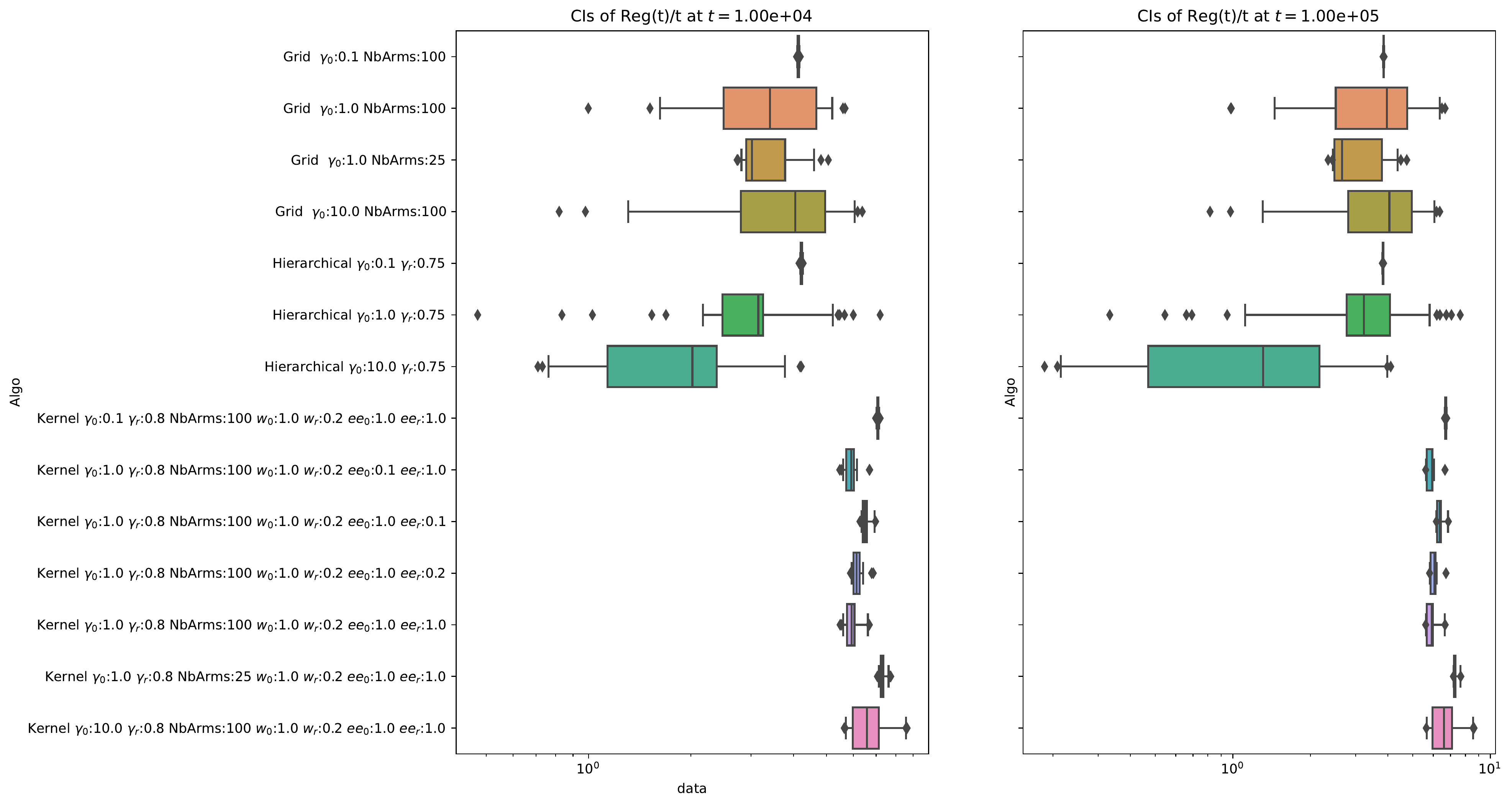}
    \caption{\legendSlide \texttt{Sine2D} adversary.}
\label{fig:sinus2D_slice}
\end{figure}

\begin{figure}
    \centering
    \includegraphics[width=0.8\textwidth]{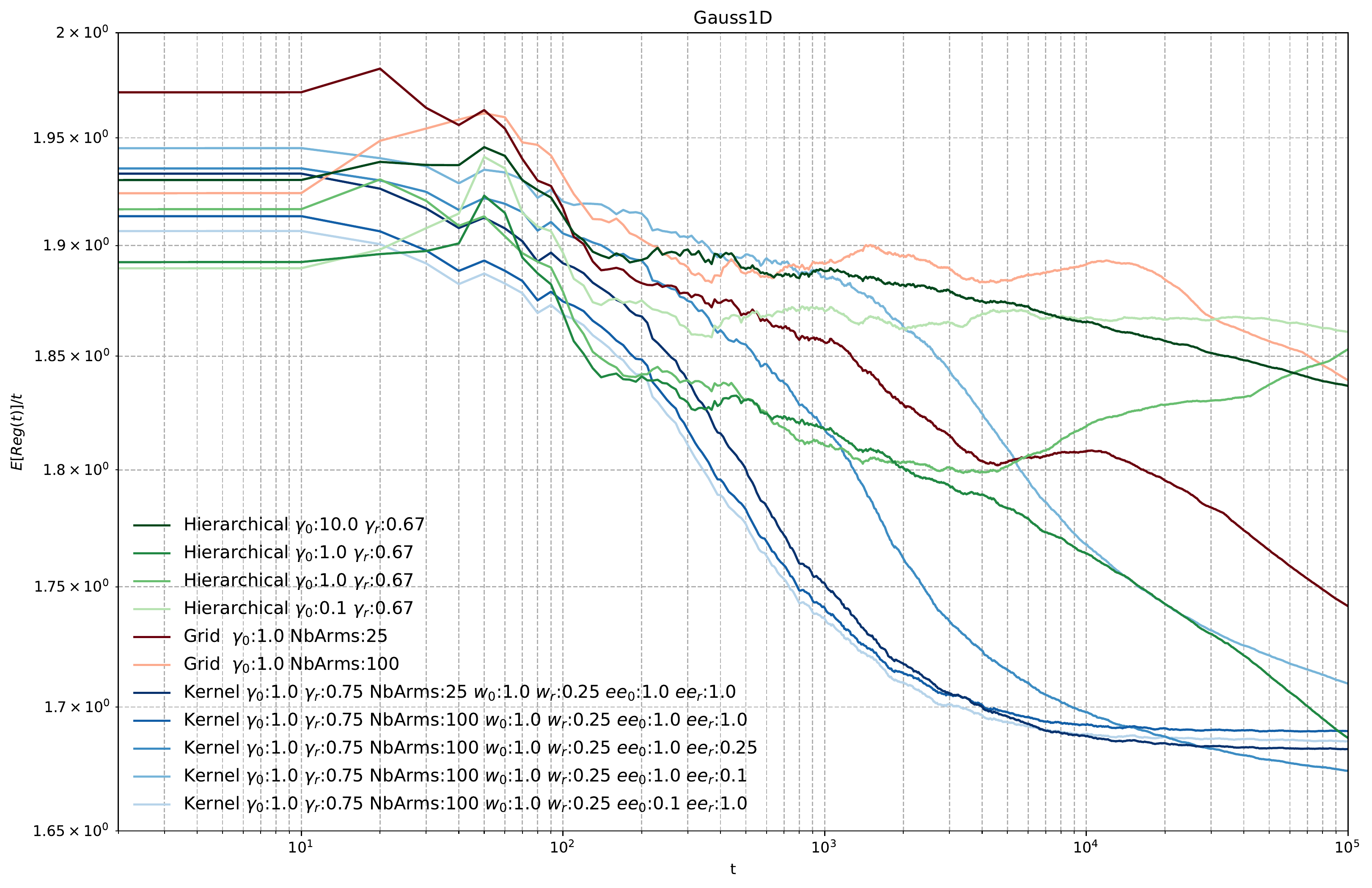}
    \caption{\legendStatic \texttt{Gauss1D} adversary.}
\label{fig:gauss1D}
\end{figure}

\begin{figure}
    \centering
    \includegraphics[width=0.8\textwidth]{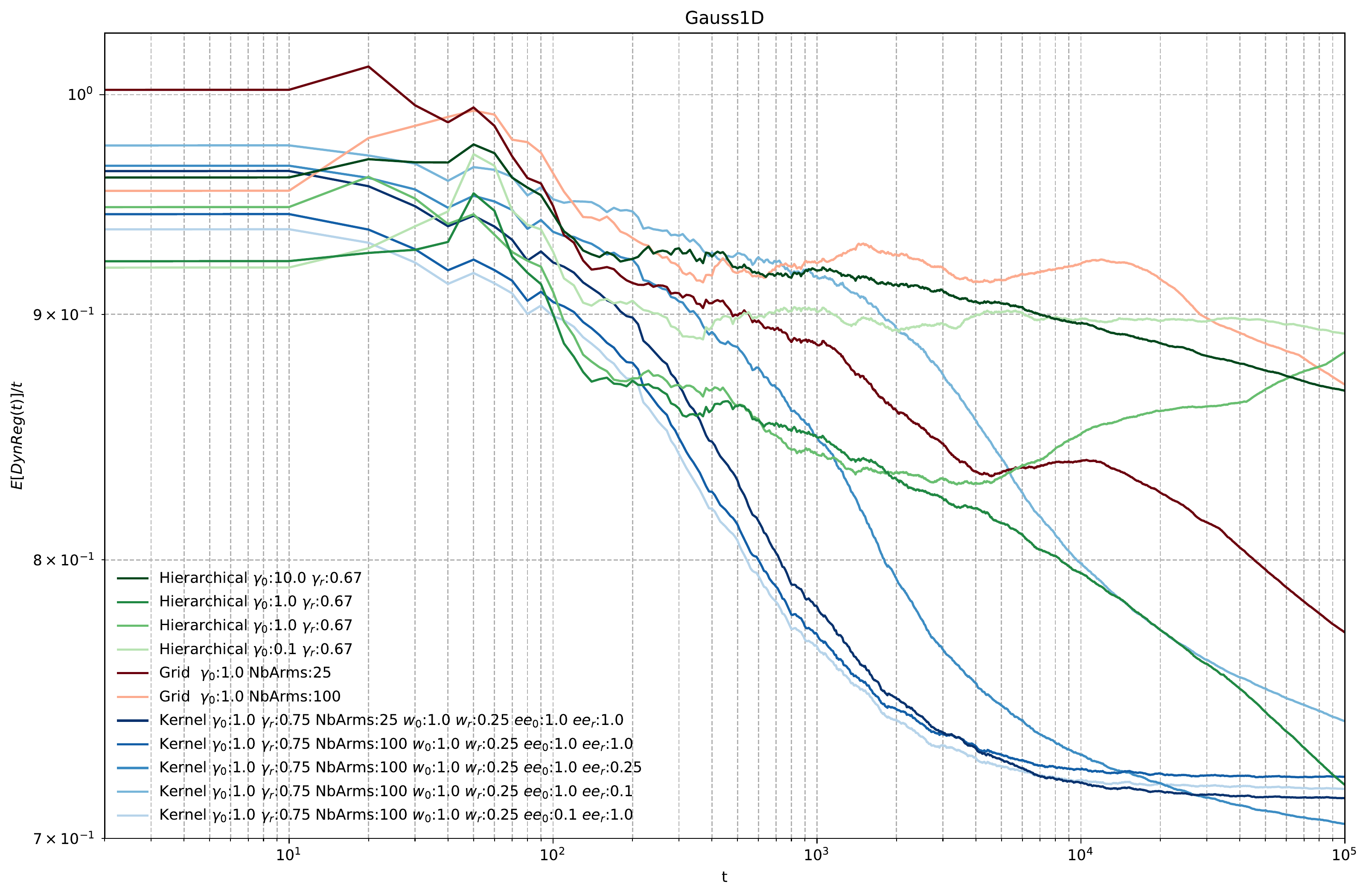}
    \caption{\legendDyn \texttt{Gauss1D} adversary.}
\label{fig:gauss1D_dyn}
\end{figure}

\begin{figure}
    \centering
    \includegraphics[width=0.95\textwidth]{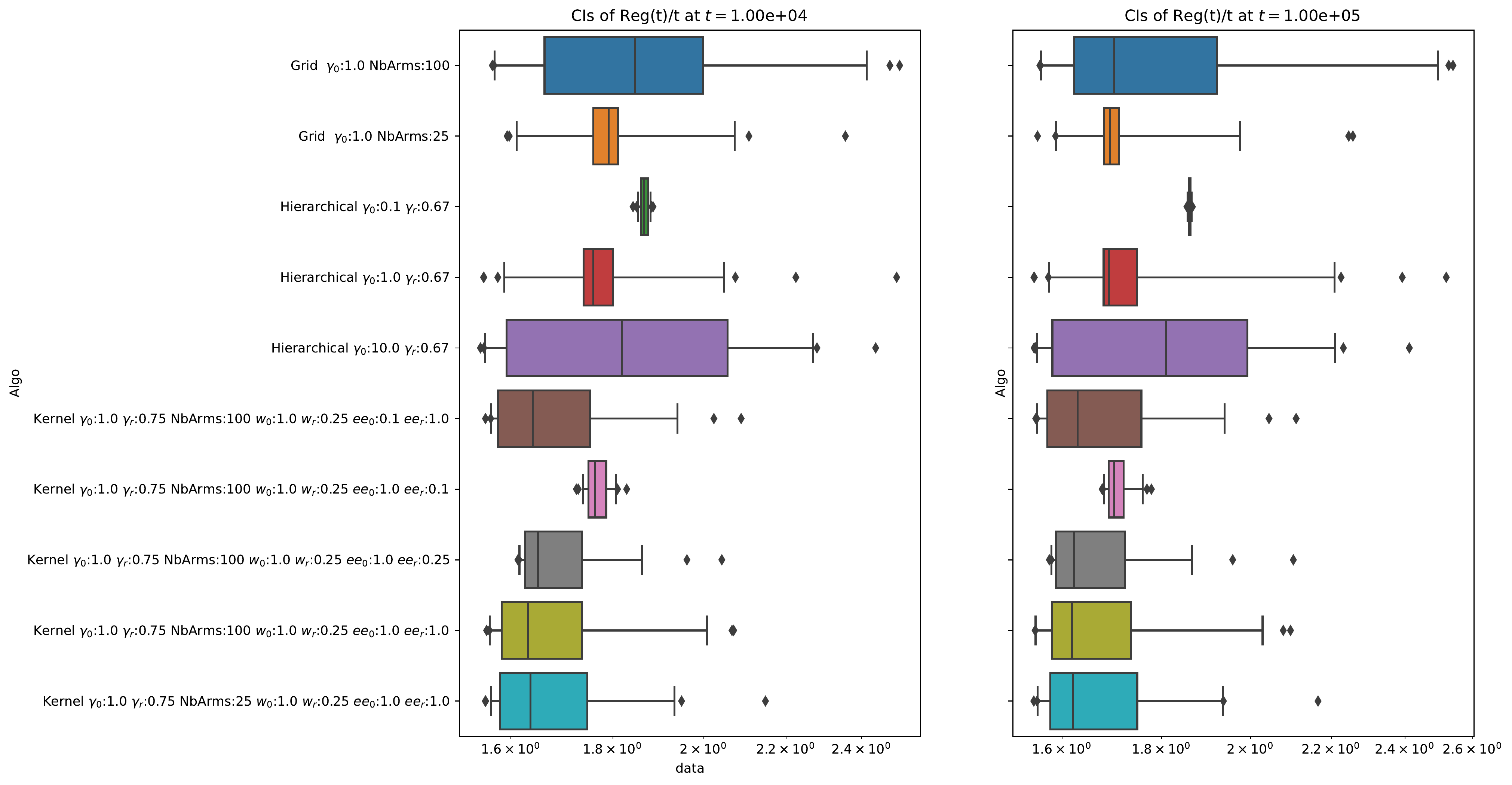}
    \caption{\legendSlide \texttt{Gauss1D} adversary.}
\label{fig:gauss1D_slice}
\end{figure}

\begin{figure}
    \centering
    \includegraphics[width=0.8\textwidth]{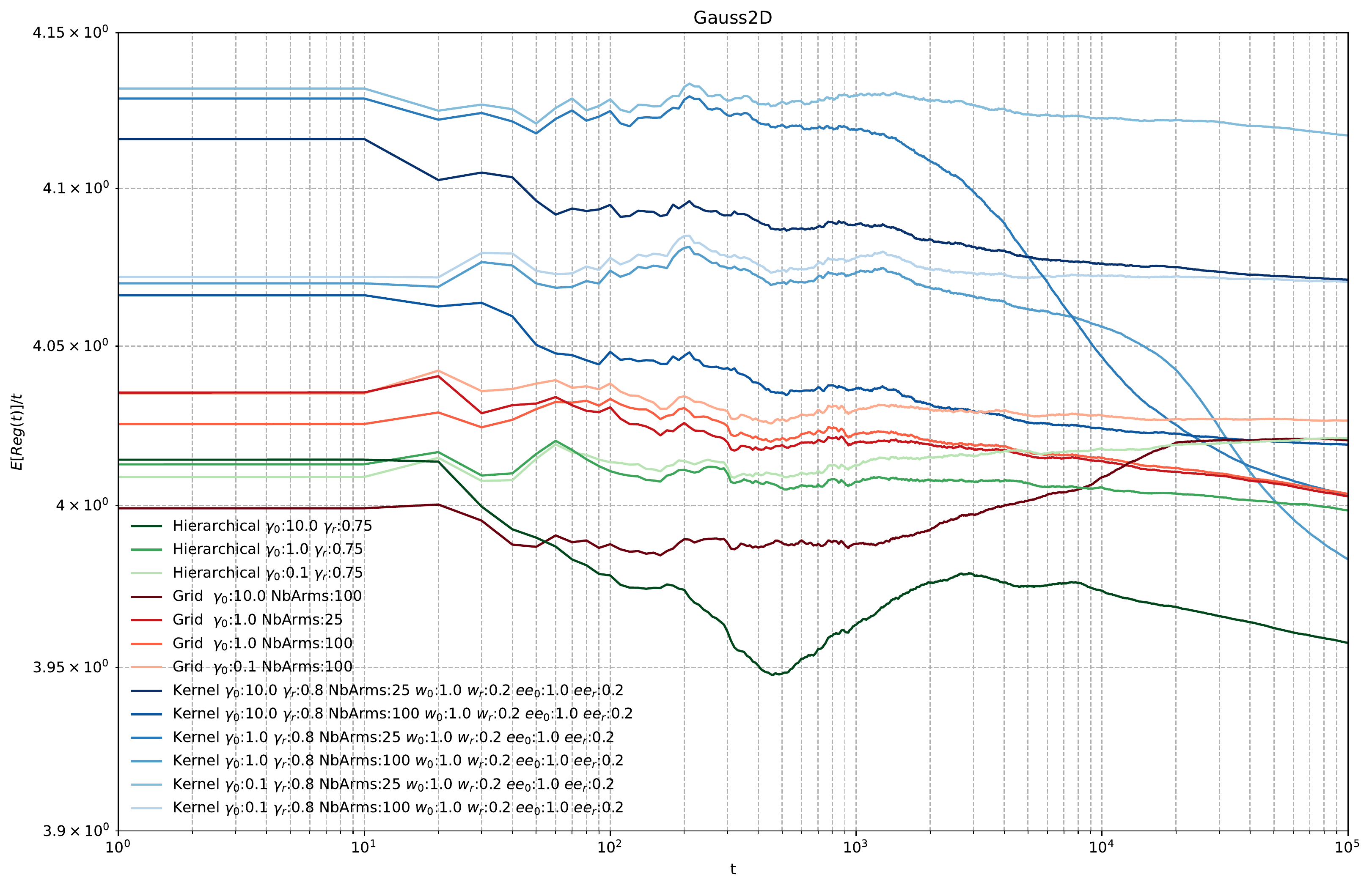}
    \caption{\legendStatic \texttt{Gauss2D} adversary.}
\label{fig:gauss2D}
\end{figure}

\begin{figure}
    \centering
    \includegraphics[width=0.8\textwidth]{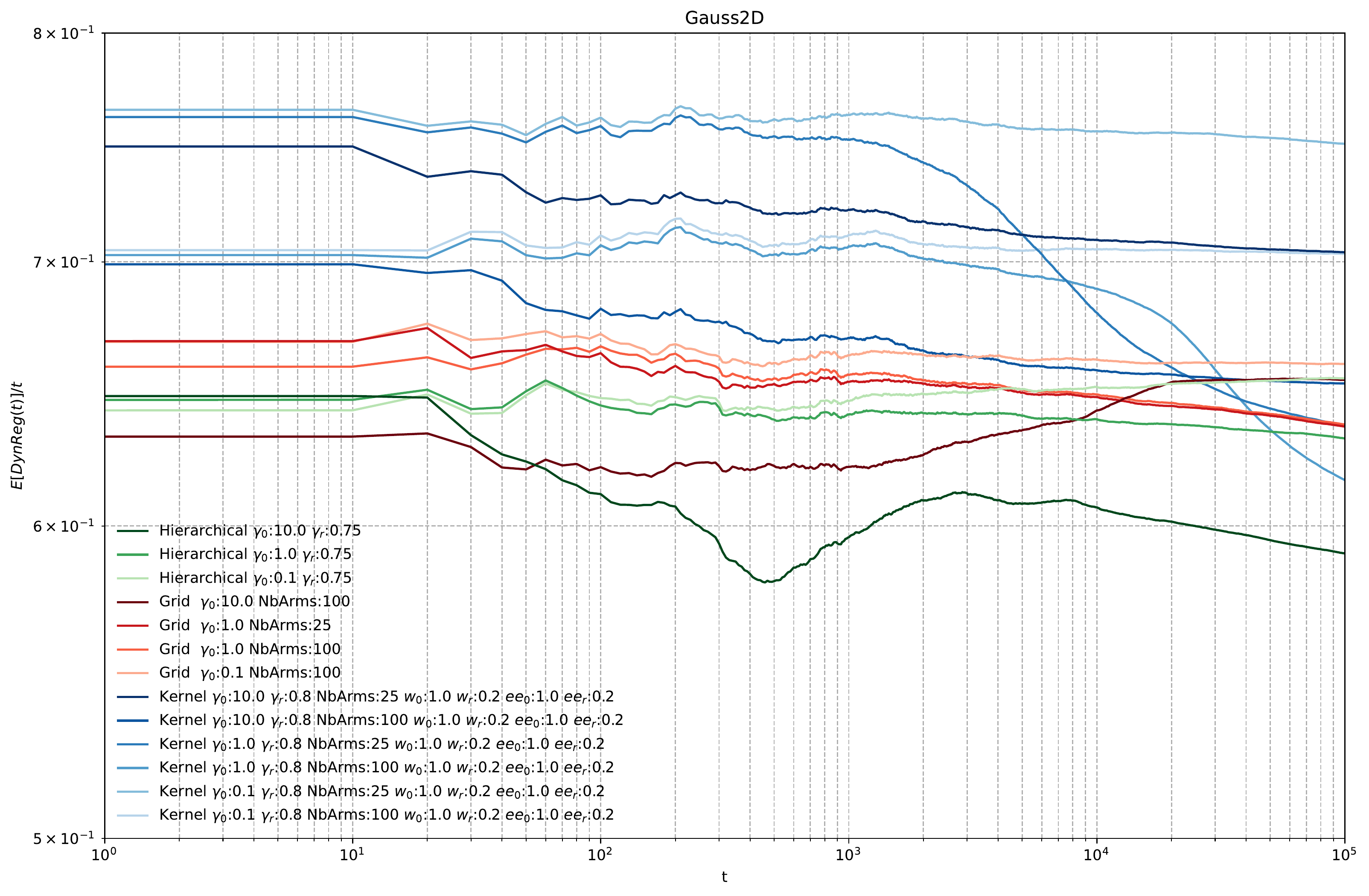}
    \caption{\legendDyn \texttt{Gauss2D} adversary.}
\label{fig:gauss2D_dyn}
\end{figure}

\begin{figure}
    \centering
    \includegraphics[width=0.95\textwidth]{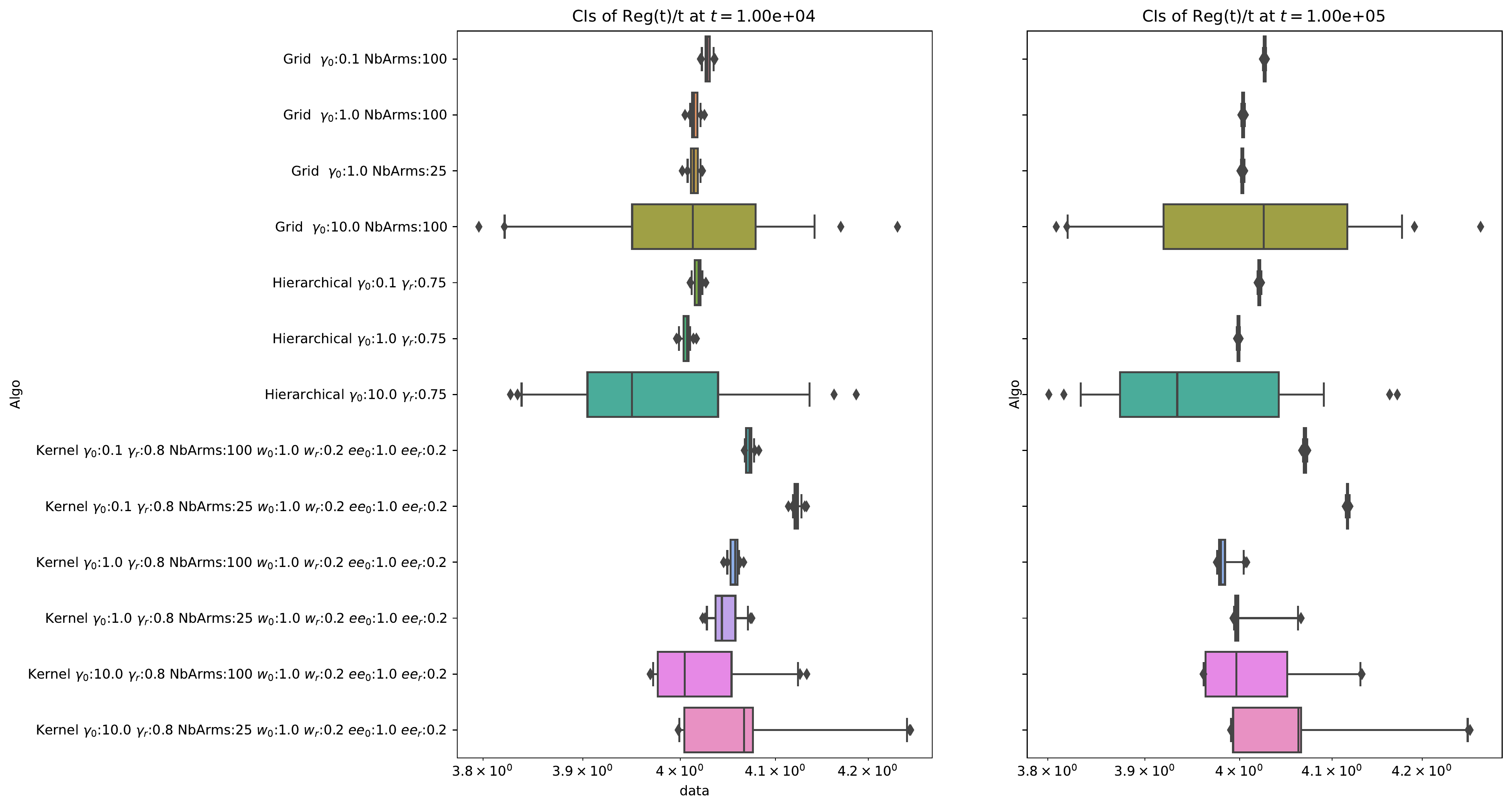}
    \caption{\legendSlide \texttt{Gauss2D} adversary.}
\label{fig:gauss2D_slice}
\end{figure}

\bibliographystyle{ormsv080}
\bibliography{bibtex/IEEEabrv,bibtex/Bibliography-PM}

\begin{thebibliography}{52}
\expandafter\ifx\csname natexlab\endcsname\relax\def\natexlab#1{#1}\fi
\expandafter\ifx\csname url\endcsname\relax
  \def\url#1{{\tt #1}}\fi
\expandafter\ifx\csname urlprefix\endcsname\relax\def\urlprefix{URL }\fi
\expandafter\ifx\csname urlstyle\endcsname\relax
  \expandafter\ifx\csname doi\endcsname\relax
  \def\doi#1{doi:\discretionary{}{}{}#1}\fi \else
  \expandafter\ifx\csname doi\endcsname\relax
  \def\doi{doi:\discretionary{}{}{}\begingroup \urlstyle{rm}\Url}\fi \fi

\bibitem[{Agarwal et~al.(2019)Agarwal, Gonen, and Hazan}]{AGH19}
Agarwal, Naman, Alon Gonen, Elad Hazan. 2019.
\newblock Learning in non-convex games with an optimization oracle.
\newblock {\it COLT '19: Proceedings of the 32nd Annual Conference on Learning
  Theory\/}.

\bibitem[{Agrawal(1995)}]{Agr95}
Agrawal, Rajeev. 1995.
\newblock The continuum-armed bandit problem.
\newblock {\it SIAM Journal on Control and Optimization\/} {\bf 33}(6)
  1926--1951.

\bibitem[{Alvarez et~al.(2004)Alvarez, Bolte, and Brahic}]{ABB04}
Alvarez, Felipe, J{\'e}r{\^o}me Bolte, Olivier Brahic. 2004.
\newblock Hessian {Riemannian} gradient flows in convex programming.
\newblock {\it SIAM Journal on Control and Optimization\/} {\bf 43}(2)
  477--501.

\bibitem[{Antonakopoulos et~al.(2019)Antonakopoulos, Belmega, and
  Mertikopoulos}]{ABM19}
Antonakopoulos, Kimon, E.~Veronica Belmega, Panayotis Mertikopoulos. 2019.
\newblock An adaptive mirror-prox algorithm for variational inequalities with
  singular operators.
\newblock {\it NeurIPS '19: Proceedings of the 33rd International Conference on
  Neural Information Processing Systems\/}.

\bibitem[{Antonakopoulos et~al.(2020)Antonakopoulos, Belmega, and
  Mertikopoulos}]{ABM20}
Antonakopoulos, Kimon, E.~Veronica Belmega, Panayotis Mertikopoulos. 2020.
\newblock Online and stochastic optimization beyond {Lipschitz} continuity: {A
  Riemannian} approach.
\newblock {\it ICLR '20: Proceedings of the 2020 International Conference on
  Learning Representations\/}.

\bibitem[{Antonakopoulos et~al.(2021)Antonakopoulos, Belmega, and
  Mertikopoulos}]{ABM21}
Antonakopoulos, Kimon, E.~Veronica Belmega, Panayotis Mertikopoulos. 2021.
\newblock Adaptive extra-gradient methods for min-max optimization and games.
\newblock {\it ICLR '21: Proceedings of the 2021 International Conference on
  Learning Representations\/}.

\bibitem[{Arora et~al.(2012)Arora, Hazan, and Kale}]{AHK12}
Arora, Sanjeev, Elad Hazan, Satyen Kale. 2012.
\newblock The multiplicative weights update method: A meta-algorithm and
  applications.
\newblock {\it Theory of Computing\/} {\bf 8}(1) 121--164.

\bibitem[{Auer et~al.(2002{\natexlab{a}})Auer, Cesa-Bianchi, and
  Fischer}]{ACBF02}
Auer, Peter, Nicol{\`o} Cesa-Bianchi, Paul Fischer. 2002{\natexlab{a}}.
\newblock Finite-time analysis of the multiarmed bandit problem.
\newblock {\it Machine Learning\/} {\bf 47} 235--256.

\bibitem[{Auer et~al.(1995)Auer, Cesa-Bianchi, Freund, and Schapire}]{ACBFS95}
Auer, Peter, Nicol{\`o} Cesa-Bianchi, Yoav Freund, Robert~E. Schapire. 1995.
\newblock Gambling in a rigged casino: The adversarial multi-armed bandit
  problem.
\newblock {\it Proceedings of the 36th Annual Symposium on Foundations of
  Computer Science\/}.

\bibitem[{Auer et~al.(2002{\natexlab{b}})Auer, Cesa-Bianchi, Freund, and
  Schapire}]{ACBFS02}
Auer, Peter, Nicol{\`o} Cesa-Bianchi, Yoav Freund, Robert~E. Schapire.
  2002{\natexlab{b}}.
\newblock The nonstochastic multiarmed bandit problem.
\newblock {\it SIAM Journal on Computing\/} {\bf 32}(1) 48--77.

\bibitem[{Auer et~al.(2019)Auer, Chen, Gajane, Lee, Luo, Ortner, and
  Wei}]{ACGL+19}
Auer, Peter, Yifang Chen, Pratik Gajane, Chung-Wei Lee, Haipeng Luo, Ronald
  Ortner, Chen-Yu Wei. 2019.
\newblock Achieving optimal dynamic regret for non-stationary bandits without
  prior information.
\newblock {\it COLT '19: Proceedings of the 32nd Annual Conference on Learning
  Theory\/}.

\bibitem[{Bauschke et~al.(2017)Bauschke, Bolte, and Teboulle}]{BBT17}
Bauschke, Heinz~H., J{\'e}r{\^o}me Bolte, Marc Teboulle. 2017.
\newblock A descent lemma beyond {Lipschitz} gradient continuity: {First}-order
  methods revisited and applications.
\newblock {\it Mathematics of Operations Research\/} {\bf 42}(2) 330--348.

\bibitem[{Bauschke and Combettes(2017)}]{BC17}
Bauschke, Heinz~H., Patrick~L. Combettes. 2017.
\newblock {\it Convex Analysis and Monotone Operator Theory in Hilbert
  Spaces\/}.
\newblock 2nd ed. Springer, New York, NY, USA.

\bibitem[{Beck and Teboulle(2003)}]{BecTeb03}
Beck, Amir, Marc Teboulle. 2003.
\newblock Mirror descent and nonlinear projected subgradient methods for convex
  optimization.
\newblock {\it Operations Research Letters\/} {\bf 31}(3) 167--175.

\bibitem[{Berge(1997)}]{Ber97}
Berge, Claude. 1997.
\newblock {\it Topological Spaces\/}.
\newblock Dover, New York.

\bibitem[{Besbes et~al.(2015)Besbes, Gur, and Zeevi}]{BGZ15}
Besbes, Omar, Yonatan Gur, Assaf Zeevi. 2015.
\newblock Non-stationary stochastic optimization.
\newblock {\it Operations Research\/} {\bf 63}(5) 1227--1244.

\bibitem[{Bomze et~al.(2019)Bomze, Mertikopoulos, Schachinger, and
  Staudigl}]{BMSS19}
Bomze, Immanuel~M., Panayotis Mertikopoulos, Werner Schachinger, Mathias
  Staudigl. 2019.
\newblock Hessian barrier algorithms for linearly constrained optimization
  problems.
\newblock {\it SIAM Journal on Optimization\/} {\bf 29}(3) 2100--2127.

\bibitem[{Bubeck and Cesa-Bianchi(2012)}]{BCB12}
Bubeck, S{\'e}bastien, Nicol{\`o} Cesa-Bianchi. 2012.
\newblock Regret analysis of stochastic and nonstochastic multi-armed bandit
  problems.
\newblock {\it Foundations and Trends in Machine Learning\/} {\bf 5}(1) 1--122.

\bibitem[{Bubeck and Eldan(2016)}]{BE16}
Bubeck, S{\'e}bastien, Ronen Eldan. 2016.
\newblock Multi-scale exploration of convex functions and bandit convex
  optimization.
\newblock {\it COLT '16: Proceedings of the 29th Annual Conference on Learning
  Theory\/}.

\bibitem[{Bubeck et~al.(2017)Bubeck, Lee, and Eldan}]{BLE17}
Bubeck, S{\'e}bastien, Yin~Tat Lee, Ronen Eldan. 2017.
\newblock Kernel-based methods for bandit convex optimization.
\newblock {\it STOC '17: Proceedings of the 49th annual ACM SIGACT symposium on
  the Theory of Computing\/}.

\bibitem[{Bubeck et~al.(2011)Bubeck, Munos, Stoltz, and
  Szepesv{\'a}ri}]{BMSS11}
Bubeck, S{\'e}bastien, R{\'e}mi Munos, Gilles Stoltz, Csaba Szepesv{\'a}ri.
  2011.
\newblock $\mathcal{X}$-armed bandits.
\newblock {\it Journal of Machine Learning Research\/} {\bf 12} 1655--1695.

\bibitem[{Chen and Teboulle(1993)}]{CT93}
Chen, Gong, Marc Teboulle. 1993.
\newblock Convergence analysis of a proximal-like minimization algorithm using
  {Bregman} functions.
\newblock {\it SIAM Journal on Optimization\/} {\bf 3}(3) 538--543.

\bibitem[{Conn et~al.(2009)Conn, Scheinberg, and Vicente}]{CSV09}
Conn, Andrew~R., Katya Scheinberg, Luis~N. Vicente. 2009.
\newblock {\it Introduction to Derivative-Free Optimization\/}.
\newblock Society for Industrial and Applied Mathematics.

\bibitem[{Flaxman et~al.(2005)Flaxman, Kalai, and McMahan}]{FKM05}
Flaxman, Abraham~D., Adam~Tauman Kalai, H.~Brendan McMahan. 2005.
\newblock Online convex optimization in the bandit setting: gradient descent
  without a gradient.
\newblock {\it SODA '05: Proceedings of the 16th annual ACM-SIAM Symposium on
  Discrete Algorithms\/}. 385--394.

\bibitem[{Folland(1999)}]{Fol99}
Folland, Gerald~B. 1999.
\newblock {\it Real Analysis\/}.
\newblock 2nd ed. Wiley-Interscience.

\bibitem[{Hazan et~al.(2017)Hazan, Singh, and Zhang}]{HSZ17}
Hazan, Elad, Karan Singh, Cyril Zhang. 2017.
\newblock Efficient regret minimization in non-convex games.
\newblock {\it ICML '17: Proceedings of the 34th International Conference on
  Machine Learning\/}.

\bibitem[{H{\'e}liou et~al.(2020)H{\'e}liou, Martin, Mertikopoulos, and
  Rahier}]{HMMR20}
H{\'e}liou, Am{\'e}lie, Matthieu Martin, Panayotis Mertikopoulos, Thibaud
  Rahier. 2020.
\newblock Online non-convex optimization with imperfect feedback.
\newblock {\it NeurIPS '20: Proceedings of the 34th International Conference on
  Neural Information Processing Systems\/}.

\bibitem[{Kalai and Vempala(2005)}]{KV05}
Kalai, Adam~Tauman, Santosh Vempala. 2005.
\newblock Efficient algorithms for online decision problems.
\newblock {\it Journal of Computer and System Sciences\/} {\bf 71}(3) 291--307.

\bibitem[{Karmarkar(1990)}]{Kar90}
Karmarkar, Narendra. 1990.
\newblock Riemannian geometry underlying interior point methods for linear
  programming.
\newblock {\it Mathematical Developments Arising from Linear Programming\/}.
  No. 114 in Contemporary Mathematics, American Mathematical Society.

\bibitem[{Kleinberg(2004)}]{Kle04}
Kleinberg, Robert~David. 2004.
\newblock Nearly tight bounds for the continuum-armed bandit problem.
\newblock {\it NIPS' 04: Proceedings of the 18th Annual Conference on Neural
  Information Processing Systems\/}.

\bibitem[{Kleinberg et~al.(2008)Kleinberg, Slivkins, and Upfal}]{KSU08}
Kleinberg, Robert~David, Aleksandrs Slivkins, Eli Upfal. 2008.
\newblock Multi-armed bandits in metric spaces.
\newblock {\it STOC '08: Proceedings of the 40th annual ACM symposium on the
  Theory of Computing\/}.

\bibitem[{Kleinberg et~al.(2019)Kleinberg, Slivkins, and Upfal}]{KSU19}
Kleinberg, Robert~David, Aleksandrs Slivkins, Eli Upfal. 2019.
\newblock Bandits and experts in metric spaces.
\newblock {\it Journal of the ACM\/} {\bf 66}(4).

\bibitem[{Koc{\'a}k et~al.(2014)Koc{\'a}k, Neu, Valko, and Munos}]{KNVM14}
Koc{\'a}k, Tom{\'a}{\v s}, Gergely Neu, Michal Valko, R{\'e}mi Munos. 2014.
\newblock Efficient learning by implicit exploration in bandit problems with
  side observations.
\newblock {\it NIPS '14: Proceedings of the 28th International Conference on
  Neural Information Processing Systems\/}.

\bibitem[{Krichene et~al.(2015)Krichene, Balandat, Tomlin, and Bayen}]{KBTB15}
Krichene, Walid, Maximilian Balandat, Claire Tomlin, Alexandre Bayen. 2015.
\newblock The {Hedge} algorithm on a continuum.
\newblock {\it ICML '15: Proceedings of the 32nd International Conference on
  Machine Learning\/}.

\bibitem[{Lattimore and Szepesv{\'a}ri(2020)}]{LS20}
Lattimore, Tor, Csaba Szepesv{\'a}ri. 2020.
\newblock {\it Bandit Algorithms\/}.
\newblock Cambridge University Press, Cambridge, UK.

\bibitem[{Mertikopoulos and Sandholm(2016)}]{MS16}
Mertikopoulos, Panayotis, William~H. Sandholm. 2016.
\newblock Learning in games via reinforcement and regularization.
\newblock {\it Mathematics of Operations Research\/} {\bf 41}(4) 1297--1324.

\bibitem[{Mertikopoulos and Sandholm(2018)}]{MerSan18}
Mertikopoulos, Panayotis, William~H. Sandholm. 2018.
\newblock Riemannian game dynamics.
\newblock {\it Journal of Economic Theory\/} {\bf 177} 315--364.

\bibitem[{Mertikopoulos and Staudigl(2018)}]{MerSta18}
Mertikopoulos, Panayotis, Mathias Staudigl. 2018.
\newblock On the convergence of gradient-like flows with noisy gradient input.
\newblock {\it SIAM Journal on Optimization\/} {\bf 28}(1) 163--197.

\bibitem[{Nemirovski and Yudin(1983)}]{NY83}
Nemirovski, Arkadi~Semen, David~Berkovich Yudin. 1983.
\newblock {\it Problem Complexity and Method Efficiency in Optimization\/}.
\newblock Wiley, New York, NY.

\bibitem[{Nesterov(2009)}]{Nes09}
Nesterov, Yurii. 2009.
\newblock Primal-dual subgradient methods for convex problems.
\newblock {\it Mathematical Programming\/} {\bf 120}(1) 221--259.

\bibitem[{Podimata and Slivkins(2021)}]{PS21}
Podimata, Chara, Aleksandrs Slivkins. 2021.
\newblock Adaptive discretization for adversarial {Lipschitz} bandits.
\newblock \url{https://arxiv.org/abs/2006.12367}.

\bibitem[{Pogodin and Lattimore(2019)}]{PL19}
Pogodin, Roman, Tor Lattimore. 2019.
\newblock Adaptivity, variance and separation for adversarial bandits.
\newblock \url{https://arxiv.org/pdf/1903.07890.pdf}.

\bibitem[{Rockafellar(1970)}]{Roc70}
Rockafellar, Ralph~Tyrrell. 1970.
\newblock {\it Convex Analysis\/}.
\newblock Princeton University Press, Princeton, NJ.

\bibitem[{Rosenbrock(1960)}]{Ros60}
Rosenbrock, Howard~Harry. 1960.
\newblock An automatic method for finding the greatest or least value of a
  function.
\newblock {\it Computer Journal\/} {\bf 3}(3) 175--184.

\bibitem[{Shalev-Shwartz(2011)}]{SS11}
Shalev-Shwartz, Shai. 2011.
\newblock Online learning and online convex optimization.
\newblock {\it Foundations and Trends in Machine Learning\/} {\bf 4}(2)
  107--194.

\bibitem[{Shalev-Shwartz and Singer(2006)}]{SSS06}
Shalev-Shwartz, Shai, Yoram Singer. 2006.
\newblock Convex repeated games and {Fenchel} duality.
\newblock {\it NIPS' 06: Proceedings of the 19th Annual Conference on Neural
  Information Processing Systems\/}. MIT Press, 1265--1272.

\bibitem[{Slivkins(2019)}]{Sli19}
Slivkins, Aleksandrs. 2019.
\newblock Introduction to multi-armed bandits.
\newblock {\it Foundations and Trends in Machine Learning\/} {\bf 12}(1-2)
  1--286.

\bibitem[{Spall(1992)}]{Spa92}
Spall, James~C. 1992.
\newblock Multivariate stochastic approximation using a simultaneous
  perturbation gradient approximation.
\newblock {\it {IEEE} Trans. Autom. Control\/} {\bf 37}(3) 332--341.

\bibitem[{Suggala and Netrapalli(2020)}]{SN20}
Suggala, Arun~Sai, Praneeth Netrapalli. 2020.
\newblock Online non-convex learning: Following the perturbed leader is
  optimal.
\newblock {\it ALT '20: Proceedings of the 31st International Conference on
  Algorithmic Learning Theory\/}.

\bibitem[{Vanderbei et~al.(1986)Vanderbei, Meketon, and Freedman}]{VMF86}
Vanderbei, Robert~J., Marc~S. Meketon, Barry~A. Freedman. 1986.
\newblock A modification of {Karmarkar}'s linear programming algorithm.
\newblock {\it Algorithmica\/} {\bf 1}(1) 395--407.

\bibitem[{Wei and Luo(2018)}]{WL18}
Wei, Chen-Yu, Haipeng Luo. 2018.
\newblock More adaptive algorithms for adversarial bandits.
\newblock {\it COLT '18: Proceedings of the 31st Annual Conference on Learning
  Theory\/}.

\bibitem[{Xiao(2010)}]{Xia10}
Xiao, Lin. 2010.
\newblock Dual averaging methods for regularized stochastic learning and online
  optimization.
\newblock {\it Journal of Machine Learning Research\/} {\bf 11} 2543--2596.

\end{thebibliography}

\end{document}